\def\1{\bm{1}}
\DeclareMathAlphabet{\mathsfit}{\encodingdefault}{\sfdefault}{m}{sl}
\SetMathAlphabet{\mathsfit}{bold}{\encodingdefault}{\sfdefault}{bx}{n}
\newcommand{\R}{\mathbb{R}}
\DeclareMathOperator{\Tr}{Tr}
\newtheorem{proposition}{Proposition}
\newtheorem{theorem}{Theorem}
\newtheorem{lemma}{Lemma}
\newcommand{\dataset}[1]{\texttt{#1}\xspace}
\newcommand{\model}[1]{\textsc{#1}\xspace}
\newcommand{\method}[1]{\textsc{#1}\xspace}
\newcommand{\LiveIdeaBench}{\dataset{LiveIdeaBench}}
\newcommand{\STAR}[1]{\method{STARS\_$#1$}}
\newcommand{\RANDOM}[1]{\method{RAND\_$#1$}}
\newcommand{\HMEAN}[1]{\method{MEAN\_$#1$}}
\title{Exploring Diverse Generation Paths via Inference-time Stiefel Activation Steering}
\author{Dongxuan Zhu$^{*}$\\
The Chinese University of Hong Kong\\
\texttt{dxzhu@se.cuhk.edu.hk} \\
\And
Ly Tran Ho Khanh$^{*}$ \\
The Chinese University of Hong Kong\\
\texttt{hokhanhlytran@cuhk.edu.hk} \\
\And
Andy Yat-Ming Cheung$^{*}$ \\
Hong Kong Polytechnic University \\
\texttt{andy.cheung@polyu.edu.hk} \\
\And
Man-Chung Yue \\
Hong Kong Polytechnic University \\
\texttt{manchung.yue@polyu.edu.hk}\\
\And
Viet Anh Nguyen \\
The Chinese University of Hong Kong\\
\texttt{nguyen@se.cuhk.edu.hk} \\
$^{*}$These authors contributed equally.
}
\newcommand{\mc}{\mathcal}
\newcommand{\opt}{^\star}
\newcommand{\grad}{\mathrm{grad}~}
\newcommand{\Proj}{\mathrm{Proj}}
\newcommand{\Let}{\triangleq}
\newcommand{\St}{\mathrm{St}}
\newcommand{\Retr}{\mathrm{R}}
\newcommand{\diag}{\mathrm{diag}}
\newcommand{\rebuttal}[1]{#1}
\begin{document}
\maketitle

\begin{abstract}
Language models often default to a narrow set of high-probability outputs, leaving their generation paths homogeneous and prone to mode collapse. Sampling-based strategies inject randomness but still struggle to guarantee diversity across multiple concurrent generation runs. We address this limitation by introducing STARS (\textbf{St}iefel-based \textbf{A}ctivation Steering for Diverse \textbf{R}ea\textbf{S}oning), a training-free, inference-time intervention method that transforms activation steering into an exploration engine. At each token, STARS collects the hidden activations of concurrent generation runs and optimizes multiple additive steering directions jointly on the Stiefel manifold. STARS maximizes the geometric volume of the steered activations, while the Stiefel manifold induces orthogonality of the steering interventions. This formulation explicitly promotes divergent activation vectors of concurrent generation runs, and implicitly promotes divergent generation trajectories. This manifold optimization formulation can be solved using a Riemannian gradient descent algorithm with convergence guarantees, but this algorithm is too time-consuming for real-time inference. To guarantee low latency, we further design a lightweight one-step update with an aggressive, closed-form stepsize. For test case generation and scientific discovery benchmarks, STARS consistently outperforms standard sampling methods, achieving greater diversity without sacrificing qualitative performance.
\end{abstract}

\section{Introduction}
Language models (LMs) have achieved impressive performance across a broad spectrum of tasks~\citep{ref:achiam2023gpt,ref:nam2024using,ref:shao2024deepseekmath,ref:yang2025qwen3}. A powerful emerging paradigm reframes complex problems as search: an LM generates a diverse pool of candidate solutions, and a selector then identifies the best one. This best-of-$N$ strategy has driven significant gains in reasoning, coding, and planning~\citep{ref:lightman2023let,ref:ni2023lever}. However, its efficacy is fundamentally bottlenecked by the diversity of the candidate pool. When multiple runs, despite stochastic sampling, converge on the same high-probability line of thought, they follow nearly identical latent trajectories. The selector is left with paraphrases, unable to uncover a better solution, and performance saturates regardless of how much the sampling budget is increased.

The consequences of this diversity collapse are profound and far-reaching. For reasoning tasks, maintaining distinct trajectories is crucial for exploring the solution space and avoiding premature convergence on a suboptimal line of argument. In mathematical reasoning, a diversity collapse occurs when the model becomes trapped in a single, plausible-but-flawed line of argument, unable to backtrack and explore alternative proof strategies. For open-ended applications, such as scientific discovery or creative writing, diverse outputs are not just a means to an end, but are intrinsically valuable, offering genuinely different perspectives rather than mere paraphrases of a single idea. For safety and alignment, a lack of diversity prevents us from discovering varied failure modes, adversarial attacks, or unintended model behaviors during red-teaming. Therefore, increasing generation diversity is not an incremental improvement but a fundamental step toward unlocking the full reasoning and creative potential of LMs.

Existing methods for inducing diversity primarily perturb the token-level sampling process. Techniques such as temperature sampling~\citep{ref:ackley1985learning}, nucleus sampling~\citep{ref:holtzman2020curious}, beam search~\citep{ref:xie2023selfevaluation}, and self-speculative decoding~\citep{ref:naik2024diversity} adjust token probabilities during decoding. However, these decoders operate locally and myopically; they are uncoordinated across parallel runs and lack a global objective for diversity. This often leads to \textit{diversity collapse}, a failure mode where superficially distinct sequences still follow nearly identical underlying chains of thought~\citep{ref:yun2025price,ref:dang2025weight}. Other approaches promote diversity during training by modifying the learning objective or data, often with reinforcement learning~\citep[e.g.,][]{ref:slocum2025diverse,ref:chen2025enhancing,ref:lanchantin2025diverse,ref:chung2025modifying}. While promising, these methods demand access to the full training pipeline, impose substantial computational costs, and their benefits can be fragile across domains\rebuttal{~\citep{ref:ling2025diversity}}. Consequently, enhancing generation diversity at inference time, without retraining, remains a critical and practical goal.

Recent advances in activation steering have opened new avenues for controlling LMs' behavior during inference~\citep{ref:turner2024steering, ref:vanderweij2024extending}. Activation vectors represent the model’s working memory and thought space, where the model assembles features, subgoals, and circuits over time. If multiple runs occupy nearly the same region in this space, surface-level stochasticity has little impact\rebuttal{~\citep{ref:wang2025diversifiedsamplingimprovesscaling,ref:troshin2025asking}}. This suggests a more structural intervention: actively diversify the hidden trajectories themselves. Prior work on activation steering has shown that carefully crafted directions can control attributes or concepts without retraining~\citep{ref:subramani2022extracting,ref:turner2024steering,ref:wang2025semanticsadaptive}. 
\rebuttal{Recent work has investigated adaptive activation-steering techniques, including methods that adjust the steering-strength parameter~\citep{ref:rodriguez2025controlling,ref:rodriguez2025lineas} and approaches that optimize the position of the steering token~\citep{ref:hedstrom2025to}.}
However, existing steering techniques are fundamentally designed for convergence, not divergence. They push a single generation run toward a fixed, predefined direction. This makes them ill-suited for exploration.

A major technical obstacle to inference-time diversification is reconciling this principled objective with practical constraints. An effective inference-time intervention must be lightweight enough to run at every token without prohibitive latency, yet powerful enough to produce meaningful diversity. Furthermore, steering must preserve fluency and factuality, as naively aggressive diversification can degrade generation quality, producing meaningless or irrelevant text. This paper introduces a novel approach that repurposes activation steering, originally designed for targeted control, into a dynamic engine for exploration and diversity. Instead of steering a single run toward a fixed concept, we steer multiple concurrent runs away from each other. Our training-free approach intervenes at each token to push the models' internal states into dissimilar regions of the activation space, thereby promoting divergent and structurally different reasoning paths.

\textbf{Contributions.} We summarize our contributions as follows:
\begin{itemize}[leftmargin=5mm]
  \item We propose STARS (\textbf{St}iefel-based \textbf{A}ctivation Steering for Diverse \textbf{R}ea\textbf{S}oning), a framework that steers LM activations at inference time to diversify reasoning trajectories. At each token, STARS extracts the hidden activations from all sequences, then computes the orthogonal steering directions that maximize the total volume spanned by the intervened activations.

  \item In Section~\ref{sec:R-algo}, we cast the volume-maximization objective with orthogonality constraints as a Riemannian optimization problem on the Stiefel manifold. We then propose a Riemannian gradient descent with a theoretical convergence guarantee.
  
  \item In Section~\ref{sec:one-step}, we develop a practical and lightweight one-step update algorithm. By pairing a specific initialization scheme with the activation matrix as a computationally-free search direction, we replace the intractable exact line search with a well-posed quadratic approximation. This enables us to derive an aggressive, closed-form step size, guaranteeing low latency and allowing steering at every token during inference time.
\end{itemize}

\textbf{Notations.} We use $I$ to denote an identity matrix of appropriate size. We use $\nabla \ell(V)$ for the Euclidean gradient of the function $\ell$ while $\grad \ell(V)$ is for the Riemannian gradient. For a matrix $A \in \R^{d\times N} $, we use $\|A \|_{F}\Let \sqrt{\Tr[A^{\top}A]}$ for the Frobenius norm. The spectral norm $\|A\|_{2}$ is defined as the largest singular value of $A$. \rebuttal{We use $\mathbb{R}_{++}\Let \{x\in\mathbb{R}:x>0  \}$ for the set of all strictly positive reals.}

\section{Related Works}

\textbf{Diversity of LM Generation.} 
Recent studies underscore the fragility of diversity in LM-generated outputs, with mode collapse increasingly recognized as a recurrent failure mode wherein responses become homogenized over time or across inputs~\citep{ref:shumailov2024ai}. Empirical evidence indicates that LMs frequently produce similar responses even when the prompts are systematically varied along demographic dimensions designed to generate distinct perspectives~\citep{ref:park2024diminished}. In response, test-time scaling methods have been proposed to enhance output diversity, thereby improving downstream performance. A prevalent approach entails sampling multiple independent candidate solutions often via Chain-of-Thought prompting and its variants and subsequently applying selection mechanisms to identify the highest quality response~\citep{ref:wei2022chain,ref:zhang2022automatic,ref:wang2024chain}. These candidates are filtered or ranked using reward models or heuristic criteria policies such as Best-of-$N$ sampling or beam search, to extract superior outputs from a diverse candidate set~\citep{ref:vijayakumar2016diverse,ref:lightman2023let}. Complementary prompting-based techniques explicitly incentivize variation in reasoning styles or perspectives, including persona-based prompting~\citep{ref:cheng2023marked} and step-by-step recall prompting~\citep{ref:hayati2023far}. In parallel, \cite{ref:chung2025modifying,ref:ismayilzada2025creative,ref:deshpande2025diverse} investigate post-training interventions aimed at fostering output diversity.

\textbf{Activation Steering} directly intervenes the model's behavior by extracting steering vectors within specific layers of the LM’s transformer architecture~\citep{ref:panickssery2023steering,ref:turner2024steering,ref:wang2025semanticsadaptive,ref:chen2025steering}. Typical steering methods are for target behavior or purpose, where steering vector that represents desired behavior can be computed directly or (more commonly) contrastively from positive and negative examples. For example, to align model behavior with human preferences, \citet{ref:panickssery2023steering} averages the differential residual‐stream activations between matched pairs of positively and negatively labeled exemplars of a target behavior. \cite{ref:stolfo2025improving}, instead, compute the steering vectors as the difference in activations between inputs with and without instruction to guide models to follow constraints even without explicit instructions. Instead of adding the steering vector to the residual stream, \cite{ref:postmus2024steering} projects the activations to ellipsoidal regions by a conceptor matrix, which captures the principal directions and variances of a set of neural activation vectors. In addition, some works employ sparse autoencoders to obtain sparse latent representations and identify the steering direction~\citep{ref:o2024steering,ref:he2025sae,ref:yang2025lf}.

\section{Stiefel Activation Steering Framework for Inference-time Diverse Generation}

In this section, we present STARS, our proposed framework illustrated in Figure~\ref{fig:sample}. The key idea is to promote diverse generation paths from a language model at inference time. Given a query, we generate $N$ output sequences in parallel and extract their final hidden-state vectors from a specified layer. For each path, we compute an additive steering vector and apply it to the corresponding hidden state, resulting in a new set of steered representations that serve as input to the next layer.

\begin{figure}[h] 
  \centering
  \includegraphics[width=0.9\textwidth]{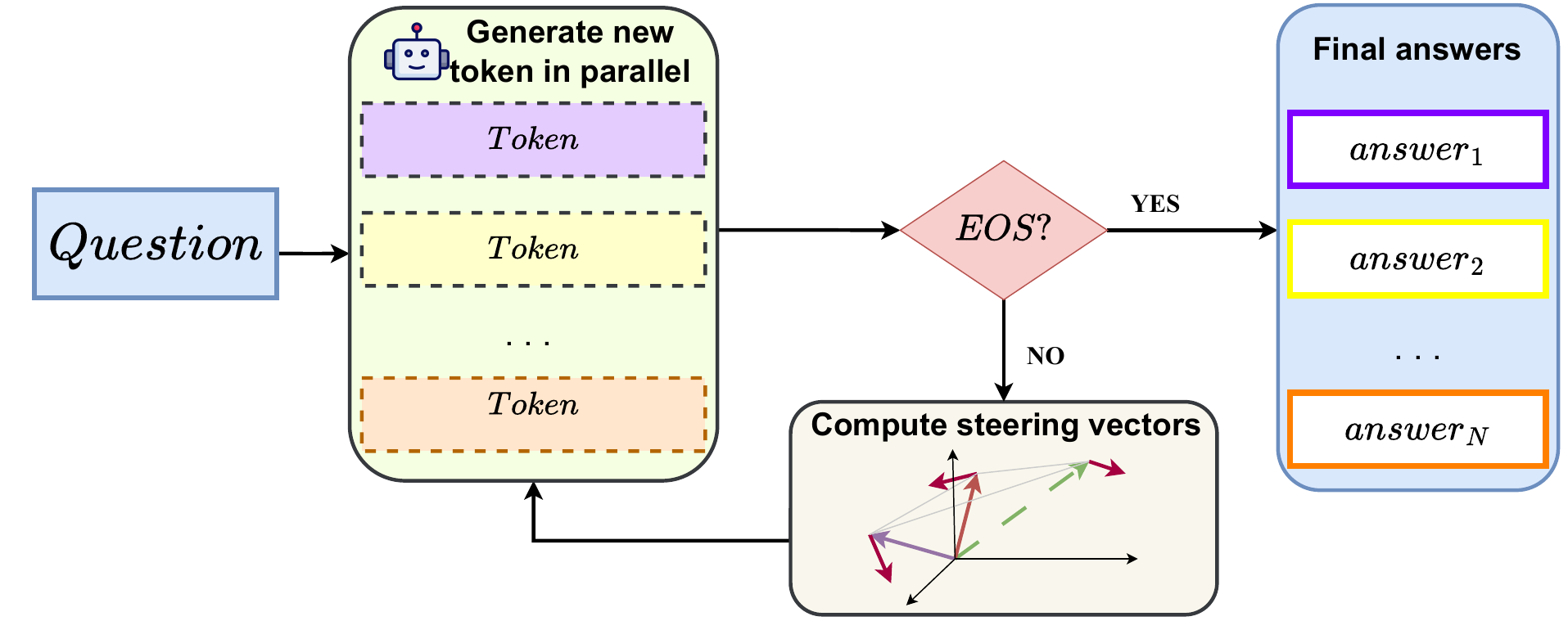} 
  \caption{Schematic overview of our STARS framework. Given an input question, the model generates $N$ candidate sequences in parallel. At each decoding step, hidden states from a designated layer are extracted and used to recompute the steering vectors, which then guide the generation of the next token. This iterative process continues until the end-of-sequence (EOS) token is produced. \rebuttal{Paths that emit EOS are dropped from the active set.} Once the terminal condition is satisfied, the procedure returns $N$ completed responses.}
  \label{fig:sample}
\end{figure}

\subsection{Attention-Based Steering Intervention}

Here, we rigorously introduce where the activation steering is applied in transformer-based LMs. Transformer architectures~\citep{ref:vaswani2017attention} have become the foundation of modern LMs, achieving remarkable performance across diverse natural language processing tasks. We consider an LM with $M$ heads per layer, exhibiting the following per-block information flow for the layer $l$:
\begin{equation*}
x^{(l+1)} = \mathrm{FFN}(\mathrm{MHA}(x^{(l)})) 
= \mathrm{FFN}\Big( 
\bigoplus_{j=1}^{M} W_j^{o} \big( \mathrm{Attn}_j(x^{(l)}) \big) 
\Big),
\end{equation*} 
where $x^{(l)}$ is the input of the specific layer $l$, with $W_{j}^{o}$ denoting the output projection matrix and $\mathrm{Attn}_j$ denoting the single-head attention transformation for each head $j=1,\dots,M$. Here, $\mathrm{FFN}$, $\mathrm{MHA}$ denote the feed-forward and multi-head attention, respectively. The direct sum operator $\bigoplus$ concatenates the outputs from all attention heads before applying the feed-forward transformation. The multi-head attention mechanism is a compelling target for activation steering because its individual heads are functionally specialized. As demonstrated by~\cite{ref:clark2019does}, specific heads learn to model interpretable linguistic phenomena, such as distinct syntactic dependencies and coreference relations. Following~\citep{ref:li2023inference,ref:chen2024truth,ref:ma2025dressing}, we add the steering vector at the output of each attention head, giving the following edited information flow:
\begin{equation}\label{eq:steer mechanism}
x^{(l+1)} = \mathrm{FFN}(\mathrm{MHA}^e(x^{(l)})) 
= \mathrm{FFN}\left( 
\bigoplus_{j=1}^{M} W_j^{o} \left( \mathrm{Attn}_j(x^{(l)}) + v^{(l,j)}\right) \right),
\end{equation}
where $v^{(l,j)}$ is the steering vector for head $j$ at layer $l$. For notational clarity, we denote the unmodified head output as $h^{(l,j)}=\mathrm{Attn}_j(x^{(l)})$. Each $h^{(l,j)}\in \mathbb{R}^{d_h}$ lives in the head dimension $d_h$, and concatenating across all $M$ heads yields $h^{(l)}=[h^{(l,1)}; \dots;h^{(l,M)} ]\in \R^{d}$ with $d=d_{h} \times M$. Similarly, the steering vectors are stacked as $ v^{(l)}= [v^{(l,1)}; \dots;v^{(l,M)}]\in \R^{d}$. \rebuttal{}

\subsection{Orthogonal Steering Framework}

At inference, we exploit the mechanism~\eqref{eq:steer mechanism} to diversify decoding trajectories by steering at each token. Concretely, we generate $N$ response trajectories and, at each token position $\tau$, collect their hidden states $h_{\tau,1}^{(l)}, \dots,h_{\tau,N}^{(l)} \in \R^{d}$ from the layer $l$. For each trajectory, we compute a steering vector $v_{\tau,i}^{(l)}$ and form modified hidden states $h_{\tau,1}^{(l)}+v_{\tau,1}^{(l)},\dots,h_{\tau,N}^{(l)}+v_{\tau,N}^{(l)}$, which are then propagated through subsequent layers along their respective paths. We focus on a pre-determined layer $l$ and hence omit the superscript $l$ in what follows; we also omit the token index $\tau$ for notational simplicity.

The central idea of STARS is to maximize the geometric notion of dispersion for the set $\{h_{i}+v_{i}\}_{i=1}^{N}$ in the activation space. A canonical dispersion metric is the volume of the parallelepiped spanned by these vectors. Defining $H=[h_{1},\dots,h_{N}] \in \R^{d\times N}$ and $V=[v_{1},\dots,v_{N}]\in \R^{d\times N}$, the volume of the parallelotope by $\{h_{i}+v_{i}\}_{i=1}^{N}$ in the activation space is equal to the square root of the determinant of $(H+V)^{\top}(H+V)$\citep{ref:pyle1962non}. To encourage diversity between different generations, we require the steering vectors to be orthogonal with each other, i.e., $v_{i}^{\top}v_{j}=0$ when $i\neq j$. \rebuttal{The orthogonal constraint regularizes the steering vectors $v_i$, ensuring each run receives a distinct direction and preventing the trivial solution where all $v_i$ align.} We further use a \rebuttal{user‑set} magnitude \rebuttal{hyper}parameter $\alpha>0$ to control the magnitude of $v_{i}$, that is, $ \|v_{i} \|_{2}^{2}=\alpha $ for all $i$. A too big $\alpha$ may break the meaningful information in $h_{i}$, leading to generation collapse. The optimization problem for computing steer vectors is:
\begin{equation} \label{eq:minlogdet_eq}
  \min_{ V^\top V = \alpha I}~ \ell(V)\Let-\log\det ( (H+V)^\top (H + V) ).
\end{equation}

For problem~\eqref{eq:minlogdet_eq} to be meaningful, there must exist at least one feasible $V$ with a finite objective value. 
Since the domain of $-\log\det$ is the set of non-singular matrices, this requires the existence of a $V$ satisfying $V^\top V = \alpha I$ and $(H + V)^{\top}(H + V)$ is positive definite. The following proposition confirms this, whose proof is constructive and yields a practical initialization for problem~\eqref{eq:minlogdet_eq}.

\begin{proposition}[Existence of full-rank solution]\label{prop:full_rank}
 Let $H \in \R^{d\times N}$ be any matrix with rank $r \le d - N$. Then there exists $V\in \R^{d\times N}$ with $V^\top V = \alpha I$ such that $(H+V)^\top (H+V)$ is positive definite.
  \end{proposition}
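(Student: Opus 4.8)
The plan is to construct $V$ explicitly by aligning its columns with directions orthogonal to the column space of $H$, so that $H$ and $V$ occupy complementary subspaces and the cross terms in $(H+V)^\top(H+V)$ vanish. Since $r = \mathrm{rank}(H) \le d - N$, the orthogonal complement of $\mathrm{col}(H)$ in $\R^d$ has dimension $d - r \ge N$, so we can pick an orthonormal set $u_1, \dots, u_N \in \R^d$ with each $u_i \perp \mathrm{col}(H)$ and $u_i^\top u_j = \delta_{ij}$. Set $v_i = \sqrt{\alpha}\, u_i$ and $V = [v_1, \dots, v_N]$. Then $V^\top V = \alpha I$ by construction, so $V$ is feasible.

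Next I would verify positive definiteness. Because each $u_i$ is orthogonal to every column of $H$, we have $H^\top V = 0$ and hence $V^\top H = 0$, so
\begin{equation*}
(H+V)^\top(H+V) = H^\top H + V^\top H + H^\top V + V^\top V = H^\top H + \alpha I.
\end{equation*}
The matrix $H^\top H$ is positive semidefinite, so all its eigenvalues are $\ge 0$; adding $\alpha I$ shifts every eigenvalue up by $\alpha > 0$, making $H^\top H + \alpha I$ positive definite. This shows $(H+V)^\top(H+V)$ is positive definite, and in particular non-singular, so $\ell(V)$ is finite.

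The only genuine step requiring care is the dimension count that guarantees the existence of the orthonormal family $\{u_i\}_{i=1}^N$ inside $\mathrm{col}(H)^\perp$: this is exactly where the hypothesis $r \le d - N$ is used, since $\dim \mathrm{col}(H)^\perp = d - r \ge N$ is precisely the condition needed to extract $N$ mutually orthonormal vectors there (e.g., by taking the last $N$ columns of an orthogonal matrix whose first $r$ columns span $\mathrm{col}(H)$, or via Gram–Schmidt on any basis of the complement). Everything else is a routine block computation. As a remark, this construction is constructive and cheap — one only needs a basis for the null space of $H^\top$ — which is why it doubles as the initialization scheme for problem~\eqref{eq:minlogdet_eq} referenced in Section~\ref{sec:one-step}.
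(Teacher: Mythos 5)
Your proof is correct and matches the paper's argument essentially verbatim: both constructions scale by $\sqrt{\alpha}$ an orthonormal family of $N$ vectors drawn from $\mathrm{col}(H)^\perp$ (which the paper realizes as columns of the $Q_2$ block of an extended orthogonal matrix), use $H^\top V = 0$ to collapse $(H+V)^\top(H+V)$ to $H^\top H + \alpha I$, and conclude positive definiteness from adding $\alpha I$ to a positive semidefinite matrix. The dimension count $d - r \ge N$ is invoked in exactly the same place, so there is no substantive difference.
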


The assumption $d\geq r+N $ is mild in our setting. In practice, the activation dimension $d$ ranges from $1536$ (e.g., \model{Qwen-2.5-1.5B}) up to $2^{14}$ (e.g., \model{LLaMA-3.1-405B}), whereas the number of responses $N$ is usually small (e.g., 4, 8, or 16). 
Consequently, $d$ substantially exceeds $r + N$ under typical configurations since $r \leq N$. Proposition~\ref{prop:full_rank} also gives us an initialization for $V$, described in Algorithm~\ref{alg:initialization}. The computational cost of Algorithm~\ref{alg:initialization} is dominated by the singular value decomposition (SVD) of the matrix $H$. Using a classical method such as the Golub–Reinsch algorithm, this SVD can be computed with time complexity $O(4d^{2}N + 8dN^{2} + 9N^{3})$~\citep[Section 8.6.4]{ref:golub13}.

\begin{algorithm*}[!h]
\caption{Initialization for $V$}\label{alg:initialization}
\begin{algorithmic}[1]
\REQUIRE Incumbent activation matrix $H \in \R^{d\times N}$,~magnitude $\alpha\in\R_{++}$
\STATE Compute the SVD of $H$ as $H=Q\Sigma W^{\top}$ and $r=\mathrm{rank}(H)$, where $Q \in \R^{d\times d}$, $\Sigma=\diag(\sigma_{1},\dots,\sigma_{r},0,\dots,0) \in \R^{d\times N} $ and $W \in \R^{N\times N}$. 
\STATE Set $Q_2$ as the last $d-r$ columns of $Q$.
\STATE Randomly select $N$ columns $[v_{1},\dots,v_{N}]$ of $Q_{2}$.
\STATE \textbf{return} \rebuttal{$V_0=\sqrt\alpha[v_{1},\dots,v_{N}]$}, $\Sigma$, $W$.
 
\end{algorithmic}
\end{algorithm*}

\section{Riemannian gradient descent over Stiefel Manifold} \label{sec:R-algo}

In this section, we present a Riemannian optimization approach for finding steering vectors based on problem~\eqref{eq:minlogdet_eq}. The feasible region of problem~\eqref{eq:minlogdet_eq} is the scaled Stiefel manifold in $\R^{d \times N}$ with magnitude parameter $\alpha > 0$:
 \[
  \St(d,N,\alpha) = \{ V \in \R^{d\times N} : V^{\top}V=\alpha I\}.
\] 
\rebuttal{Since $\St(d,N,\alpha)$ is nonconvex, the classical projected gradient descent theory for convex sets does not apply here. A naive scheme that takes a step along the Euclidean gradient of $\ell(V)$ and then projects back onto the constraint set may actually increase the objective. Even local convergence can be difficult to guarantee, since the projection may distort the step direction enough to slow or stall progress unless the step size is chosen extremely small. To circumvent these issues, we adopt a Riemannian optimization approach for problem~\eqref{eq:minlogdet_eq}, which allows us to exploit the differential-geometric structure of the manifold to design efficient algorithms.}


The Riemannian gradient descent resembles the projected gradient descent in the Euclidean setting, and each iteration consists of two main steps: first, computing a descent direction within the tangent space and choosing a suitable step size to generate a tangent vector; second, retracting this tangent vector back onto the manifold. At iterate $k$ with incumbent solution $V_k \in \St(d, N,\alpha)$, a descent direction $S_k$ is computed in the tangent space at $V_k$. The tangent space at point $V_{k} \in \St(d,N,\alpha)$ is 
\[
T_{V_{k}} \St(d,N,\alpha) = \{U \in \R^{d\times N}: V_{k}^{\top}U + U^{\top}V_{k}=0\}.
\]
A tangent space is a subspace of the Euclidean space $\mathbb{R}^{d \times N}$, 
consisting of tangent vectors that represent the directions of feasible curves passing through the point and remaining on the manifold. We equip the tangent space with the Frobenius inner product $\langle U_{1},U_{2} \rangle=\Tr[U_{1}^{\top}U_{2}]$ for $U_{1},U_{2} \in T_{V_{k}} \St(d,N,\alpha)$. To obtain a descent direction $S_k$, one can first compute the Euclidean gradient $\nabla \ell(V_k) \Let -2(H+V_k)[(H+V_k)^\top (H+V_k)]^{-1}$, and then project it onto the tangent space $T_{V_k} \St(d,N,\alpha)$ via the orthogonal projection
$\Proj_{T_{V_k}}(U) \Let U - V_k\tfrac{V_k^{\top}U+U^{\top}V_k}{2\alpha}$ for $U \in \R^{d\times N}$, which gives the Riemannian gradient $\grad \ell(V_{k})=\Proj_{T_{V_k}}(\nabla \ell(V_k))$. The negative Riemannian gradient $S_k \Let -\grad \ell(V_{k})$ serves as our descent direction. Indeed, any direction $S_k \in T_{V_{k}} \St(d,N,\alpha)$ with $\langle S_k, \grad\ell(V_{k}) \rangle < 0$ is also a descent direction at point $V_{k}$.

The second step is to update the iterate $V_{k}$ by moving along the direction $S_{k}$ with a stepsize $\eta>0$. Because a standard Euclidean update $V_{k}+\eta S_{k}$ takes the solution off the manifold, we will use a \textit{retraction}, an operation that generalizes the concept of a straight-line step to a curved space. It translates the tangent vector $\eta S_{k}$ into a point on the manifold, ensuring the new iterate $V_{k+1}$ remains feasible. We employ the polar decomposition-based retraction
\begin{equation}\label{eq:polar_retraction}
  \Retr_{V}(U) \Let 
  \sqrt{\alpha}(V+U)(\alpha I + U^{\top}U)^{-1/2}.
\end{equation}
The full update step is $V_{k+1} = \Retr_{V_k}(\eta S_k)$, where the stepsize $\eta > 0$ is determined using a backtracking line search to ensure a sufficient decrease in the objective function. Ensuring sufficient decrease also prevents $H + V_k$ from becoming rank-deficient, which could render $\ell(V_k)$ ill-posed. We now summarize the complete procedure as Algorithm \ref{alg:RGD_line_search}.

\begin{algorithm*}[!h]
\caption{Riemannian gradient descent with line-search\label{alg:RGD_line_search}}
\begin{algorithmic}[1]
\REQUIRE Hidden activation matrix $H \in \R^{d\times N}$,~max iterations $K$,~magnitude $\alpha\in \R_{++}$,~line search parameters $\rho\in (0,1),c \in (0,1)$,~$\bar{\eta}\in \R_{++}$
\STATE Initialize $V_{0}$ via Algorithm~\ref{alg:initialization}
\FOR{$k = 0$ to $K$}
  \STATE Compute the Euclidean gradient $\nabla\ell(V_{k})\leftarrow -2(H+V_{k})[(H+V_{k})^{\top}(H+V_{k})]^{-1}$
  \STATE Compute the Riemannian direction $S_{k} \leftarrow -\nabla\ell(V_{k}) + V_{k}\frac{V_{k}^{\top}\nabla\ell(V_{k}) + (\nabla\ell(V_{k}))^{\top}V_{k} }{2\alpha}$
  \STATE Initialize $\eta \leftarrow \bar{\eta} $
  \WHILE{True}
    \STATE Compute $V_{k+1}\leftarrow \sqrt{\alpha}(V_{k}+\eta S_{k})(\alpha I + \eta^{2}S_{k}^{\top}S_{k})^{-1/2}$
    \STATE \textbf{if}~$\ell(V_{k}) - \ell(V_{k+1}) \geq c \eta \| S_{k} \|_{F}^{2} $~\textbf{then}~\textbf{break}~\textbf{else}~$\eta \leftarrow \rho \eta$ \textbf{end if}
  \ENDWHILE
\ENDFOR
\STATE \textbf{return} $V_{K}$ 
\end{algorithmic}
\end{algorithm*}

Since we require a decrease in function value during each step by line search in Algorithm~\ref{alg:RGD_line_search} and initialize $V_{0}$ to make $H+V_{0}$ full rank, the iteration sequence $\{V_k\}$ falls into the compact sublevel set $\mc C \Let \{V \in \St(d,N,\alpha): \ell(V) \leq \ell(V_{0}) \} $, which is the intersection of a sublevel set and the manifold. Let $\underline{\sigma}$ define the smallest singular value of $H+V$ over $\mc C$. Since $H+V$ is always full-rank for all $V \in \mc C$ and $\mc C$ is a compact set, we can conclude that $\underline{\sigma}$ is bounded away from $0$, i.e., $\underline{\sigma} >0$. Furthermore, we define $\bar{D} \Let \max_{V \in \mc C} \| \nabla \ell(V) \|_{F} \leq 2\frac{\sqrt{N}}{\underline{\sigma}^{2}} (\|H\|_{2}+ \sqrt{\alpha}) < \infty$, which is an upper bound on $\|\nabla \ell(V)\|_{F}$ over the sublevel set $\mc C$.

The following theorem shows the convergence guarantee of Algorithm~\ref{alg:RGD_line_search}: 

\begin{theorem}[Convergence Rate of Algorithm~\ref{alg:RGD_line_search}]\label{thm:convergence RGD_line_search}
Let $\{V_{k}\}_{k\geq 1}$ be the sequence generated from Algorithm~\ref{alg:RGD_line_search} with $\rho,c \in (0,1)$ and $0<\bar{\eta}<\sqrt{\alpha}/\bar{D}$. Then for all $K\geq 1$ we have
\[\min_{0\leq k\leq K} \| \grad \ell(V_{k}) \|_{F}^{2} = O(\tfrac{1}{K}).\]
\end{theorem}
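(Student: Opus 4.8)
The plan is to follow the standard template for proving sublinear convergence of Riemannian gradient descent with backtracking line search on a compact sublevel set, adapted to the specific retraction and objective here. The global structure has three ingredients: (i) the backtracking line search terminates after finitely many steps with a step size bounded below by a positive constant; (ii) each accepted step yields a guaranteed decrease proportional to $\eta\|S_k\|_F^2 = \eta\|\grad\ell(V_k)\|_F^2$; and (iii) summing these decreases against the total decrease $\ell(V_0) - \inf_{\mathcal C}\ell < \infty$ (finite because $\ell$ is continuous on the compact set $\mathcal C$) gives $\sum_k \eta_k \|\grad\ell(V_k)\|_F^2 < \infty$, and a uniform lower bound on $\eta_k$ then forces $\min_{0\le k\le K}\|\grad\ell(V_k)\|_F^2 = O(1/K)$.

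First I would establish the armijo-type sufficient-decrease inequality for the polar retraction. The key estimate is a quadratic upper model: along the curve $\eta \mapsto \Retr_{V_k}(\eta S_k)$, one has $\ell(\Retr_{V_k}(\eta S_k)) \le \ell(V_k) - \eta\|\grad\ell(V_k)\|_F^2 + \tfrac{L}{2}\eta^2\|S_k\|_F^2$ for $\eta$ in a suitable range, where $L$ is a Lipschitz-type constant for the pullback $\ell\circ\Retr$ restricted to the relevant compact region. This uses: (a) $\langle \nabla\ell(V_k), S_k\rangle = \langle\grad\ell(V_k), S_k\rangle = -\|\grad\ell(V_k)\|_F^2$ since $S_k$ lies in the tangent space and the Riemannian gradient is the tangential projection with respect to the Frobenius metric; (b) the retraction $\Retr_{V}(U) = \sqrt\alpha(V+U)(\alpha I + U^\top U)^{-1/2}$ satisfies $\Retr_V(0) = V$ and $\mathrm{D}\Retr_V(0)[U] = U$ (it is a first-order retraction), plus smoothness of $(\alpha I + U^\top U)^{-1/2}$ in $U$; and (c) the gradient of $\ell$ is Lipschitz on a neighborhood of $\mathcal C$ — here one must be careful that $\mathcal C$ (the intersection of the sublevel set with the Stiefel manifold) is compact and that $H+V$ stays full rank with smallest singular value $\ge \underline\sigma > 0$ on $\mathcal C$, so $[(H+V)^\top(H+V)]^{-1}$ and its derivative are uniformly bounded; this is exactly what the bound $\bar D$ and the remark preceding the theorem are set up to give. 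From the quadratic model, the Armijo condition $\ell(V_k) - \ell(V_{k+1}) \ge c\eta\|S_k\|_F^2$ is satisfied whenever $\eta \le \tfrac{2(1-c)}{L}$ (using $\|S_k\|_F^2 = \|\grad\ell(V_k)\|_F^2$), so backtracking from $\bar\eta$ by factor $\rho$ stops with $\eta_k \ge \min\{\bar\eta, \tfrac{2\rho(1-c)}{L}\} =: \eta_{\min} > 0$. The role of the hypothesis $\bar\eta < \sqrt\alpha/\bar D$ is to keep the first trial iterate (and subsequent backtracked iterates, which are smaller) inside the region where the quadratic model and the full-rank property are valid — i.e., it prevents the line search from ever leaving the compact set $\mathcal C$ or driving $H+V$ toward singularity; I would verify that $\|\eta S_k\|_F = \eta\|\grad\ell(V_k)\|_F \le \bar\eta \bar D < \sqrt\alpha$ controls $\|(\alpha I + \eta^2 S_k^\top S_k)^{-1/2}\|$ and keeps the retracted point well-defined with a decrease.

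Then I would close the argument by telescoping. Summing $c\,\eta_k\|\grad\ell(V_k)\|_F^2 \le \ell(V_k) - \ell(V_{k+1})$ over $k = 0,\dots,K$ gives $c\,\eta_{\min}\sum_{k=0}^{K}\|\grad\ell(V_k)\|_F^2 \le \ell(V_0) - \ell(V_{K+1}) \le \ell(V_0) - \ell^\star$, where $\ell^\star \Let \min_{V\in\mathcal C}\ell(V)$ is finite by continuity of $\ell$ on the compact set $\mathcal C$. Hence $(K+1)\min_{0\le k\le K}\|\grad\ell(V_k)\|_F^2 \le \tfrac{\ell(V_0)-\ell^\star}{c\,\eta_{\min}}$, which is the claimed $O(1/K)$ rate.

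I expect the main obstacle to be item (i)/(b): rigorously producing the uniform constant $L$ in the quadratic upper model for $\ell\circ\Retr$. This requires (1) confirming the polar retraction is second-order-bounded — i.e., $\|\Retr_V(U) - (V+U)\|_F = O(\|U\|_F^2)$ uniformly for $V\in\mathcal C$ — which follows from a Taylor expansion of $(\alpha I + U^\top U)^{-1/2} = \tfrac{1}{\sqrt\alpha}I + O(\|U\|_F^2)$; (2) a uniform Lipschitz bound on $\nabla\ell$ over a compact neighborhood of $\mathcal C$ in $\R^{d\times N}$, which needs the uniform spectral bounds $\sigma_{\min}(H+V) \ge \underline\sigma > 0$ together with $\|H+V\|_2 \le \|H\|_2 + \sqrt\alpha$; and (3) checking that every trial point generated during backtracking stays in that neighborhood — handled by the step-size cap $\bar\eta < \sqrt\alpha/\bar D$ and the fact that accepted steps only decrease $\ell$, keeping iterates in $\mathcal C$. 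Once these three uniform bounds are in place, the rest is the routine telescoping above.
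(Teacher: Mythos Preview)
Your proposal is correct and follows essentially the same route as the paper: establish a quadratic upper model via second-order boundedness of the polar retraction (the paper's Lemma~\ref{lem:second order retraction}) combined with Euclidean Lipschitz smoothness of $\nabla\ell$ on the compact sublevel set $\mathcal C$, deduce a uniform lower bound on the accepted step size from the Armijo test, and telescope. The only cosmetic difference is that the paper packages the descent lemma by citing \citet[Lemma~2.4]{ref:chen2021decentralized} and tracks the constant as $C(\eta)=\eta-\tfrac{\bar D}{\sqrt\alpha}\eta^2-\tfrac{L_2}{2}\eta^2$ (separating the retraction error term from the Hessian term), arriving at $\eta_{\min}=\min\{\bar\eta,\tfrac{2\rho(1-c)\sqrt\alpha}{2\bar D+L_2\sqrt\alpha}\}$, whereas you absorb both contributions into a single Lipschitz-type constant $L$ for the pullback; the three technical ingredients you list as the ``main obstacle'' are exactly what the paper verifies.
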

\rebuttal{On a Riemannian manifold, $V$ is a stationary point of a smooth function $\ell$ if and only if its Riemannian gradient vanishes, that is, $\grad \ell(V)=0$~\citep[Proposition 4.6][]{ref:boumal2023introduction}. Theorem~\ref{thm:convergence RGD_line_search} therefore shows that Algorithm~\ref{alg:RGD_line_search} converges to the critical point of problem~\eqref{eq:minlogdet_eq} at rate $O(1/K)$.}
Despite the favorable theoretical guarantee, Algorithm~\ref{alg:RGD_line_search} has multiple practical drawbacks: Firstly, Theorem~\ref{thm:convergence RGD_line_search} indicates that the initial stepsize $\bar{\eta}$ depends on the quantity $\bar{D}$, which is challenging to estimate. Secondly, each iteration is expensive because it requires computing matrix inverses, square roots, and inverse square roots, all of which have a complexity of $O(N^3)$. Further, if the matrices $H+V_{k}$ are ill-conditioned, the repeated inversion and square-root computations can be numerically unstable. Additionally, the backtracking line search requires multiple evaluations of the log-determinant objective. When the stepsize shrinks excessively, progress becomes very slow, and poorly chosen parameters may result in significant computational waste. We will explore a more computationally friendly procedure in the next section.

\section{One-step Update with Greedy Stepsize} \label{sec:one-step}
In this section, we propose an efficient method for finding a suitable set of steering vectors. Instead of an iterative algorithm, we employ a one-step update with a carefully designed search direction and a greedy step size. For a given point $V \in \St(d,N,\alpha)$ and a search-direction $S \in T_{V}\St(d,N,\alpha)$, the exact line search determines the stepsize by minimizing the following objective function
\begin{equation}\label{eq:exact search logdet}
  \min_{\eta \ge 0}~ \mc L(\eta)\Let -\log\det[(H+\Retr_{V}(\eta S))^{\top}(H+\Retr_{V}(\eta S))],
\end{equation}
where $\Retr_{V}(\cdot)$ is the polar retraction defined in \eqref{eq:polar_retraction}. As solving this problem is intractable due to the nonlinearities in the log-determinant and the retraction, we instead work with a quadratic approximation of $\mc L(\eta)$.

\begin{proposition}[Quadratic approximation of $\mc L(\eta)$]\label{prop:approximation aggressive step-size}
  For any activation $H$, steering vectors $V$ and search direction $S$, we define
  $A=(H+V)^{\top}(H+V)$, $D_{1}=\Tr[A^{-1}(H^{\top}S+S^{\top}H)]$ and $D_{2}=\frac{1}{\alpha}\Tr[A^{-1}(H^{\top}VS^{\top}S + S^{\top}SV^{\top}H)]+\Tr[\left((A^{-1}(H^{\top}S+S^{\top}H)\right)^2]$. 
  The function $\mc L(\eta)$ has the following second-order expansion:
  \[
  \mc L(\eta) = \frac{1}{2}D_{2}\eta^{2} - D_{1}\eta-\log\det A +O(\eta^{3}).
  \]
\end{proposition}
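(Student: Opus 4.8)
The plan is to compute the Taylor expansion of $\mathcal{L}(\eta)$ at $\eta = 0$ up to second order. The natural route is to first expand the retraction $\Retr_V(\eta S)$ as a power series in $\eta$, then substitute into $(H + \Retr_V(\eta S))^\top (H + \Retr_V(\eta S))$ to get a matrix-valued expansion $A + \eta B_1 + \eta^2 B_2 + O(\eta^3)$, and finally apply the standard expansion $\log\det(A + E) = \log\det A + \Tr[A^{-1}E] - \tfrac{1}{2}\Tr[(A^{-1}E)^2] + O(\|E\|^3)$ with $E = \eta B_1 + \eta^2 B_2$, collecting powers of $\eta$. Since $\mathcal{L} = -\log\det$, the signs flip accordingly.

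First I would expand the retraction. Writing $\Retr_V(\eta S) = \sqrt{\alpha}(V + \eta S)(\alpha I + \eta^2 S^\top S)^{-1/2}$, and using that $S \in T_V\St(d,N,\alpha)$ means $V^\top S + S^\top V = 0$, I expand $(\alpha I + \eta^2 S^\top S)^{-1/2} = \tfrac{1}{\sqrt\alpha}(I - \tfrac{\eta^2}{2\alpha}S^\top S + O(\eta^4))$. Hence $\Retr_V(\eta S) = (V + \eta S)(I - \tfrac{\eta^2}{2\alpha}S^\top S) + O(\eta^3) = V + \eta S - \tfrac{\eta^2}{2\alpha}V S^\top S + O(\eta^3)$. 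Therefore $H + \Retr_V(\eta S) = (H + V) + \eta S - \tfrac{\eta^2}{2\alpha}V S^\top S + O(\eta^3)$.

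Next I would form the Gram matrix. With $G \Let H + V$, compute $(H+\Retr_V(\eta S))^\top(H+\Retr_V(\eta S))$. The $\eta^0$ term is $A = G^\top G$; the $\eta^1$ term is $G^\top S + S^\top G = H^\top S + S^\top H + V^\top S + S^\top V = H^\top S + S^\top H$ (the $V$-terms cancel by tangency), which is exactly $B_1$; the $\eta^2$ term is $S^\top S - \tfrac{1}{2\alpha}(G^\top V S^\top S + S^\top S V^\top G)$, and again using $V^\top V = \alpha I$ and $V^\top S + S^\top V = 0$ this simplifies — the $V^\top V$ pieces contribute $\mp\tfrac{1}{2}S^\top S$ which combine with the bare $S^\top S$ to give $-\tfrac{1}{2\alpha}(H^\top V S^\top S + S^\top S V^\top H)$, which is $B_2$. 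Then I would plug $E = \eta B_1 + \eta^2 B_2 + O(\eta^3)$ into the log-det expansion: the linear-in-$\eta$ contribution to $\log\det$ is $\Tr[A^{-1}B_1]\eta = D_1\eta$; the $\eta^2$ contribution is $\Tr[A^{-1}B_2] - \tfrac12\Tr[(A^{-1}B_1)^2]$, and after negating, $\mathcal{L}(\eta) = -\log\det A - D_1\eta + \big(\tfrac12\Tr[(A^{-1}B_1)^2] - \Tr[A^{-1}B_2]\big)\eta^2 + O(\eta^3)$; identifying $\tfrac12 D_2 = \tfrac12\Tr[(A^{-1}B_1)^2] - \Tr[A^{-1}B_2]$ gives precisely the stated $D_2$ once $B_2$ is substituted (the minus sign on $B_2$ flips to the plus sign on the first trace in the definition of $D_2$, and $\Tr[(A^{-1}(H^\top S + S^\top H))^2] = \Tr[(A^{-1}B_1)^2]$).

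The main obstacle is bookkeeping rather than any deep difficulty: one must be careful that the $O(\eta^3)$ remainder in the log-det expansion genuinely absorbs all higher-order terms — this requires the matrix $E$ to be small in norm, which holds for $\eta$ in a neighborhood of $0$ since $A = G^\top G$ is positive definite at the feasible initialization (guaranteed by Proposition~\ref{prop:full_rank}) and hence invertible with controlled operator norm. A secondary care point is ensuring the $\eta^2$ coefficient is assembled correctly, since both the quadratic term of $\log\det$ (contributing $\Tr[(A^{-1}B_1)^2]$) and the linear term applied to $B_2$ feed into it with opposite signs; getting those signs and the factor $\tfrac12$ right is where an error would most plausibly creep in, so I would double-check by verifying that $D_2$ as defined matches $2\big(\tfrac12\Tr[(A^{-1}B_1)^2] - \Tr[A^{-1}B_2]\big)$ term by term.
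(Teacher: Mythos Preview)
Your proposal is correct and follows essentially the same route as the paper: expand the polar retraction to second order via $(\alpha I + \eta^{2}S^{\top}S)^{-1/2}=\alpha^{-1/2}\big(I-\tfrac{\eta^{2}}{2\alpha}S^{\top}S\big)+O(\eta^{4})$, form the Gram matrix and simplify using $V^{\top}S+S^{\top}V=0$ and $V^{\top}V=\alpha I$, then apply the $\log\det$ Taylor expansion and collect powers of $\eta$. One cosmetic slip: in your $\eta^{2}$ computation both $V^{\top}V$-pieces contribute $-\tfrac12 S^{\top}S$ with the \emph{same} sign (not $\mp$), which is exactly what cancels the bare $S^{\top}S$; your resulting $B_{2}$ and the identification of $D_{2}$ are nonetheless correct.
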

Neglecting the higher-order term $O(\eta^{3})$, we consider the quadratically approximated stepsize:
\begin{equation}\label{eq:opt_problem_eta}
  \eta\opt \Let \arg\min_{\eta \ge 0}~\frac12 D_2 \eta^2 - D_1 \eta.
\end{equation}
For the minimization problem~\eqref{eq:opt_problem_eta} to be well-posed and yield a finite stepsize, $D_{2}$ must be positive; otherwise, the approximated objective would be unbounded below as $\eta \to \infty$. When $D_{2}>0$, the optimal stepsize is given by $\eta\opt = \max(0, D_1/D_2)$. To ensure a meaningful update ($0 < \eta\opt < \infty$), we choose a specific combination of $V_0$ and a search direction $S$ as follows:
\begin{enumerate}[leftmargin=5mm, label=(\roman*)]
  \item We choose $V_0$ from the output of Algorithm~\ref{alg:initialization}. 
  \item We choose the search direction as $S = H$. 
\end{enumerate}

Next, we show that $H$ is a descent direction in the Riemannian sense at the point $V_0$ chosen above. 

\begin{proposition}[Descent direction] \label{prop:descent direction}
  Let $H\in \R^{d \times N}$ and let $V_{0}$ be given from Algorithm~\ref{alg:initialization}. Then $H$ is a descent direction at point $V_{0}$ in the Riemannian sense, i.e., $\langle H,\grad \ell(V_{0}) \rangle<0$.
\end{proposition}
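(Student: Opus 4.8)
The plan is to reduce the inner product $\langle H,\grad\ell(V_{0})\rangle$ to one elementary trace by exploiting the orthogonality that Algorithm~\ref{alg:initialization} builds into $V_0$. Write $V_{0}=\sqrt{\alpha}\,Q_{2}P$, where $P\in\R^{(d-r)\times N}$ is the selection matrix picking out the $N$ chosen columns of $Q_{2}$, so $P^{\top}P=I_{N}$. Orthonormality of the columns of $Q_{2}$ gives $V_{0}^{\top}V_{0}=\alpha P^{\top}Q_{2}^{\top}Q_{2}P=\alpha I$, so $V_{0}\in\St(d,N,\alpha)$. Moreover, since $\Sigma$ in the SVD $H=Q\Sigma W^{\top}$ has all rows beyond the $r$-th equal to zero, one checks $Q_{2}^{\top}H=Q_{2}^{\top}Q\Sigma W^{\top}=0$, hence $H^{\top}V_{0}=\sqrt{\alpha}\,(Q_{2}^{\top}H)^{\top}P=0$. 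In particular $V_{0}^{\top}H+H^{\top}V_{0}=0$, so $H\in T_{V_{0}}\St(d,N,\alpha)$ and the statement "$H$ is a descent direction" is well-posed in the sense used earlier in this section.

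Next I would substitute $V=V_{0}$ into the gradient formulas. From $H^{\top}V_{0}=0$ and $V_{0}^{\top}V_{0}=\alpha I$ we get $A\Let (H+V_{0})^{\top}(H+V_{0})=H^{\top}H+\alpha I\succ 0$, so $\nabla\ell(V_{0})=-2(H+V_{0})A^{-1}$ is finite. Using the explicit projection, $\grad\ell(V_{0})=\nabla\ell(V_{0})-V_{0}\,\tfrac{V_{0}^{\top}\nabla\ell(V_{0})+(\nabla\ell(V_{0}))^{\top}V_{0}}{2\alpha}$; taking the Frobenius inner product with $H$ and using $H^{\top}V_{0}=0$ annihilates the correction term, leaving $\langle H,\grad\ell(V_{0})\rangle=\Tr[H^{\top}\nabla\ell(V_{0})]$.

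Finally I would compute $\Tr[H^{\top}\nabla\ell(V_{0})]=-2\,\Tr[H^{\top}(H+V_{0})A^{-1}]=-2\,\Tr[(H^{\top}H+H^{\top}V_{0})A^{-1}]=-2\,\Tr[H^{\top}H\,(H^{\top}H+\alpha I)^{-1}]$, again using $H^{\top}V_{0}=0$. Since $H^{\top}H=W\diag(\sigma_{1}^{2},\dots,\sigma_{r}^{2},0,\dots,0)W^{\top}$ (with the same $W$ from Algorithm~\ref{alg:initialization}), this trace equals $\sum_{i=1}^{r}\sigma_{i}^{2}/(\sigma_{i}^{2}+\alpha)$, which is strictly positive whenever $r\ge 1$. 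Hence $\langle H,\grad\ell(V_{0})\rangle=-2\sum_{i=1}^{r}\sigma_{i}^{2}/(\sigma_{i}^{2}+\alpha)<0$.

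I do not foresee a genuine obstacle: the whole argument rests on the single structural identity $Q_{2}^{\top}H=0$, which makes every cross term $H^{\top}V_{0}$ vanish and collapses both the tangent-space correction and the trace to something manifestly negative. The only point worth flagging is that strict negativity needs $H\neq 0$ (equivalently $r\ge 1$), the natural nondegeneracy assumption since hidden activations are nonzero in practice; if $H=0$ the inner product is trivially $0$ and $H$ fails to be a descent direction.
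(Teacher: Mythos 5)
Your proof is correct and follows essentially the same route as the paper: establish $H^{\top}V_{0}=0$ and $V_{0}^{\top}V_{0}=\alpha I$ from the construction in Algorithm~\ref{alg:initialization}, observe that $H$ lies in the tangent space so the tangent-space correction drops out of the inner product, and reduce to $\langle H,\grad\ell(V_0)\rangle=-2\,\Tr[H^{\top}H(H^{\top}H+\alpha I)^{-1}]$. One small point worth noting: the paper justifies strict negativity merely by remarking that $H+V_0$ is full rank, but full rank of $H+V_0$ by itself would not preclude the trace from vanishing if $H=0$; you correctly identify that strict negativity additionally requires $H\neq 0$ (equivalently $r\ge 1$), and your closed-form expression $-2\sum_{i=1}^{r}\sigma_i^2/(\sigma_i^2+\alpha)$ makes this explicit and links cleanly to Proposition~\ref{prop:nonnegtive of eta}.
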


We now proceed to demonstrate that the above choice of $(V_0, S)$ yields a meaningful stepsize $\eta\opt$. Moreover, we can compute $\eta\opt$ in an analytical form of the singular values of $H$.
\begin{proposition}[Positivesness of $\eta\opt$]\label{prop:nonnegtive of eta}
 Let $H\in \R^{d \times N}$ and let $V_{0}$ be given from Algorithm~\ref{alg:initialization}. Let $r$ be the rank of $H$ and $\sigma_{1},\dots,\sigma_{r}$ be the singular values of $H$. Then we have
\[
D_{1} = 2\sum_{i=1}^{r}\frac{\sigma_{i}^{2}}{\sigma_{i}^{2}+\alpha} >0 \quad \text{and} \quad D_{2} = 4\sum_{i=1}^{r}\frac{\sigma_{i}^{4}}{(\sigma_{i}^{2}+\alpha)^{2}} >0.
\]
In this case, the optimal stepsize is $\eta\opt = \frac{D_{1}}{D_{2}}>0$.
\end{proposition}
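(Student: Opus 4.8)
The plan is to compute $D_1$ and $D_2$ from Proposition~\ref{prop:approximation aggressive step-size} directly at the specific point $(V_0, S) = (V_0, H)$ using the SVD of $H$ produced by Algorithm~\ref{alg:initialization}, and then invoke Proposition~\ref{prop:descent direction} (or the explicit positivity of the sums) to conclude $\eta\opt = D_1/D_2 > 0$. The main leverage is that $V_0 = \sqrt{\alpha}\,[v_1,\dots,v_N]$ is built from columns of $Q_2$, the left-singular subspace of $H$ orthogonal to its range; hence $H^\top V_0 = 0$ (and $V_0^\top H = 0$). This orthogonality is what collapses the otherwise messy traces into clean scalar sums.

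First I would write $H = Q\Sigma W^\top$ with $\Sigma = \diag(\sigma_1,\dots,\sigma_r,0,\dots,0)$, so that $H^\top H = W \Sigma^\top\Sigma W^\top = W\,\diag(\sigma_1^2,\dots,\sigma_r^2,0,\dots,0)\,W^\top$. Because $V_0^\top H = 0$, the matrix $A = (H+V_0)^\top(H+V_0)$ expands as $H^\top H + H^\top V_0 + V_0^\top H + V_0^\top V_0 = H^\top H + \alpha I$ (using $V_0^\top V_0 = \alpha I$ since $V_0/\sqrt\alpha$ has orthonormal columns). Thus $A = W\,\diag(\sigma_1^2+\alpha,\dots,\sigma_r^2+\alpha,\alpha,\dots,\alpha)\,W^\top$ and $A^{-1} = W\,\diag\big(\tfrac{1}{\sigma_1^2+\alpha},\dots,\tfrac{1}{\sigma_r^2+\alpha},\tfrac1\alpha,\dots,\tfrac1\alpha\big)\,W^\top$. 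This is the key simplification: $A$ is simultaneously diagonalizable with $H^\top H$.

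Next I would substitute $S = H$ into the formulas for $D_1$ and $D_2$. For $D_1 = \Tr[A^{-1}(H^\top S + S^\top H)] = \Tr[A^{-1}\cdot 2H^\top H] = 2\Tr[A^{-1}H^\top H]$, and since both factors are diagonalized by $W$, this equals $2\sum_{i=1}^r \frac{\sigma_i^2}{\sigma_i^2+\alpha}$ (the $i>r$ terms contribute $0$ because $H^\top H$ has zero eigenvalues there). For $D_2$, note the first trace term $\frac1\alpha\Tr[A^{-1}(H^\top V_0 S^\top S + S^\top S V_0^\top H)]$ vanishes entirely because it contains the factors $H^\top V_0 = 0$ and $V_0^\top H = 0$. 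The second term is $\Tr[(A^{-1}(H^\top S + S^\top H))^2] = \Tr[(2A^{-1}H^\top H)^2] = 4\Tr[(A^{-1}H^\top H)^2] = 4\sum_{i=1}^r \frac{\sigma_i^4}{(\sigma_i^2+\alpha)^2}$. Both sums are strictly positive whenever $r \ge 1$ (which holds for any nonzero $H$; if $H = 0$ the problem is trivial/degenerate and $V_0$ alone already gives finite objective), so $D_1 > 0$ and $D_2 > 0$, hence by the well-posedness discussion $\eta\opt = \max(0, D_1/D_2) = D_1/D_2 > 0$.

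The only genuinely delicate point is justifying $H^\top V_0 = 0$ cleanly: the columns $v_1,\dots,v_N$ selected in Algorithm~\ref{alg:initialization} are drawn from $Q_2$, the last $d-r$ columns of the orthogonal matrix $Q$, and these span the orthogonal complement of $\mathrm{range}(H) = \mathrm{range}(Q_1\Sigma_1)$ where $Q_1$ is the first $r$ columns. Since $H = Q_1\Sigma_1 W_1^\top$ (reduced form), $Q_2^\top H = Q_2^\top Q_1 \Sigma_1 W_1^\top = 0$ by orthonormality of $Q$'s columns; therefore $V_0^\top H = \sqrt\alpha[v_1,\dots,v_N]^\top H = 0$, and transposing gives $H^\top V_0 = 0$. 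I expect this is already established in the proof of Proposition~\ref{prop:descent direction} and Proposition~\ref{prop:full_rank}, so it can be cited rather than re-derived. Everything else is routine trace algebra exploiting simultaneous diagonalizability, so no substantial obstacle remains.
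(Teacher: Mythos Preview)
Your proposal is correct and follows essentially the same approach as the paper: exploit $H^\top V_0 = 0$ and $V_0^\top V_0 = \alpha I$ to reduce $A = (H+V_0)^\top(H+V_0)$ to $H^\top H + \alpha I$, diagonalize via the SVD of $H$, and read off $D_1, D_2$ as scalar sums over the singular values, noting that the first trace in $D_2$ vanishes because of $V_0^\top H = 0$. Your justification of $H^\top V_0 = 0$ from the construction in Algorithm~\ref{alg:initialization} is slightly more explicit than the paper's, which simply asserts it.
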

Now we are ready to propose Algorithm~\ref{alg:one-step}, a heuristic one-step update procedure for solving problem~\eqref{eq:minlogdet_eq}. Given the activation matrix $H$, Line~1 of the algorithm first calls Algorithm~\ref{alg:initialization} to obtain a feasible initialization $V_{0}$ along with the SVD of $H$. Then Line~2 computes the stepsize $\eta\opt$ with singular values of $H$. Finally, Lines~3 implements the polar retraction~\eqref{eq:polar_retraction}, where the inverse matrix square root is efficiently computed via the SVD of $H$. 

Algorithm~\ref{alg:one-step} is a one-step update, and it is hopeless to establish any theoretical convergence guarantees for it. However, Appendix~\ref{sec:Numerical Comparision} presents extensive empirical results to show that Algorithm~\ref{alg:one-step} achieves the optimality gap of approximately $2\%$ in a single update, while using only about $3\%$ of the run time of Algorithm~\ref{alg:RGD_line_search}. 

\begin{algorithm*}[!h]
\caption{One-step update with greedy stepsize}\label{alg:one-step}
\begin{algorithmic}[1]
  \REQUIRE Incumbent activation matrix $H \in \R^{d\times N}$ of rank $r$, magnitude $\alpha\in \R_{++}$
  \STATE Compute $V_{0}$, $\Sigma$ and $W$ via Algorithm~\ref{alg:initialization}
  \STATE Compute $D_{1} \leftarrow 2\sum_{i=1}^{r} \frac{\sigma_{i}^{2}}{\sigma_{i}^{2} + \alpha} $, $D_{2} \leftarrow 4\sum_{i=1}^{r} [\frac{\sigma_{i}^{2}}{\sigma_{i}^{2} + \alpha}]^{2}$ and $\eta\opt \leftarrow D_{1}/D_{2}$
  \STATE Compute $V_{1} \leftarrow \sqrt{\alpha}(V_{0}+\eta\opt H) W (\alpha I + (\eta\opt \Sigma)^{2})^{-1/2} W^{\top} $
  \STATE \textbf{return} $V_{1}$
\end{algorithmic}
\end{algorithm*}

\section{Numerical Experiments}\label{sec:Numerical Experiments}

\begin{table}[t]
\centering
\scriptsize
\footnotesize
\caption{Experimental results on \dataset{TESTEVAL} across different models and sampling temperatures. All the numbers are in percentages. Best-performing methods are highlighted in bold.}
\label{tab:testeval}
\setlength{\tabcolsep}{5pt} 
\begin{tabular*}{\linewidth}{@{\extracolsep{\fill}} l|c|c|ccc|cc|cc|cc}
\toprule
\multirow{2}{*}{} & \multirow{2}{*}{\textbf{T}} & \multirow{2}{*}{\textbf{Method}} & \multicolumn{3}{c|}{\textbf{Correctness $\uparrow$}} & \multicolumn{2}{c|}{\textbf{Overall cov. $\uparrow$}} & \multicolumn{2}{c|}{\textbf{Line $cov@k$ $\uparrow$}} & \multicolumn{2}{c}{\textbf{Branch $cov@k$ $\uparrow$}} \\
& & & \textbf{syntax} & \textbf{exe}  & \textbf{assert} & \textbf{line} & \textbf{branch} & \textbf{$k=1$} & \textbf{$k=5$} & \textbf{$k=1$} & \textbf{$k=5$} \\ 
\addlinespace[2pt] 
\midrule 
\multirow{15}{*}{\rotatebox{90}{\model{Gemma-1.1-2b-it}}} & \multirow{3}{*}{1.0} & \method{Sampling} & \textbf{89.90} & 1.02 & \rebuttal{0.95} & \rebuttal{5.36} & \rebuttal{5.06} & \rebuttal{5.32} & \rebuttal{5.35} & \rebuttal{4.98} & \rebuttal{5.04} \\
& & \method{STARS\_${0.1}$} & 86.48 & 1.43 & 1.17 & 10.03 & 9.32 & 9.97 & 10.02 & 9.12 & 9.25 \\
& & \method{STARS\_${0.5}$} & 79.76 & \textbf{3.83} & \textbf{3.45} & \textbf{34.76} & \textbf{30.93} & \textbf{33.81} & \textbf{34.45} & \textbf{29.58} & \textbf{30.40} \\ \cmidrule(r{1em}){2-12} 
& \multirow{3}{*}{0.8} & \method{Sampling} & \textbf{91.79} & 1.19 & 0.00 & \rebuttal{3.54} & \rebuttal{3.37} & \rebuttal{3.46} & \rebuttal{3.51} & \rebuttal{3.21} & \rebuttal{3.30} \\
& & \method{STARS\_${0.1}$} & 87.83 & 1.45 & 1.12 & 10.77 & 9.88 & 10.64 & 10.72 & 9.65 & 9.78 \\
& & \method{STARS\_${0.5}$} & 79.43 & \textbf{3.79} & \textbf{3.45} & \textbf{32.96} & \textbf{30.02} & \textbf{31.70} & \textbf{32.59} & \textbf{28.23} & \textbf{29.43} \\ \cmidrule(r{1em}){2-12} 
& \multirow{3}{*}{0.6} & \method{Sampling} & \textbf{93.45} & 1.29 & 0.00 & 3.01 & 2.84 & 2.94 & 2.99 & 2.70 & 2.77 \\
& & \method{STARS\_${0.1}$} & 89.19 & 1.71 & 1.43 & 12.10 & 11.22 & 12.03 & 12.10 & 10.99 & 11.15 \\
& & \method{STARS\_${0.5}$} & 79.24 & \textbf{3.86} & \textbf{3.24} & \textbf{34.07} & \textbf{30.38} & \textbf{32.04} & \textbf{33.22}     & \textbf{28.00} & \textbf{29.46} \\ \cmidrule(r{1em}){2-12} 
& \multirow{3}{*}{0.4} & \method{Sampling} & \textbf{95.40} & 1.36 & 1.26 & 2.33 & 2.31 & 2.29 & 2.32 & 2.20 & 2.27 \\
& & \method{STARS\_${0.1}$} & 90.43 & 1.48 & 1.43 & 8.44 & 7.69 & 8.41 & 8.44 & 7.58 & 7.63 \\
& & \method{STARS\_${0.5}$} & 80.31 & \textbf{4.38} & \textbf{4.00} & \textbf{40.03} & \textbf{35.96} & \textbf{38.39} & \textbf{39.36} & \textbf{33.73} & \textbf{35.07}\\ \cmidrule(r{1em}){2-12} 
& \multirow{3}{*}{0.2} & \method{Sampling} & \textbf{95.64} & 1.36 & 1.36 & 1.44 & 1.41 & 1.37 & 1.42 & 1.27 & 1.35 \\
& & \method{STARS\_${0.1}$} & 91.52 & 1.45 & 1.36 & 7.28 & 6.50 & 7.23 & 7.25 & 6.36 & 6.40 \\
& & \method{STARS\_${0.5}$} & 78.93 & \textbf{4.38} & \textbf{4.02} & \textbf{39.03} & \textbf{35.05} & \textbf{37.38} & \textbf{38.67}     & \textbf{32.94} & \textbf{34.39}  \\ \midrule \midrule
\multirow{15}{*}{\rotatebox{90}{\model{Qwen3-1.7b}}}   & \multirow{3}{*}{1.0} & \method{Sampling} & 18.07 & 11.88 & 4.19 & 59.95 & 55.05 & 57.18 & 58.66 & 51.31 & 53.39 \\
& & \method{STARS\_${0.1}$}    & 61.10  & 39.90 & 19.17 & 86.77 & 81.10 & 80.54 & 83.62 & 71.71 & 76.36 \\
& & \method{STARS\_${0.5}$} & \textbf{72.21} & \textbf{41.86} & \textbf{30.79} & \textbf{91.05} & \textbf{86.81} & \textbf{82.18} & \textbf{87.31} & \textbf{73.59} & \textbf{81.02} \\ \cmidrule(r{1em}){2-12} 
& \multirow{3}{*}{0.8} & \method{Sampling} & 13.86 & 8.17 & 2.83 & 37.06 & 33.73 & 36.15 & 36.47 & 32.43 & 33.04 \\ 
& & \method{STARS\_${0.1}$} & 61.10  & 39.90 & 19.17 & 86.77 & 81.10 & 80.54 & 83.62 & 71.71 & 76.36 \\
& & \method{STARS\_${0.5}$} & \textbf{71.55}  & \textbf{41.31} & \textbf{29.07} & \textbf{92.22} & \textbf{87.86} & \textbf{83.00} & \textbf{88.42} & \textbf{74.39} & \textbf{82.14} \\ \cmidrule(r{1em}){2-12} 
& \multirow{3}{*}{0.6} & \method{Sampling} & 11.69 & 6.19 & 2.26 & 23.92 & 21.79 & 22.76 & 23.24 & 20.35 & 20.89 \\
& & \method{STARS\_${0.1}$} & 61.57  & 41.10 & 19.43 & 86.84 & 80.95 & 81.08 & 83.99 & 72.24 & 76.53 \\
& & \method{STARS\_${0.5}$} & \textbf{72.79}  & \textbf{43.02} & \textbf{31.05} & \textbf{90.73} & \textbf{86.35} & \textbf{82.39} & \textbf{87.25} & \textbf{73.77} & \textbf{80.88} \\ \cmidrule(r{1em}){2-12} 
& \multirow{3}{*}{0.4} & \method{Sampling} & 9.00 & 3.83 & 1.52 & 13.56 & 12.26 & 13.29 & 13.43 & 11.86 & 12.08 \\
& & \method{STARS\_${0.1}$} & 62.69  & 41.60 & 19.74 & 87.14 & 81.17 & 81.82 & 84.46 & 73.15 & 77.13 \\
& & \method{STARS\_${0.5}$} & \textbf{73.88}  & \textbf{42.79} & \textbf{30.74} & \textbf{91.30} & \textbf{87.01} & \textbf{82.95} & \textbf{87.37} & \textbf{74.40} & \textbf{81.00} \\ \cmidrule(r{1em}){2-12} 
& \multirow{3}{*}{0.2} & \method{Sampling} & 8.17 & 2.69 & 1.17 & 4.71 & 4.16 & 4.70 & 4.71 & 4.13 & 4.14 \\
& & \method{STARS\_${0.1}$} & 62.93 & 41.11 & 18.81 & 86.25 & 80.22 & 80.67 & 83.60 & 71.72 & 75.92 \\
& & \method{STARS\_${0.5}$} & \textbf{73.40} & \textbf{41.81} & \textbf{29.95} & \textbf{91.35} & \textbf{87.13} & \textbf{82.95} & \textbf{87.61} & \textbf{74.28} & \textbf{81.32}\\ 
\bottomrule
\end{tabular*}
\end{table}

\begin{table}[h]
 \scriptsize
 \footnotesize
\caption{Experimental results on \LiveIdeaBench across different models and sampling temperatures. All values are reported on a scale with a maximum score of 10. Best-performing methods are highlighted in bold.}\label{tab:idea}
\begin{tabular*}{\linewidth}{@{\extracolsep{\fill}}@{}c|c|c|cccccc@{}}
\toprule
\textbf{} & \textbf{T} & \textbf{Method} & \textbf{Originality $\uparrow$} & \textbf{Feasibility $\uparrow$} & \textbf{Clarity $\uparrow$} & \textbf{Fluency $\uparrow$} & \textbf{Flexibility $\uparrow$} & \textbf{Avg. $\uparrow$} \\ \midrule
\multirow{15}{*}{\rotatebox{90}{\model{Qwen2.5-3B-Instruct}}} & \multirow{3}{*}{1.0} & \method{Sampling} & 6.22 & 5.50 & 5.78 & 5.42 & 5.61 & 5.71 \\
& & \STAR{0.1} & 6.26 & \textbf{5.64} & \textbf{5.79} & 5.32 & \textbf{5.71} & 5.74 \\
& & \STAR{0.5} & \textbf{6.30} & 5.45 & 5.75 & \textbf{5.81} & 5.70 & \textbf{5.80} \\ \cmidrule(r{0em}){2-9}
& \multirow{3}{*}{0.8} & \method{Sampling} & 6.20 & 5.55 & 5.76 & 5.02 & 5.50 & 5.60 \\
& & \STAR{0.1} & \textbf{6.14} & 5.57 & \textbf{5.80} & 5.45 & 5.67 & 5.73 \\
& & \STAR{0.5} & 6.13 & \textbf{5.62} & 5.70 & \textbf{5.96} & \textbf{5.74} & \textbf{5.83} \\ \cmidrule(r{0em}){2-9}
& \multirow{3}{*}{0.6} & \method{Sampling} & 6.07 & 5.63 & 5.68 & 4.51 & 5.38 & 5.45 \\
& & \STAR{0.1} & \textbf{6.17} & 5.59 & \textbf{5.73} & \textbf{5.19} & \textbf{5.55} & \textbf{5.64} \\
& & \STAR{0.5} & 6.01 & \textbf{5.68} & 5.72 & 5.17 & 5.48 & 5.61 \\ \cmidrule(r{0em}){2-9}
& \multirow{3}{*}{0.4} & \method{Sampling} & 6.03 & 5.56 & 5.75 & 3.57 & 5.14 & 5.21 \\
& & \STAR{0.1} & \textbf{6.09} & \textbf{5.70} & \textbf{5.76} & 4.61 & 5.47 & 5.53 \\
& & \STAR{0.5} & 5.99 & \textbf{5.70} & 5.73 & \textbf{5.02} & \textbf{5.50} & \textbf{5.59} \\ \cmidrule(r{0em}){2-9}
& \multirow{3}{*}{0.2} & \method{Sampling} & 6.04 & \textbf{5.69} & 5.74 & 2.68 & 4.92 & 5.01 \\
& & \STAR{0.1} & \textbf{6.06} & 5.61 & \textbf{5.79} & 4.20 & 5.30 & 5.39 \\
& & \STAR{0.5} & 6.04 & 5.62 & 5.68 & \textbf{5.09} & \textbf{5.52} & \textbf{5.59} \\ \midrule
\multirow{15}{*}{\rotatebox{90}{\model{Llama-3.2-3B-Instruct}}} & \multirow{3}{*}{1.0} & \method{Sampling} & 6.51 & \textbf{5.34} & \textbf{5.89} & 5.41 & 5.66 & 5.76 \\
& & \STAR{0.1} & 6.57 & 5.31 & \textbf{5.89} & 5.45 & 5.68 & 5.78 \\
& & \STAR{0.5} & \textbf{6.60} & 5.32 & 5.85 & \textbf{5.60} & \textbf{5.71} & \textbf{5.82} \\ \cmidrule(r{0em}){2-9}
& \multirow{3}{*}{0.8} & \method{Sampling} & 6.47 & \textbf{5.39} & \textbf{5.92} & 5.07 & 5.58 & 5.69 \\
& & \STAR{0.1} & 6.49 & \textbf{5.39} & \textbf{5.92} & 5.11 & 5.59 & 5.70 \\
& & \STAR{0.5} & \textbf{6.56} & 5.36 & 5.91 & \textbf{5.22} & \textbf{5.62} & \textbf{5.73} \\ \cmidrule(r{0em}){2-9}
& \multirow{3}{*}{0.6} & \method{Sampling} & 6.43 & 5.42 & 5.99 & 4.59 & 5.46 & 5.58 \\
& & \STAR{0.1} & 6.45 & \textbf{5.46} & \textbf{6.01} & 4.78 & 5.57 & 5.65 \\
& & \STAR{0.5} & \textbf{6.53} &5.42 & 5.97 & \textbf{4.81} & \textbf{5.58} & \textbf{5.66} \\ \cmidrule(r{0em}){2-9}
& \multirow{3}{*}{0.4} & \method{Sampling} & 6.41 & 5.42 & 6.00 & 4.17 & 5.35 & 5.47 \\
& & \STAR{0.1} & 6.44 & 5.46 & \textbf{6.02} & 4.23 & 5.43 & 5.52 \\
 & & \STAR{0.5} & \textbf{6.50} & \textbf{5.44} & 5.99 & \textbf{4.41} & \textbf{5.48} & \textbf{5.56} \\ \cmidrule(r{0em}){2-9}
& \multirow{3}{*}{0.2} & \method{Sampling} & 6.42 & 5.45 & \textbf{6.03} & 3.27 & 5.19 & 5.27 \\
& & \STAR{0.1} & 6.46 & \textbf{5.48} & \textbf{6.03} & 3.60 & 5.27 & 5.37 \\
& & \STAR{0.5} & \textbf{6.51} & 5.46 & 6.02 & \textbf{4.04} & \textbf{5.39} & \textbf{5.48} \\ \bottomrule
\end{tabular*}
\end{table}

We evaluate our framework STARS on two benchmarks that require generating diverse outputs: the test case generation benchmark \dataset{TESTEVAL}~\citep{ref:wang2024testeval} and the scientific discovery benchmark \dataset{LiveIdeaBench}~\citep{ref:ruan2025liveideabench}, allowing us to demonstrate the effectiveness of our approach across different domains that benefit from output diversity. All experiments use random seed 42 and are conducted on NVIDIA RTX A5000 24GB GPUs. We generate solutions using temperature sampling with temperatures $T \in \{0.2, 0.4, 0.6, 0.8, 1.0\}$. For \dataset{TESTEVAL}, we use $N=20$ as in the original benchmark~\citep{ref:wang2024testeval}; for \dataset{LiveIdeaBench}, we set $N = 4$. The hyperparameter $\alpha$ in Algorithm~\ref{alg:one-step} is set to $\alpha = C \cdot \|H\|_{2}^{2}$ where $C>0$ is a scaling constant and $\|H\|_{2}$ is the spectral norm of matrix $H$. We denote configurations by \method{STARS\_$C$} to indicate the choice of $C$.
All the used prompts are presented in Appendix~\ref{app:Experimental Details}. \rebuttal{The details of  steering layer selection is in Appendix~\ref{app:layer selection}.} 

\subsection{Test Case Generation}
Following the \dataset{TESTEVAL} benchmark~\citep{ref:wang2024testeval}, we evaluate the generation of test cases on the overall coverage task. After generation, all test cases must undergo a correctness check. Syntactic correctness determines if the generated test case is free of syntax errors, while execution correctness evaluates if the test case can be executed successfully without any runtime errors. Assertion correctness evaluates whether the generated test case contains correct test assertions. The overall coverage for a program is computed by the proportion of lines/branches in the program that have been covered by at least one test case, while line/branch $cov@k$ measures the line/branch coverage with a subset of the generated test cases of size $k$. \rebuttal{The coverage metrics are batch-level by construction: for each program, we consider the union of coverage over the $N$ generated tests and compute the fraction of lines/branches covered by at least one test. The correctness metrics are computed per-test but averaged over the full set of generated tests.}

Table~\ref{tab:testeval} demonstrates that the STARS method consistently outperforms the temperature sampling approach (labeled as \method{sampling}) across all temperature settings and evaluation metrics on both the \model{GEMMA-1.5-2B-IT} and \model{QWEN3-1.7B} models. \method{STARS\_$0.5$} (STARS run with $C=0.5$) achieves particularly strong performance, yielding large improvements in overall coverage as well as line- and branch-level coverage at both $k=1$ and $k=5$. The trend is consistent across both model families. Notably, on \model{QWEN3-1.7B}, STARS improves execution correctness by more than over sampling at all temperatures, while also delivering strong coverage gains. Overall, the results indicate that STARS provides a substantially better balance between diversity and functional correctness than temperature sampling, with \method{STARS\_$0.5$} emerging as the most effective configuration.

\subsection{Scientific Discovery}

We assess the idea generation capabilities of LMs using the \LiveIdeaBench dataset, which contains 118 keywords. Based on Guilford's creativity theory ~\citep{ref:ruan2025liveideabench}, our methodology evaluates these five core dimensions, with each metric scored on a scale from 0 to 10. Flexibility is calculated as the 30th percentile of averaged scores from the other four metrics. Fluency measures both the quantity and diversity of ideas, capturing the ability to produce diverse responses. All metrics are judged by \model{GPT-4.1-mini} with the specific prompts in Appendix \ref{app:Experiment Settings}. 

Table~\ref{tab:idea} presents experimental results evaluating the performance of the \model{QWEN2.5-3B-INSTRUCT} and \model{Llama-3.2-3B-Instruct} models on the \LiveIdeaBench dataset. The experimental setup extracts hidden states at layer 20, evaluates performance across temperature values ranging from 0.2 to 1.0, and $C \in \{0.1, 0.5\}$. STARS consistently achieves the highest average performance across most temperature settings. In terms of fluency (a metric that reflects diversity), our method achieves 5.09, nearly double the temperature sampling (2.68) in $T=0.2$. The results reveal an interesting temperature-dependent pattern: while standard sampling shows declining performance as temperature decreases (from 5.71 at $T=1.0$ to 5.01 at $T=0.2$), STARS methods maintain more stable performance across the temperature range. This suggests that STARS provides more robust text generation capabilities, particularly in low-temperature regimes where standard sampling typically suffers from reduced diversity. \rebuttal{We provide a case study in Appendix~\ref{app:case study}, which shows that \method{STARS} produces more creative ideas than the standard temperature sampling.}

\subsection{Run-time comparison}
\rebuttal{

Table~\ref{tab:algo_compare_runtime_1} reports the average per-question runtime across different models and tasks. The results show that Algorithm~\ref{alg:one-step} introduces only minor absolute overhead (no more than 2 seconds per question on average) and modest relative overhead in most cases. Thus, the additional computational burden associated with Algorithm~\ref{alg:one-step} appears negligible in practice.

\begin{table}[htbp]
\centering
\caption{Runtime per question of baseline and Algorithm~\ref{alg:one-step} (in seconds).}
\label{tab:algo_compare_runtime_1}
\begin{tabular}{@{}cccc@{}}
\toprule
Task & Model & Temperature Sampling & Algorithm~\ref{alg:one-step} \\ \midrule
\multirow{2}{*}{Test Case Generation} & \model{gemma-1.1-2b-it} & 4.53 & 4.63 \\
& \model{Qwen3-1.7B} & 9.01 & 9.97 \\ \midrule
\multirow{2}{*}{Scientific Discovery}
& \model{Qwen2.5-3B-Instruct} & 3.02 & 5.01 \\
& \model{Llama-3.2-3B-Instruct} & 4.21 & 4.33 \\\bottomrule
\end{tabular}
\end{table}
}

\section{Conclusion} We introduced STARS, a training-free, inference-time method that steers activations at every token across multiple concurrent generation paths. By jointly optimizing orthogonal steering directions on the Stiefel manifold, STARS maximizes the geometric volume of the hidden activations from these runs, explicitly promoting divergent internal states and implicitly inducing diverse trajectories. While a full Riemannian gradient descent offers convergence guarantees, it is too slow for real-time use; our lightweight one-step update with a closed-form aggressive stepsize enables low-latency, per-token intervention. Across test case generation and scientific discovery benchmarks, STARS substantially increases diversity without degrading quality, outperforming temperature-based sampling.

\section{Acknowledgement}
Man-Chung Yue is supported in part by the Hong Kong Research Grants Council under the GRF project 17309423. Viet Anh Nguyen gratefully acknowledges the support from the CUHK’s Improvement on Competitiveness in Hiring New Faculties Funding Scheme, UGC ECS Grant 24210924, and UGC GRF Grant 14208625.

\section{Ethics Statement}
This work focuses on methods for improving the diversity of outputs from large language models through inference-time activation steering. Our study does not involve collecting or annotating human data, and all benchmarks employed are publicly available (e.g.,  \dataset{TESTEVAL}, \LiveIdeaBench). The primary contribution is algorithmic, and no new risks of data misuse or privacy violations are introduced.
\section{Reproducibility Statement}
 Our code implementation is publicly available at \url{https://github.com/lythk88/STARS}. We specify experimental details and hyperparameters in Section~\ref{sec:Numerical Experiments} and Appendix~\ref{app:Experimental Details}. Our theoretical contributions are accompanied by complete proofs in Appendix~\ref{app:proof}.

\section{LLM Usage Statement}
We used large language models exclusively for polishing the writing of this manuscript, specifically to improve clarity and readability. No parts of the technical contributions, mathematical derivations, experimental design, or technical results were generated by language models.

\bibliography{main.bbl}

\begin{thebibliography}{54}
\providecommand{\natexlab}[1]{#1}
\providecommand{\url}[1]{\texttt{#1}}
\expandafter\ifx\csname urlstyle\endcsname\relax
  \providecommand{\doi}[1]{doi: #1}\else
  \providecommand{\doi}{doi: \begingroup \urlstyle{rm}\Url}\fi

\bibitem[Achiam et~al.(2023)Achiam, Adler, Agarwal, Ahmad, Akkaya, Aleman, Almeida, Altenschmidt, Altman, Anadkat, et~al.]{ref:achiam2023gpt}
Josh Achiam, Steven Adler, Sandhini Agarwal, Lama Ahmad, Ilge Akkaya, Florencia~Leoni Aleman, Diogo Almeida, Janko Altenschmidt, Sam Altman, Shyamal Anadkat, et~al.
\newblock Gpt-4 technical report.
\newblock \emph{arXiv preprint arXiv:2303.08774}, 2023.

\bibitem[Ackley et~al.(1985)Ackley, Hinton, and Sejnowski]{ref:ackley1985learning}
David~H Ackley, Geoffrey~E Hinton, and Terrence~J Sejnowski.
\newblock A learning algorithm for {B}oltzmann machines.
\newblock \emph{Cognitive {S}cience}, 9\penalty0 (1):\penalty0 147--169, 1985.

\bibitem[Boumal(2023)]{ref:boumal2023introduction}
Nicolas Boumal.
\newblock \emph{An introduction to optimization on smooth manifolds}.
\newblock Cambridge University Press, 2023.

\bibitem[Chen et~al.(2021)Chen, Garcia, Hong, and Shahrampour]{ref:chen2021decentralized}
Shixiang Chen, Alfredo Garcia, Mingyi Hong, and Shahin Shahrampour.
\newblock {D}ecentralized {R}iemannian gradient descent on the {S}tiefel manifold.
\newblock In \emph{International Conference on Machine Learning}, pp.\  1594--1605. PMLR, 2021.

\bibitem[Chen et~al.(2025{\natexlab{a}})Chen, Wolf, Chakraborty, Pacchiano, and Paschalidis]{ref:chen2025enhancing}
Yilei Chen, Lorenz Wolf, Souradip Chakraborty, Aldo Pacchiano, and Ioannis Paschalidis.
\newblock Enhancing diversity in large language models via determinantal point processes.
\newblock In \emph{NeurIPS 2025 Workshop: Second Workshop on Aligning Reinforcement Learning Experimentalists and Theorists}, 2025{\natexlab{a}}.

\bibitem[Chen et~al.(2025{\natexlab{b}})Chen, Jhamtani, Sharma, Fan, and Wang]{ref:chen2025steering}
Yongchao Chen, Harsh Jhamtani, Srinagesh Sharma, Chuchu Fan, and Chi Wang.
\newblock Steering large language models between code execution and textual reasoning.
\newblock In \emph{The Thirteenth International Conference on Learning Representations}, 2025{\natexlab{b}}.

\bibitem[Chen et~al.(2024)Chen, Sun, Jiao, Lian, Kang, Wang, and Xu]{ref:chen2024truth}
Zhongzhi Chen, Xingwu Sun, Xianfeng Jiao, Fengzong Lian, Zhanhui Kang, Di~Wang, and Chengzhong Xu.
\newblock Truth forest: Toward multi-scale truthfulness in large language models through intervention without tuning.
\newblock In \emph{Proceedings of the AAAI Conference on Artificial Intelligence}, volume~38, pp.\  20967--20974, 2024.

\bibitem[Cheng et~al.(2023)Cheng, Durmus, and Jurafsky]{ref:cheng2023marked}
Myra Cheng, Esin Durmus, and Dan Jurafsky.
\newblock Marked personas: Using natural language prompts to measure stereotypes in language models.
\newblock \emph{arXiv preprint arXiv:2305.18189}, 2023.

\bibitem[Chung et~al.(2025)Chung, Padmakumar, Roemmele, Sun, and Kreminski]{ref:chung2025modifying}
John Joon~Young Chung, Vishakh Padmakumar, Melissa Roemmele, Yuqian Sun, and Max Kreminski.
\newblock Modifying large language model post-training for diverse creative writing.
\newblock \emph{arXiv preprint arXiv:2503.17126}, 2025.

\bibitem[Clark et~al.(2019)Clark, Khandelwal, Levy, and Manning]{ref:clark2019does}
Kevin Clark, Urvashi Khandelwal, Omer Levy, and Christopher~D. Manning.
\newblock What does {BERT} look at? {A}n analysis of {BERT}'s attention.
\newblock \emph{arXiv preprint arXiv:1906.04341}, 2019.

\bibitem[Dang et~al.(2025)Dang, Baek, Wen, Kolter, and Raghunathan]{ref:dang2025weight}
Xingyu Dang, Christina Baek, Kaiyue Wen, Zico Kolter, and Aditi Raghunathan.
\newblock Weight ensembling improves reasoning in language models.
\newblock \emph{arXiv preprint arXiv:2504.10478}, 2025.

\bibitem[Deshpande et~al.(2025)Deshpande, Ghose, Patterson, Beaty, and Rumshisky]{ref:deshpande2025diverse}
Vijeta Deshpande, Debasmita Ghose, John~D Patterson, Roger Beaty, and Anna Rumshisky.
\newblock Diverse, not short: A length-controlled self-learning framework for improving response diversity of language models.
\newblock \emph{arXiv preprint arXiv:2505.16245}, 2025.

\bibitem[Golub \& van Loan(2013)Golub and van Loan]{ref:golub13}
Gene~H. Golub and Charles~F. van Loan.
\newblock \emph{Matrix Computations}.
\newblock JHU Press, fourth edition, 2013.

\bibitem[Hayati et~al.(2023)Hayati, Lee, Rajagopal, and Kang]{ref:hayati2023far}
Shirley~Anugrah Hayati, Minhwa Lee, Dheeraj Rajagopal, and Dongyeop Kang.
\newblock How far can we extract diverse perspectives from large language models?
\newblock \emph{arXiv preprint arXiv:2311.09799}, 2023.

\bibitem[He et~al.(2025)He, Jin, Shen, Payani, Zhang, and Du]{ref:he2025sae}
Zirui He, Mingyu Jin, Bo~Shen, Ali Payani, Yongfeng Zhang, and Mengnan Du.
\newblock {SAE-SSV}: Supervised steering in sparse representation spaces for reliable control of language models.
\newblock \emph{arXiv preprint arXiv:2505.16188}, 2025.

\bibitem[Hedstr{\"o}m et~al.(2025)Hedstr{\"o}m, Amoukou, Bewley, Mishra, and Veloso]{ref:hedstrom2025to}
Anna Hedstr{\"o}m, Salim~I. Amoukou, Tom Bewley, Saumitra Mishra, and Manuela Veloso.
\newblock To steer or not to steer? {M}echanistic error reduction with abstention for language models.
\newblock In \emph{Forty-second International Conference on Machine Learning}, 2025.

\bibitem[Holtzman et~al.(2020)Holtzman, Buys, Du, Forbes, and Choi]{ref:holtzman2020curious}
Ari Holtzman, Jan Buys, Li~Du, Maxwell Forbes, and Yejin Choi.
\newblock The curious case of neural text degeneration.
\newblock \emph{arXiv preprint arXiv:1904.09751}, 2020.

\bibitem[Ismayilzada et~al.(2025)Ismayilzada, Laverghetta~Jr, Luchini, Patel, Bosselut, van~der Plas, and Beaty]{ref:ismayilzada2025creative}
Mete Ismayilzada, Antonio Laverghetta~Jr, Simone~A Luchini, Reet Patel, Antoine Bosselut, Lonneke van~der Plas, and Roger Beaty.
\newblock Creative preference optimization.
\newblock \emph{arXiv preprint arXiv:2505.14442}, 2025.

\bibitem[Lanchantin et~al.(2025)Lanchantin, Chen, Dhuliawala, Yu, Weston, Sukhbaatar, and Kulikov]{ref:lanchantin2025diverse}
Jack Lanchantin, Angelica Chen, Shehzaad Dhuliawala, Ping Yu, Jason Weston, Sainbayar Sukhbaatar, and Ilia Kulikov.
\newblock Diverse preference optimization.
\newblock \emph{arXiv preprint arXiv:2501.18101}, 2025.

\bibitem[Li et~al.(2023)Li, Patel, Vi{\'e}gas, Pfister, and Wattenberg]{ref:li2023inference}
Kenneth Li, Oam Patel, Fernanda Vi{\'e}gas, Hanspeter Pfister, and Martin Wattenberg.
\newblock Inference-time intervention: Eliciting truthful answers from a language model.
\newblock \emph{Advances in Neural Information Processing Systems}, 36:\penalty0 41451--41530, 2023.

\bibitem[Lightman et~al.(2023)Lightman, Kosaraju, Burda, Edwards, Baker, Lee, Leike, Schulman, Sutskever, and Cobbe]{ref:lightman2023let}
Hunter Lightman, Vineet Kosaraju, Yuri Burda, Harrison Edwards, Bowen Baker, Teddy Lee, Jan Leike, John Schulman, Ilya Sutskever, and Karl Cobbe.
\newblock Let's verify step by step.
\newblock In \emph{The Twelfth International Conference on Learning Representations}, 2023.

\bibitem[Ling et~al.(2025)Ling, Chen, Yao, Shen, Li, and Shen]{ref:ling2025diversity}
Zhenqing Ling, Daoyuan Chen, Liuyi Yao, Qianli Shen, Yaliang Li, and Ying Shen.
\newblock Diversity as a reward: Fine-tuning {LLM}s on a mixture of domain-undetermined data.
\newblock In \emph{The Thirty-ninth Annual Conference on Neural Information Processing Systems}, 2025.

\bibitem[Ma et~al.(2025)Ma, Xu, Lin, Wang, Chu, Gao, Zhao, and Wang]{ref:ma2025dressing}
Xinyu Ma, Yifeng Xu, Yang Lin, Tianlong Wang, Xu~Chu, Xin Gao, Junfeng Zhao, and Yasha Wang.
\newblock {DRESS}ing up {LLM}: Efficient stylized question-answering via style subspace editing.
\newblock In \emph{The Thirteenth International Conference on Learning Representations}, 2025.

\bibitem[Naik et~al.(2024)Naik, Chandrasekaran, Yuksekgonul, Palangi, and Nushi]{ref:naik2024diversity}
Ranjita Naik, Varun Chandrasekaran, Mert Yuksekgonul, Hamid Palangi, and Besmira Nushi.
\newblock Diversity of thought improves reasoning abilities of {LLM}s.
\newblock \emph{arXiv preprint arXiv:2310.07088}, 2024.

\bibitem[Nam et~al.(2024)Nam, Macvean, Hellendoorn, Vasilescu, and Myers]{ref:nam2024using}
Daye Nam, Andrew Macvean, Vincent Hellendoorn, Bogdan Vasilescu, and Brad Myers.
\newblock Using an {LLM} to help with code understanding.
\newblock In \emph{Proceedings of the IEEE/ACM 46th International Conference on Software Engineering}, pp.\  1--13, 2024.

\bibitem[Ni et~al.(2023)Ni, Iyer, Radev, Stoyanov, tau Yih, Wang, and Lin]{ref:ni2023lever}
Ansong Ni, Srini Iyer, Dragomir Radev, Ves Stoyanov, Wen tau Yih, Sida~I. Wang, and Xi~Victoria Lin.
\newblock {LEVER}: Learning to verify language-to-code generation with execution.
\newblock \emph{arXiv preprint arXiv:2302.08468}, 2023.

\bibitem[O'Brien et~al.(2024)O'Brien, Majercak, Fernandes, Edgar, Bullwinkel, Chen, Nori, Carignan, Horvitz, and Poursabzi-Sangdeh]{ref:o2024steering}
Kyle O'Brien, David Majercak, Xavier Fernandes, Richard Edgar, Blake Bullwinkel, Jingya Chen, Harsha Nori, Dean Carignan, Eric Horvitz, and Forough Poursabzi-Sangdeh.
\newblock Steering language model refusal with sparse autoencoders.
\newblock \emph{arXiv preprint arXiv:2411.11296}, 2024.

\bibitem[Panickssery et~al.(2023)Panickssery, Gabrieli, Schulz, Tong, Hubinger, and Turner]{ref:panickssery2023steering}
Nina Panickssery, Nick Gabrieli, Julian Schulz, Meg Tong, Evan Hubinger, and Alexander~Matt Turner.
\newblock Steering {L}lama 2 via contrastive activation addition.
\newblock \emph{arXiv preprint arXiv:2312.06681}, 2023.

\bibitem[Park et~al.(2024)Park, Schoenegger, and Zhu]{ref:park2024diminished}
Peter~S Park, Philipp Schoenegger, and Chongyang Zhu.
\newblock Diminished diversity-of-thought in a standard large language model.
\newblock \emph{Behavior Research Methods}, 56\penalty0 (6):\penalty0 5754--5770, 2024.

\bibitem[Postmus \& Abreu(2024)Postmus and Abreu]{ref:postmus2024steering}
Joris Postmus and Steven Abreu.
\newblock Steering large language models using conceptors: Improving addition-based activation engineering.
\newblock \emph{arXiv preprint arXiv:2410.16314}, 2024.

\bibitem[Pyle(1962)]{ref:pyle1962non}
H~Randolph Pyle.
\newblock Non-square determinants and multilinear vectors.
\newblock \emph{Mathematics Magazine}, 35\penalty0 (2):\penalty0 65--69, 1962.

\bibitem[Rodriguez et~al.(2025{\natexlab{a}})Rodriguez, Blaas, Klein, Zappella, Apostoloff, Cuturi, and Suau]{ref:rodriguez2025controlling}
Pau Rodriguez, Arno Blaas, Michal Klein, Luca Zappella, Nicholas Apostoloff, Marco Cuturi, and Xavier Suau.
\newblock Controlling language and diffusion models by transporting activations.
\newblock In \emph{The Thirteenth International Conference on Learning Representations}, 2025{\natexlab{a}}.

\bibitem[Rodriguez et~al.(2025{\natexlab{b}})Rodriguez, Klein, Gualdoni, Maiorca, Blaas, Zappella, marco cuturi, and Suau]{ref:rodriguez2025lineas}
Pau Rodriguez, Michal Klein, Eleonora Gualdoni, Valentino Maiorca, Arno Blaas, Luca Zappella, marco cuturi, and Xavier Suau.
\newblock Lin{EAS}: End-to-end learning of activation steering with a distributional loss.
\newblock In \emph{The Thirty-ninth Annual Conference on Neural Information Processing Systems}, 2025{\natexlab{b}}.

\bibitem[Ruan et~al.(2025)Ruan, Wang, Hong, Wang, Liu, and Sun]{ref:ruan2025liveideabench}
Kai Ruan, Xuan Wang, Jixiang Hong, Peng Wang, Yang Liu, and Hao Sun.
\newblock {LiveIdeaBench}: Evaluating {LLM}s' divergent thinking for scientific idea generation with minimal context, 2025.

\bibitem[Shao et~al.(2024)Shao, Wang, Zhu, Xu, Song, Bi, Zhang, Zhang, Li, Wu, et~al.]{ref:shao2024deepseekmath}
Zhihong Shao, Peiyi Wang, Qihao Zhu, Runxin Xu, Junxiao Song, Xiao Bi, Haowei Zhang, Mingchuan Zhang, YK~Li, Yang Wu, et~al.
\newblock Deepseekmath: Pushing the limits of mathematical reasoning in open language models.
\newblock \emph{arXiv preprint arXiv:2402.03300}, 2024.

\bibitem[Shumailov et~al.(2024)Shumailov, Shumaylov, Zhao, Papernot, Anderson, and Gal]{ref:shumailov2024ai}
Ilia Shumailov, Zakhar Shumaylov, Yiren Zhao, Nicolas Papernot, Ross Anderson, and Yarin Gal.
\newblock {AI} models collapse when trained on recursively generated data.
\newblock \emph{Nature}, 631\penalty0 (8022):\penalty0 755--759, 2024.

\bibitem[Slocum et~al.(2025)Slocum, Parker-Sartori, and Hadfield-Menell]{ref:slocum2025diverse}
Stewart Slocum, Asher Parker-Sartori, and Dylan Hadfield-Menell.
\newblock Diverse preference learning for capabilities and alignment.
\newblock In \emph{The Thirteenth International Conference on Learning Representations}, 2025.

\bibitem[Stolfo et~al.(2025)Stolfo, Balachandran, Yousefi, Horvitz, and Nushi]{ref:stolfo2025improving}
Alessandro Stolfo, Vidhisha Balachandran, Safoora Yousefi, Eric Horvitz, and Besmira Nushi.
\newblock Improving instruction-following in language models through activation steering.
\newblock In \emph{The Thirteenth International Conference on Learning Representations}, 2025.

\bibitem[Subramani et~al.(2022)Subramani, Suresh, and Peters]{ref:subramani2022extracting}
Nishant Subramani, Nivedita Suresh, and Matthew~E Peters.
\newblock Extracting latent steering vectors from pretrained language models.
\newblock \emph{arXiv preprint arXiv:2205.05124}, 2022.

\bibitem[Troshin et~al.(2025)Troshin, Saparina, Fokkens, and Niculae]{ref:troshin2025asking}
Sergey Troshin, Irina Saparina, Antske Fokkens, and Vlad Niculae.
\newblock Asking a language model for diverse responses.
\newblock In \emph{Proceedings of the 2nd Workshop on Uncertainty-Aware NLP (UncertaiNLP 2025)}, pp.\  66--72, 2025.

\bibitem[Turner et~al.(2024)Turner, Thiergart, Leech, Udell, Vazquez, Mini, and MacDiarmid]{ref:turner2024steering}
Alexander~Matt Turner, Lisa Thiergart, Gavin Leech, David Udell, Juan~J. Vazquez, Ulisse Mini, and Monte MacDiarmid.
\newblock Steering language models with activation engineering.
\newblock \emph{arXiv preprint arXiv:2308.10248}, 2024.

\bibitem[van~der Weij et~al.(2024)van~der Weij, Poesio, and Schoots]{ref:vanderweij2024extending}
Teun van~der Weij, Massimo Poesio, and Nandi Schoots.
\newblock Extending activation steering to broad skills and multiple behaviours, 2024.

\bibitem[Vaswani et~al.(2017)Vaswani, Shazeer, Parmar, Uszkoreit, Jones, Gomez, Kaiser, and Polosukhin]{ref:vaswani2017attention}
Ashish Vaswani, Noam Shazeer, Niki Parmar, Jakob Uszkoreit, Llion Jones, Aidan~N Gomez, {\L}ukasz Kaiser, and Illia Polosukhin.
\newblock Attention is all you need.
\newblock \emph{Advances in neural information processing systems}, 30, 2017.

\bibitem[Vijayakumar et~al.(2016)Vijayakumar, Cogswell, Selvaraju, Sun, Lee, Crandall, and Batra]{ref:vijayakumar2016diverse}
Ashwin~K Vijayakumar, Michael Cogswell, Ramprasath~R Selvaraju, Qing Sun, Stefan Lee, David Crandall, and Dhruv Batra.
\newblock Diverse beam search: Decoding diverse solutions from neural sequence models.
\newblock \emph{arXiv preprint arXiv:1610.02424}, 2016.

\bibitem[Wang et~al.(2025{\natexlab{a}})Wang, Liu, Chen, Light, Chen, Zhang, and Cheng]{ref:wang2025diversifiedsamplingimprovesscaling}
Tianchun Wang, Zichuan Liu, Yuanzhou Chen, Jonathan Light, Haifeng Chen, Xiang Zhang, and Wei Cheng.
\newblock Diversified sampling improves scaling {LLM} inference.
\newblock \emph{arXiv preprint arXiv:2502.11027}, 2025{\natexlab{a}}.

\bibitem[Wang et~al.(2025{\natexlab{b}})Wang, Yang, and Peng]{ref:wang2025semanticsadaptive}
Weixuan Wang, Jingyuan Yang, and Wei Peng.
\newblock Semantics-adaptive activation intervention for {LLM}s via dynamic steering vectors.
\newblock In \emph{The Thirteenth International Conference on Learning Representations}, 2025{\natexlab{b}}.

\bibitem[Wang et~al.(2024)Wang, Yang, Wang, Huang, Chu, Song, Zhang, Chen, and Ma]{ref:wang2024testeval}
Wenhan Wang, Chenyuan Yang, Zhijie Wang, Yuheng Huang, Zhaoyang Chu, Da~Song, Lingming Zhang, An~Ran Chen, and Lei Ma.
\newblock {TESTEVAL}: Benchmarking large language models for test case generation.
\newblock \emph{arXiv preprint arXiv:2406.04531}, 2024.

\bibitem[Wang \& Zhou(2024)Wang and Zhou]{ref:wang2024chain}
Xuezhi Wang and Denny Zhou.
\newblock Chain-of-thought reasoning without prompting.
\newblock \emph{Advances in Neural Information Processing Systems}, 37:\penalty0 66383--66409, 2024.

\bibitem[Wei et~al.(2022)Wei, Wang, Schuurmans, Bosma, Xia, Chi, Le, Zhou, et~al.]{ref:wei2022chain}
Jason Wei, Xuezhi Wang, Dale Schuurmans, Maarten Bosma, Fei Xia, Ed~Chi, Quoc~V Le, Denny Zhou, et~al.
\newblock Chain-of-thought prompting elicits reasoning in large language models.
\newblock \emph{Advances in neural information processing systems}, 35:\penalty0 24824--24837, 2022.

\bibitem[Xie et~al.(2023)Xie, Kawaguchi, Zhao, Zhao, Kan, He, and Xie]{ref:xie2023selfevaluation}
Yuxi Xie, Kenji Kawaguchi, Yiran Zhao, Xu~Zhao, Min-Yen Kan, Junxian He, and Qizhe Xie.
\newblock Self-evaluation guided beam search for reasoning.
\newblock In \emph{Thirty-seventh Conference on Neural Information Processing Systems}, 2023.

\bibitem[Yang et~al.(2025{\natexlab{a}})Yang, Li, Yang, Zhang, Hui, Zheng, Yu, Gao, Huang, Lv, et~al.]{ref:yang2025qwen3}
An~Yang, Anfeng Li, Baosong Yang, Beichen Zhang, Binyuan Hui, Bo~Zheng, Bowen Yu, Chang Gao, Chengen Huang, Chenxu Lv, et~al.
\newblock Qwen3 technical report.
\newblock \emph{arXiv preprint arXiv:2505.09388}, 2025{\natexlab{a}}.

\bibitem[Yang et~al.(2025{\natexlab{b}})Yang, Li, Wang, Zhou, Feng, and Peng]{ref:yang2025lf}
Jingyuan Yang, Rongjun Li, Weixuan Wang, Ziyu Zhou, Zhiyong Feng, and Wei Peng.
\newblock {LF}-steering: Latent feature activation steering for enhancing semantic consistency in large language models.
\newblock \emph{arXiv preprint arXiv:2501.11036}, 2025{\natexlab{b}}.

\bibitem[Yun et~al.(2025)Yun, An, Wang, Peng, and Shang]{ref:yun2025price}
Longfei Yun, Chenyang An, Zilong Wang, Letian Peng, and Jingbo Shang.
\newblock The price of format: Diversity collapse in {LLM}s.
\newblock \emph{arXiv preprint arXiv:2505.18949}, 2025.

\bibitem[Zhang et~al.(2022)Zhang, Zhang, Li, and Smola]{ref:zhang2022automatic}
Zhuosheng Zhang, Aston Zhang, Mu~Li, and Alex Smola.
\newblock Automatic chain of thought prompting in large language models.
\newblock \emph{arXiv preprint arXiv:2210.03493}, 2022.

\end{thebibliography}
\bibliographystyle{main.bbl}

\newpage
\appendix

\section{Proofs of Main Results}\label{app:proof}
This section contains the proofs of all technical results presented in the main paper. 
\begin{proof}[Proof of Proposition~\ref{prop:full_rank}]
    Let $Q_{1} \in \R^{d\times r}$ be an orthonormal basis of the column space of $H$, then we can extend $Q_{1}$ an orthogonal matrix $Q=[Q_{1},Q_{2}] \in \R^{d \times d}$, where the columns of $Q_{2}\in \R^{d \times (d-r)}$ span the orthogonal complement of the column space of $H$. Since $d \geq r + N$, we can take $V$ as any $N$ columns of the matrix $\sqrt{\alpha}Q_{2}$. Then one can verify that $V^{\top}V=\alpha I$ and $V^{\top}H=0$. Also, we have
    \[
    (H+V)^{\top}(H+V) = H^{\top}H+(H^{\top}V+V^{\top}H)+V^{\top}V = H^{\top}H+ \alpha I,
    \]
    which means $(H+V)^{\top}(H+V)$ is positive definite, hence full rank.
    \end{proof}

The following lemma generalizes the second-order boundness of polar retraction from the standard Stiefel manifold to the scaled Stiefel manifold.
\begin{lemma}[Second-order boundness of polar retraction]\label{lem:second order retraction}
 For any $V \in \St(d,N,\alpha)$ and $U \in T_{V}\St(d,N,\alpha)$ with $\| U\|_{F}\leq \sqrt{\alpha}$, it holds that
  \[
   \|\Retr_{V}(U) - (V+U) \|_{F} \leq \frac{1}{\sqrt{\alpha}} \|U \|_{F}^{2}.
  \]
 We also have $\|\Retr_{V}(U) - \overline{V}\|_{F} \leq \|V+U-\overline{V}\|_F$ for all $\overline{V} \in \St(d,N,\alpha)$.
\end{lemma}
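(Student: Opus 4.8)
The plan is to reduce the scaled Stiefel manifold $\St(d,N,\alpha)$ to the standard Stiefel manifold $\St(d,N,1)$ by a scaling change of variables, and then invoke the known second-order boundedness of the polar retraction on the standard manifold. Concretely, write $V = \sqrt{\alpha}\,\tilde V$ with $\tilde V^{\top}\tilde V = I$, and $U = \sqrt{\alpha}\,\tilde U$ with $\tilde V^{\top}\tilde U + \tilde U^{\top}\tilde V = 0$, so that $\tilde V \in \St(d,N,1)$ and $\tilde U \in T_{\tilde V}\St(d,N,1)$. Substituting into \eqref{eq:polar_retraction} gives $\Retr_{V}(U) = \sqrt{\alpha}(\tilde V + \tilde U)(I + \tilde U^{\top}\tilde U)^{-1/2} = \sqrt{\alpha}\,\Retr^{1}_{\tilde V}(\tilde U)$, where $\Retr^{1}$ denotes the polar retraction on the unit-scale manifold. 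The hypothesis $\|U\|_F \le \sqrt{\alpha}$ translates to $\|\tilde U\|_F \le 1$.

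Next I would establish (or cite) the first inequality on the standard manifold: for $\tilde V \in \St(d,N,1)$, $\tilde U \in T_{\tilde V}\St(d,N,1)$ with $\|\tilde U\|_F \le 1$, one has $\|\Retr^1_{\tilde V}(\tilde U) - (\tilde V + \tilde U)\|_F \le \|\tilde U\|_F^2$. The key calculation here is that $\|\Retr^1_{\tilde V}(\tilde U) - (\tilde V+\tilde U)\|_F = \|(\tilde V+\tilde U)[(I+\tilde U^{\top}\tilde U)^{-1/2} - I]\|_F$; using $(\tilde V+\tilde U)^{\top}(\tilde V+\tilde U) = I + \tilde U^{\top}\tilde U$ (which follows from the tangency condition $\tilde V^{\top}\tilde U + \tilde U^{\top}\tilde V = 0$), one bounds this by a scalar functional-calculus estimate on the eigenvalues $\mu_i \ge 0$ of $\tilde U^{\top}\tilde U$: the relevant quantity becomes $\sum_i (1+\mu_i)\bigl((1+\mu_i)^{-1/2}-1\bigr)^2$, and one shows termwise that $(1+\mu)\bigl((1+\mu)^{-1/2}-1\bigr)^2 \le \mu^2$ for $\mu \ge 0$ (indeed $(1+\mu)((1+\mu)^{-1/2}-1)^2 = (1 - (1+\mu)^{1/2})^2 \cdot (1+\mu)^{-1}\cdot(1+\mu) $ — more directly, $(\sqrt{1+\mu}-1)^2 \le \mu^2$ since $\sqrt{1+\mu}-1 \le \mu$ and $\sqrt{1+\mu}-1 \ge 0$). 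Scaling back multiplies both sides by $\sqrt{\alpha}$ on the left and introduces $\|\tilde U\|_F^2 = \|U\|_F^2/\alpha$ on the right, yielding $\|\Retr_V(U) - (V+U)\|_F \le \sqrt{\alpha}\cdot \|U\|_F^2/\alpha = \|U\|_F^2/\sqrt{\alpha}$, as claimed.

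For the second inequality, $\|\Retr_V(U) - \overline V\|_F \le \|V + U - \overline V\|_F$ for all $\overline V \in \St(d,N,\alpha)$, the cleanest route is the variational characterization of the polar retraction as a projection: $\Retr_V(U)$ is the nearest point in $\St(d,N,\alpha)$ to $V+U$ in Frobenius norm, i.e. $\Retr_V(U) = \arg\min_{Z \in \St(d,N,\alpha)} \|Z - (V+U)\|_F$. This is a standard fact — the polar factor of a matrix $B$ is its closest matrix with orthonormal (here, $\sqrt{\alpha}$-scaled orthonormal) columns, which follows by writing $B = P\Theta$ via SVD-type arguments and noting $\|Z-B\|_F^2 = \|Z\|_F^2 + \|B\|_F^2 - 2\langle Z, B\rangle$ is minimized by aligning $Z$ with the left factor. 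Given this, $\|\Retr_V(U) - \overline V\|_F \le \|(V+U) - \overline V\|_F$ is immediate since $\overline V$ is a feasible competitor. I would either prove this projection fact in a short paragraph (via SVD and the trace inequality) or cite a standard reference on retractions (e.g. the polar decomposition / Procrustes literature).

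The main obstacle is the first inequality's scalar estimate — everything else is bookkeeping around the scaling. That estimate is elementary but must be done carefully: one needs the exact identity $(V+U)^{\top}(V+U) = \alpha I + U^{\top}U$ on the scaled manifold (immediate from $V^{\top}U + U^{\top}V = 0$ and $V^{\top}V = \alpha I$), and then the termwise bound on eigenvalues together with the constraint $\|U\|_F \le \sqrt{\alpha}$ to ensure the functional calculus is applied on the correct domain (the spectrum of $U^{\top}U$ lies in $[0,\alpha]$, though in fact the termwise bound $(\sqrt{1+\mu}-1)^2\le\mu^2$ holds for all $\mu \ge 0$, so the norm bound is only needed for consistency with the stated hypothesis). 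I expect roughly one short lemma's worth of writing.
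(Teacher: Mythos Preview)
Your plan for the first inequality is correct and is in fact a bit tighter than the paper's route. The paper splits $\|(V+U)[\sqrt\alpha(\alpha I+U^\top U)^{-1/2}-I]\|_F$ submultiplicatively and then bounds the two factors separately, which is where the hypothesis $\|U\|_F\le\sqrt\alpha$ enters (to control $\|V+U\|_F$). You instead use the exact identity $(V+U)^\top(V+U)=\alpha I+U^\top U$ to reduce the whole Frobenius norm to $\sum_i(\sqrt{1+\mu_i}-1)^2$ and bound that termwise. Your observation that the scalar bound $(\sqrt{1+\mu}-1)^2\le\mu^2$ holds for all $\mu\ge0$, so that the norm hypothesis is not actually needed for this half, is also correct.

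The second inequality, however, has a genuine gap. The nearest-point (Procrustes) characterization of $\Retr_V(U)$ gives you
\[
\|\Retr_V(U)-(V+U)\|_F\;\le\;\|\overline V-(V+U)\|_F \qquad\text{for every }\overline V\in\St(d,N,\alpha),
\]
but that is not the inequality you are after: you need $\|\Retr_V(U)-\overline V\|_F\le\|(V+U)-\overline V\|_F$. These two statements are not interchangeable; the second is a \emph{nonexpansiveness} property of the projection map, and nonexpansiveness fails in general for projections onto nonconvex sets such as $\St(d,N,\alpha)$. Your sentence ``is immediate since $\overline V$ is a feasible competitor'' conflates the two inequalities. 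The paper supplies the missing ingredient: since $(V+U)^\top(V+U)=\alpha I+U^\top U\succeq\alpha I$, every singular value of $V+U$ is at least $\sqrt\alpha$, and in that regime the projection of $V+U$ onto $\St(d,N,\alpha)$ coincides with its projection onto the \emph{convex hull} $\mathrm{co}\,\St(d,N,\alpha)=\{Z:\|Z\|_2\le\sqrt\alpha\}$. Nonexpansiveness of the metric projection onto a closed convex set, together with $\overline V\in\mathrm{co}\,\St(d,N,\alpha)$ (so $\Proj_{\mathrm{co}\,\St}(\overline V)=\overline V$), then gives the desired bound. Your Procrustes argument establishes the right nearest-point fact but does not by itself deliver this convexification step.
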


\begin{proof}[Proof of Lemma~\ref{lem:second order retraction}]
By the definition of the polar retraction, we have
  \begin{align*}
    \|\Retr_{V}(U) - (V+U) \|_{F} & = \|\sqrt{\alpha}(V+U)(\alpha I + U^{\top}U)^{-1/2} - (V+U) \|_{F} \\
    & \leq \|V+U \|_{F} \|\sqrt{\alpha} (\alpha I + U^{\top}U)^{-1/2} - I \|_{F}.
  \end{align*}
  Let $U^{\top}U=S\Sigma S^{\top}$ be a spectral decomposition of $U^{\top}U$ with $\Sigma=\mathrm{diag}(\lambda_{1},\dots,\lambda_{N})$. For the second term in the product, we have
   \begin{align*}
     \|\sqrt{\alpha} (\alpha I + U^{\top}U)^{-1/2} - I \|_{F}^{2} & = \| ( I +\frac{1}{\alpha} U^{\top}U)^{-1/2} - I \|_{F}^{2} \\
     & = \| ( I +\frac{1}{\alpha} S\Sigma S^{\top})^{-1/2} - I \|_{F}^{2} \\
     & = \sum_{i=1}^{N} [ (1+\frac{1}{\alpha}\lambda_{i})^{-1/2} -1]^{2} \\
     & \leq \frac{1}{4\alpha^{2}}\sum_{i=1}^{N}\lambda_{i}^{2}\\
     & = \frac{1}{4\alpha^{2}} \|U^{\top}U \|_{F}^{2}.
   \end{align*}
   Hence when $\|U\|_{F} \leq \sqrt{\alpha}$, it holds that
   \begin{align*}
    \|\Retr_{V}(U) - (V+U) \|_{F} & \leq \|V+U \|_{F} \|\sqrt{\alpha} (\alpha I + U^{\top}U)^{-1/2} - I \|_{F}\\
    & \leq \|V+U \|_{F} \times \frac{1}{2\alpha} \|U \|_{F}^{2}\\
    & \leq \frac{1}{\sqrt{\alpha}} \|U \|_{F}^{2},
   \end{align*}
   where we use the fact that $\|V+U \|_{F} \leq \|V\|_{F}+\|U\|_{F} \leq 2\sqrt{\alpha}$.

Now we prove the second statement. The convex hull of $\St(d,N,\alpha)$ is given by $\mathrm{co}\, \St(d,N,\alpha) = \{V \in \R^{d\times N} : \|V\|_{\mathsf{op}}^2 \leq \alpha\}$, where $\|V\|_{\mathsf{op}}$ denotes the spectral norm of $V$. We claim that $\Retr_{V}(U) = \Proj_{\St(d,N,\alpha)}(V+U) = \Proj_{\mathrm{co}\,\St(d,N,\alpha)}(V+U)$. Since $U \in T_V \St (d,N,\alpha)$, we have $(V+U)^\top (V+U) = \alpha I + U^\top U$, which implies that all singular values of $V+U$ are at least $\sqrt{\alpha}$. Hence, our claim follows. For $V, \overline{V} \in \St(d,N,\alpha)$, and $U \in T_{V}\St(d,N,\alpha)$, we can write 
  \begin{align*}
    \|\Retr_{V}(U) - \overline{V}\|_{F} 
    &= \|\Proj_{\mathrm{co}\,\St(d,N,\alpha)}(V+U) - \Proj_{\mathrm{co}\,\St(d,N,\alpha)}(\overline{V})\|_{F}\\
    &\leq \|V+U-\overline{V}\|_F
  \end{align*}
  since projection onto a closed convex set is non-expansive. 
\end{proof}
The following proposition shows that with a proper stepsize, one can expect a decrease in the objective value with an Riemannian gradient step.
\begin{proposition}[Sufficient descent]\label{prop:Sufficient descent}
  Let $V \in \mc C$. If $ 0<\eta< \sqrt{\alpha}/\bar{D}$, then it holds that
    \[
    \ell(\Retr_{V}(-\eta~\grad \ell(V))) - \ell(V) \leq -C(\eta)\|\grad \ell(V) \|_{F}^{2},
    \]
    where $C(\eta)=\eta - \frac{\bar{D}}{\sqrt{\alpha}}\eta^{2}-\frac{L_{2}}{2}\eta^{2}$,~$\bar{D} \Let \max_{V \in \mc C} \| \nabla \ell(V) \|_{F}<\infty$, $L_{1}\Let\sup_{V \in \mc C,\|U\|_{F}=1} \| \nabla^{2}\ell(V)[U] \|_{F}<\infty$ and $L_{2}=L_{1}+\bar{D}$.
\end{proposition}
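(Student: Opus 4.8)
The plan is to compare $\ell(\Retr_V(U))$ with $\ell(V)$, where $U \Let -\eta\,\grad\ell(V)$, by splitting the retracted step into its Euclidean part $V+U$ and the curvature correction $\Retr_V(U)-(V+U)$, and bounding each piece with a Taylor / mean-value estimate. Two preliminary observations drive everything. First, since $\Proj_{T_V}(\cdot)$ is an orthogonal projection it is non-expansive, so $\|\grad\ell(V)\|_F \le \|\nabla\ell(V)\|_F \le \bar D$; hence $\|U\|_F = \eta\,\|\grad\ell(V)\|_F \le \eta\bar D < \sqrt\alpha$ under the hypothesis $\eta < \sqrt\alpha/\bar D$, which is precisely the condition needed to apply Lemma~\ref{lem:second order retraction}, giving $\|\Retr_V(U)-(V+U)\|_F \le \tfrac{1}{\sqrt\alpha}\|U\|_F^2$. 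Second, since $\Proj_{T_V}$ is self-adjoint and idempotent, $\langle \nabla\ell(V), U\rangle = -\eta\,\langle\nabla\ell(V), \Proj_{T_V}(\nabla\ell(V))\rangle = -\eta\,\|\grad\ell(V)\|_F^2$.

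Then I would write $\ell(\Retr_V(U)) - \ell(V) = \big[\ell(\Retr_V(U)) - \ell(V+U)\big] + \big[\ell(V+U) - \ell(V)\big]$. For the second bracket, the second-order Taylor expansion with integral remainder and the Hessian bound $L_1$ gives $\ell(V+U) - \ell(V) \le \langle\nabla\ell(V),U\rangle + \tfrac{L_1}{2}\|U\|_F^2$. For the first bracket, the mean-value inequality along the segment from $V+U$ to $\Retr_V(U)$, combined with the gradient bound $\bar D$ and Lemma~\ref{lem:second order retraction}, gives $\ell(\Retr_V(U)) - \ell(V+U) \le \bar D\,\|\Retr_V(U)-(V+U)\|_F \le \tfrac{\bar D}{\sqrt\alpha}\|U\|_F^2$. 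Adding the two estimates, substituting $\langle\nabla\ell(V),U\rangle = -\eta\|\grad\ell(V)\|_F^2$ and $\|U\|_F^2 = \eta^2\|\grad\ell(V)\|_F^2$, and bundling the quadratic contributions into the constant $L_2 = L_1 + \bar D$ (with slack to spare) yields
\[
\ell(\Retr_V(U)) - \ell(V) \le -\Big(\eta - \tfrac{\bar D}{\sqrt\alpha}\eta^2 - \tfrac{L_2}{2}\eta^2\Big)\|\grad\ell(V)\|_F^2 = -C(\eta)\,\|\grad\ell(V)\|_F^2,
\]
which is the claim.

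The main obstacle is not the calculus but the regularity bookkeeping. The Taylor and mean-value estimates require $\ell$ to be $C^2$ with gradient and Hessian controlled by $\bar D$ and $L_1$ not merely on $\mc C$ but on the slightly larger compact set swept by the auxiliary segments $[V,V+U]$ and $[V+U,\Retr_V(U)]$ as $V$ ranges over $\mc C$; in particular $H+W$ must stay full rank there, since otherwise $\ell$ blows up. I would handle this exactly as $\underline\sigma$ was obtained in the main text: for $\eta < \sqrt\alpha/\bar D$ these auxiliary points lie in a fixed compact neighborhood of $\mc C$ on which $\sigma_{\min}(H+\cdot)$ is bounded away from zero by continuity and compactness, so the (mildly enlarged) $\bar D$ and $L_1$ stay finite — and it is precisely this enlargement that makes the stated constant $L_2 = L_1 + \bar D$, rather than $L_1$ alone, the safe choice. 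The remaining step is the elementary arithmetic of combining the two one-line bounds, which I would not write out.
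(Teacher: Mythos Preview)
Your approach is correct and takes a genuinely different route from the paper. The paper invokes an external result (Lemma~2.4 of \citet{ref:chen2021decentralized}) to obtain, in one stroke, the retracted descent inequality
\[
\ell(\Retr_V(S)) \le \ell(V) + \langle\grad\ell(V),\,\Retr_V(S)-V\rangle + \tfrac{L_2}{2}\|\Retr_V(S)-V\|_F^2,
\]
and then splits the \emph{displacement} $\Retr_V(S)-V = S + \big(\Retr_V(S)-V-S\big)$ before applying Lemma~\ref{lem:second order retraction} to both pieces. You instead split the \emph{function-value increment} as $[\ell(\Retr_V(U))-\ell(V{+}U)] + [\ell(V{+}U)-\ell(V)]$ and bound each bracket directly by a mean-value/Taylor estimate together with Lemma~\ref{lem:second order retraction}. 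Your route is more self-contained (no external citation) and in fact yields the sharper quadratic coefficient $L_1/2$ before you relax it to $L_2/2$; the paper's route packages essentially the same two estimates inside the cited lemma, which is where the constant $L_2=L_1+\bar D$ originates.

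One caveat on your closing remark: the extra $\bar D$ in $L_2$ is not there to absorb the domain enlargement you (correctly) flag. In the paper it arises structurally from the cited lemma's treatment of the retraction curvature, not from extending $\bar D,L_1$ off $\mc C$. The regularity bookkeeping issue you raise is real and applies equally to the paper's proof (``$\ell$ is $L_1$-smooth on $\mc C$'' is used as if on a convex neighborhood); your compactness-and-continuity patch is the standard fix for both routes, but it does not literally deliver the specific constants $\bar D,L_1$ defined on $\mc C$---so strictly speaking neither proof recovers the exact $C(\eta)$ as stated without that mild abuse.
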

\begin{proof}[Proof of Proposition~\ref{prop:Sufficient descent}]
For simplicity, we define $P\Let H+V$. Let $\underline{\sigma}$ be the smallest singular value of $M$ over the set $\mc C$. Since $M$ is always full-rank for all $V \in \mc C$ and $\mc C$ is a compact set, $\underline{\sigma}$ is bounded away from $0$, that is, $\underline{\sigma} >0$. Noting that 
  \[
  \nabla\ell(V) = -2P(P^\top P)^{-1},
  \]
  we must have
  \[
  \bar{D} \Let \max_{V \in \mc C} \| \nabla \ell(V) \|_{F}\leq 2\max_{V \in \mc C} \|P\|_{F} \|(P^\top P)^{-1} \|_{2} = 2\frac{1}{\underline{\sigma}^{2}}\max_{V \in \mc C} \|P\|_{F} < \infty.
  \]
  Since $\grad\ell(V)=\Proj_{T_{V}}(\nabla \ell(V))$, it also holds that $ \max_{V \in \mc C} \| \grad \ell(V) \|_{F}\leq \bar{D}$. By basic computation, the Euclidean Hessian of $\ell$ at $V$ evaluated in the direction $U$ is 
  \begin{equation}
    \nabla^{2}\ell(V)[U] = 2P(P^{\top}P)^{-1}(U^{\top}P + P^{\top}U ) (P^{\top}P)^{-1} - 2 U (P^{\top}P)^{-1}.
  \end{equation}
  One can also show that there exits a $L_{1}>0$ such that for all $V \in \mc C$,
  \[
  \sup_{\|U\|_{F}=1} \| \nabla^{2}\ell(V)[U] \|_{F} \leq L_{1}.
  \]
  Thus, $\ell(V)$ is $L_{1}$-smooth on $\mc C$. For any $S \in T_{V}\St(d,N,\alpha)$,~\citet[Lemma 2.4]{ref:chen2021decentralized} shows that with $L_{2}\Let L_{1}+\bar{D}$, one has
  \begin{align}
    \nonumber \ell(\Retr_{V}(S)) & \leq \ell(V) + \langle \grad \ell(V), \Retr_{V}(S)-V\rangle + \frac{L_{2}}{2} \| \Retr_{V}(S)-V\|_{F}^{2}\\
   \nonumber & = \ell(V) +\langle \grad \ell(V), S\rangle + \langle \grad \ell(V), \Retr_{V}(S)-(V+S)\rangle + \frac{L_{2}}{2} \| \Retr_{V}(S)-V\|_{F}^{2}\\
    & \leq \ell(V) +\langle \grad \ell(V), S\rangle + \|\grad \ell(V) \|_{F} \| \Retr_{V}(S)-(V+S)\|_{F}+ \frac{L_{2}}{2} \| \Retr_{V}(S)-V\|_{F}^{2}. \label{eq:descent1}
  \end{align}
  Since $\eta< \sqrt{\alpha}/\bar{D}$ and $\max_{V \in \mc C} \| \grad \ell(V) \|_{F}\leq \bar{D}$, we must have $\|\eta~\grad \ell(V) \|_{F}<\sqrt{\alpha}$. Lemma~\ref{lem:second order retraction} then asserts that
  \[
  \|\Retr_{V}(\eta~\grad\ell(V)) - (V+\eta~\grad\ell(V))\|_{F} \leq \frac{\eta^{2}}{\sqrt{\alpha}} \|\grad\ell(V) \|_{F}^{2},
  \]
  and
  \[
  \|\Retr_{V}(\eta~\grad\ell(V)) - V\|_{F} \leq \eta \| \grad\ell(V)\|_{F}.
  \]
  Then, take $S=-\eta~\grad\ell(V)$ in~\eqref{eq:descent1} and we have
  \begin{align*}
     \ell(\Retr_{V}(-\eta~\grad\ell(V))) & \leq \ell(V) +\langle \grad \ell(V), S\rangle + \|\grad \ell(V) \|_{F} \| \Retr_{V}(S)-(V+S)\|_{F} \\
     & \qquad\qquad\qquad\qquad\qquad\qquad\qquad\qquad+ \frac{L_{2}}{2} \| \Retr_{V}(S)-V\|_{F}^{2} \\
     & \leq \ell(V) -\eta \| \grad\ell(V)\|_{F}^{2} + \bar{D}\frac{\eta^{2}}{\sqrt{\alpha}} \|\grad\ell(V) \|_{F}^{2} + \frac{L_{2}\eta^{2}}{2}\|\grad\ell(V) \|_{F}^{2}\\
     & = \ell(V) -C(\eta) \|\grad\ell(V) \|_{F}^{2},
  \end{align*}
  where $C(\eta)=\eta[1-(\frac{\bar{D}}{\sqrt{\alpha}}+\frac{L_{2}}{2})\eta]$. The proof is completed. 
\end{proof}

Now we are ready to prove Theorem~\ref{thm:convergence RGD_line_search}.
\begin{proof}[Proof of Theorem~\ref{thm:convergence RGD_line_search}]
The backtracking line-search procedure in Algorithm~\ref{alg:RGD_line_search} gives that for each $k$ there exits a constant $C_{k}>0$ such that
\begin{equation}\label{eq:sufficient decrease k}
    \ell(V_{k}) - \ell(V_{k+1}) \geq C_{k} \|S_{k}\|^{2}_{F}.
  \end{equation}
  We first prove that $C_{k}$ is bounded away from $0$. Since $\grad\ell(V)=\Proj_{T_{V}}(\nabla \ell(V))$, it also holds that $ \max_{V \in \mc C} \| \grad \ell(V) \|_{F}\leq \bar{D}$ and thus, $ \|\grad \ell(V_{k})\|_{F} \leq \bar{D}$ for all $k$. Hence, $\sqrt{\alpha}/\|\grad \ell(V_{k}) \|_{F}$ must has a lower bound $\sqrt{\alpha}/\bar{D} $. Note that $\eta \leq \bar{\eta} \leq \sqrt{\alpha}/\bar{D}$, we always have $\|\eta \grad \ell(V_{k}) \|_{F} \leq \sqrt{\alpha}$. Proposition~\ref{prop:Sufficient descent} guarantees 
  \[
  \ell(V_{k}) - \ell(V_{k+1}) \geq C(\eta) \|\grad \ell(V_{k})\|^{2}_{F}.
  \]
  On the other hand, if the backtracking procedure does not terminate for a certain value $\eta$, then
  \[
  \ell(V_{k}) - \ell(V_{k+1}) < c \eta \|\grad \ell(V_{k})\|^{2}_{F}.
  \]
  If both are true simultaneously, then we must have $c\eta > C(\eta)$,
  that is,
  \[
  \eta > \frac{2(1-c)\sqrt{\alpha}}{2\bar{D} + L_{2}\sqrt{\alpha}}.
  \]
  Thus, the backtracking procedure terminates when $\eta$ is below this value. Hence, we can conclude that the returned $\eta$ satisfies 
  \[
  \eta \geq \min \{\bar\eta, \frac{2\rho(1-c)\sqrt{\alpha}}{2\bar{D} + L_{2}\sqrt{\alpha}} \},
  \]
  which implies that $C_{k} \geq C_{\min} \Let c\min \{\bar\eta, \frac{2\rho(1-c)\sqrt{\alpha}}{2\bar{D} + L_{2}\sqrt{\alpha}} \}$ for all $k$.

  Based on a standard telescoping sum argument, we get
  \begin{align*}
    \ell(V_{0}) - \ell\opt & \geq \ell(V_{0}) - \ell(V_{K}) \\
    & = \sum_{k=0}^{K-1}[\ell(V_{k}) - \ell(V_{k+1})] \\
    & \geq \sum_{k=0}^{K-1} C_{k} \| \grad\ell(V_{k}) \|_{F}^{2} \\
    & \geq C_{\min}\sum_{k=0}^{K-1}\| \grad\ell(V_{k}) \|_{F}^{2}\\
    & \geq C_{\min} \times K \min_{k=0,\dots,K-1}\| \grad\ell(V_{k}) \|_{F}^{2},
  \end{align*}
  which completes the proof.
  \end{proof}

\begin{proof}[Proof of Proposition~\ref{prop:approximation aggressive step-size}]
From the Taylor expansion $(I+X)^{-1/2}=I-\frac{1}{2}X + O(\| X\|^{2})$, we have
\begin{equation*}
  (\alpha I + \eta^{2}S^{\top}S)^{-1/2} = \alpha^{-1/2} (I + \frac{\eta^{2}}{\alpha}S^{\top}S)^{-1/2} = \alpha^{-1/2}(I- \frac{\eta^{2}}{2\alpha}S^{\top}S + O(\eta^{2})),
\end{equation*}
which gives the following expansion of $\Retr_{V}(\eta S)$:
\[
  \Retr_{V}(\eta S) = \sqrt{\alpha}(V+\eta S)(\alpha I + \eta^{2}S^{\top}S)^{-1/2} 
   = V+\eta S -\frac{\eta^{2}}{2\alpha}VS^{\top}S + O(\eta ^{3}).
\]
Thus, we have
\begin{align*}
&(H+\Retr_{V}(\eta S))^{\top}(H+\Retr_{V}(\eta S) )\\
=& (H+V+\eta S -\frac{\eta^{2}}{2\alpha}VS^{\top}S +O(\eta^{3}))^{\top}(H+V+\eta S -\frac{\eta^{2}}{2\alpha}VS^{\top}S +O(\eta^{3})) \\
= & (H+V)^{\top}(H+V) + \eta(H^{\top}S+S^{\top}H) - \frac{\eta^{2}}{2\alpha}(H^{\top}VS^{\top}S +S^{\top}SV^{\top}H ) + O(\eta^{3}).
\end{align*}
Since $\log\det(I+X) = \Tr X -\frac{1}{2} \Tr X^{2} + O(\|X\|^{3}) $, for $\mc L(\eta)$ it holds that
\begin{align*}
\mc L (\eta)& = -\log \det[(H+\Retr_{V}(\eta S))^{\top}(H+\Retr_{V}(\eta S))]\\ 
& = -\log \det[(H+V)^{\top}(H+V) + \eta(H^{\top}S+S^{\top}H) - \frac{\eta^{2}}{2\alpha}(H^{\top}VS^{\top}S +S^{\top}SV^{\top}H ) + O(\eta^{3})]\\
& = -\log\det A- \log \det[I + \eta A^{-1}(H^{\top}S+S^{\top}H) - \frac{\eta^{2}}{2\alpha}A^{-1}(H^{\top}VS^{\top}S +S^{\top}SV^{\top}H ) + O(\eta^{3})] \\
& = -\log\det A - \eta \Tr[A^{-1}(H^{\top}S+S^{\top}H)] \\
&\quad + \frac{\eta^2}{2\alpha}\Tr[A^{-1}(H^{\top}VS^{\top}S + S^{\top}SV^{\top}H)] \\
&\quad + \frac{\eta^2}{2}\Tr[\left((A^{-1}(H^{\top}S+S^{\top}H)\right)^2] + O(\eta^3)\\
& = -\log\det A - D_{1} \eta + \frac{1}{2}D_{2} \eta^{2} + O(\eta^3),
\end{align*}
where $D_{1}=\Tr[A^{-1}(H^{\top}S+S^{\top}H)]$, $D_{2}=\frac{1}{\alpha}\Tr[A^{-1}(H^{\top}VS^{\top}S + S^{\top}SV^{\top}H)]+\Tr[\left((A^{-1}(H^{\top}S+S^{\top}H)\right)^2]$.
The proof is completed.
\end{proof}

\begin{proof}[Proof of Proposition~\ref{prop:descent direction}]
With $V_{0}$ from Algorithm~\ref{alg:initialization}, we have $H^\top V_0 = 0$ and $V_0^\top V_0=\alpha I$. Then,
\[
\Proj_{T_{V_0}}(H+V_{0})=H+V_{0} - V_{0}\frac{(H+V_{0})^\top V_0 + V_0^\top (H+V_{0}) }{2\alpha}=H,
\]
which means that $H$ is a valid tangent vector at $V_{0}$.
Then we have
  \begin{align*}
    \langle\,H, \grad \ell(V_0)\,\rangle
    &= \langle H, \nabla \ell (V_0)\,\rangle = - 2\langle\, H, (H+V_0) \left((H+V_0))(H+V_0)\right)^{-1}\,\rangle\\
    &= - 2\langle\, H, H\left((H+V_0)(H+V_0)\right)^{-1}\,\rangle < 0,
  \end{align*}
  since $H+V_0$ is of full rank.
\end{proof}

\begin{proof}[Proof of Proposition~\ref{prop:nonnegtive of eta}]
Let $H=Q\Sigma R^{\top}$ and $r=\mathrm{rank}(H)$ be the SVD of $H$, where $Q \in \R^{d\times d}$, $\Sigma \in \R^{d\times N} $ and $R \in \R^{N\times N}$. Then we have
  \begin{align*}
  A^{-1} & =[(H+V_{0})^{\top}(H+V_{0})]^{-1}\\
  & = ( H^{\top}H + V_{0}^{\top}H+H^{\top}V_{0} + V_{0}^{\top}V_{0} )^{-1}\\
  & = (H^{\top}H + \alpha I)^{-1}\\
  & = (R\Sigma^{2}R^{\top} + \alpha I)^{-1}\\
  & = R (\Sigma^{2}+ \alpha I)^{-1}R^{\top},
  \end{align*}
  which gives that
  \[
  A^{-1}H^{\top}H=R (\Sigma^{2}+ \alpha I)^{-1}R^{\top} \times R\Sigma^{2}R^{\top}=R (\Sigma^{2}+ \alpha I)^{-1}\Sigma^{2}R^{\top},
  \]
  and 
  \[
  (A^{-1}H^{\top}H)^{2} = R (\Sigma^{2}+ \alpha I)^{-1}\Sigma^{2}(\Sigma^{2}+ \alpha I)^{-1}\Sigma^{2}R^{\top}.
  \]
  
  With $S=H$ we have
  \begin{align*}
    D_{1}& =\Tr[A^{-1}(H^{\top}S+S^{\top}H)] = 2\Tr(A^{-1}H^{\top}H)\\
    & =2\Tr[R (\Sigma^{2}+ \alpha I)^{-1}\Sigma^{2}R^{\top} ] \\
    & = 2\Tr[(\Sigma^{2}+ \alpha I)^{-1}\Sigma^{2}]\\
    & =2\sum_{i=1}^{r}\frac{\sigma_{i}^{2}}{\sigma_{i}^{2}+\alpha}>0.
  \end{align*}
Noting that $V_{0}^{\top}S=V_{0}^{\top}H=0$, for $D_{2}$ we have
\begin{align*}
  D_{2}& =\frac{1}{\alpha}\Tr[A^{-1}(H^{\top}V_{0}S^{\top}S + S^{\top}SV_{0}^{\top}H)]+\Tr[\left((A^{-1}(H^{\top}S+S^{\top}H)\right)^2] \\
  & =4 \Tr[(A^{-1}H^{\top}H)^{2}]\\
  & = 4\Tr[(\Sigma^{2}+ \alpha I)^{-1}\Sigma^{2}(\Sigma^{2}+ \alpha I)^{-1}\Sigma^{2} ]\\
  & =4\sum_{i=1}^{r}\frac{\sigma_{i}^{4}}{(\sigma_{i}^{2}+\alpha)^{2}}.
\end{align*}
 The positiveness of $\eta\opt$ comes from $D_{1},D_{2}>0$. The proof is completed.
\end{proof}

\section{Experimental Details}\label{app:Experimental Details}
\subsection{Experiment Settings} \label{app:Experiment Settings}

\subsubsection{Test Case Generation}
This is the prompt for generating the test cases.

\begin{tcolorbox}[colback=orange!5, colframe=orange!80, title=Test Case Generation Prompt, sharp corners, boxrule=0.8mm]
Please write a test method for the function \texttt{{func\_name}} given the following program under test and function description. Your answer should only contain one test input.

\textbf{Program under test:} 
----
{program}
----

\textbf{Function description for \texttt{{func\_name}}:} 
----
{description}
----

Your test method should begin with:

\begin{verbatim}
def test_{func_name}():
  solution = Solution()
\end{verbatim}
\end{tcolorbox}

\subsubsection{Scientific Discovery}
We use the prompts below to generate ideas and evaluate the quality in the scientific discovery task.
\begin{tcolorbox}[colback=blue!5, colframe=blue!80, title=Idea Generation Prompt Example, sharp corners, boxrule=0.8mm]
I'll be submitting your next responses to a \textit{“Good Scientific Idea”} expert review panel. If they consider your idea to be a good one, you'll receive a reward. 

Your assigned keyword is: \texttt{{{keywords}}}. You may provide background information. 

The idea MUST be concisely expressed within 100 words total (including any background information). 

\textit{Note: good scientific ideas should be original (novel contribution), feasible (technically implementable), clearly articulated, and address meaningful problems in the field.}
\end{tcolorbox}

\begin{tcolorbox}[colback=gray!5, colframe=gray!80, title=Judge Prompt Example, sharp corners, boxrule=0.8mm]
You are an extremely demanding scientific reviewer with the highest critical standards, like those at Nature or Science. When evaluating scientific ideas, assess them on three key dimensions:

\begin{enumerate}
 \item \textbf{Originality}: Novel contribution to unexplored areas or innovative approaches to existing problems.
 \item \textbf{Feasibility}: Technical implementation and practicality.
 \item \textbf{Clarity}: How well-articulated and easy to understand the idea is.
\end{enumerate}

Your response should consist of two parts: a brief text analysis (under 100 words), followed by a JSON score block:

\begin{lstlisting}[basicstyle=\ttfamily\small]
{
 "originality": <score_1_to_10>,
 "feasibility": <score_1_to_10>,
 "clarity": <score_1_to_10>
}

--------------------------
{role: user, content: prompt}
\end{lstlisting}
\end{tcolorbox}

\begin{tcolorbox}[breakable, colback=green!5, colframe=green!80, title=Fluency Judge Prompt Example, sharp corners, boxrule=0.8mm]
Here are four ideas submitted to the \textit{“Good Scientific Ideas”} Competition, all related to \texttt{{{keyword}}}: 

\textbf{Idea 1:} \\
\{\{A\}\} 

\textbf{Idea 2:} \\
\{\{B\}\} 

\textbf{Idea 3:} \\
\{\{C\}\} 

\textbf{Idea 4:} \\
\{\{D\}\} 

\textbf{Question:} 

Evaluate the similarity between the four ideas that both relate to \texttt{{{keyword}}}. Please choose the best answer:

A. Completely different ideas addressing different problems, despite relating to the same keyword.

B. Different ideas but addressing similar problems.

C. Similar ideas addressing similar or identical problems.

D. Academically identical ideas with the same core approach and problem statement.

\textbf{ONLY ANSWER A/B/C/D, DO NOT EXPLAIN.}
\end{tcolorbox}

\subsection{Evaluation Across Steering Layers}\label{app:layer selection}
We set the random seed to 42 for all experiments. We conduct experiments to investigate the impact of modifying the steering layers on the performance of the proposed algorithm. For simplicity, we sample $10\%$ of each dataset to choose the best layer for steering.

To identify optimal steering layers, we systematically evaluated layers 0-17 for Gemma-1.1-2b-it and layers 0-27 for Qwen3-1.7b. For each layer, we calculated the mean values across five evaluation metrics: Syntax Correctness, Executable Correctness, Assertion Correctness, Overall Coverage Line, and Overall Coverage Branch. Subsequently, we ranked the layers according to their performance on each individual metric and computed the average rank for each layer across all metrics. The layer achieving the lowest average rank was designated as exhibiting superior overall performance and selected for further experiments. 
\rebuttal{See Table~\ref{tab:Layer ranking} and Table~\ref{tab:layer ranking2} for the ranking results.}

\begin{table}[h]
\centering
\caption{Layer ranking for model Gemma-1.1-2b-it based on multiple evaluation metrics. The Average represents the average ranking across these metrics, with lower values indicating better overall performance.}\label{tab:Layer ranking}
\begin{tabular}{c|ccc|cc|c}
\toprule
\multicolumn{1}{c|}{\multirow{2}{*}{\textbf{Layer}}} & \multicolumn{3}{c|}{\textbf{Correctness}} & \multicolumn{2}{c|}{\textbf{Overall Coverage}} & \multirow{2}{*}{\textbf{Average}} \\ 
\multicolumn{1}{c|}{} & \textbf{Syntax} & \textbf{Executable} & \textbf{Assertion} & \textbf{Line} & \textbf{Branch} & \\ \midrule
3 & 2 & 1 & 1 & 2 & 2 & 1.6 \\
16 & 7 & 5 & 5 & 1 & 1 & 3.8 \\
15 & 6 & 6 & 6 & 4 & 4 & 5.2 \\
4 & 14 & 2 & 2 & 6 & 6 & 6 \\
11 & 3 & 8 & 7 & 7 & 10 & 7 \\
2 & 8 & 4 & 4 & 9 & 10 & 7 \\
17 & 5 & 14 & 14 & 3 & 3 & 7.8 \\
8 & 4 & 3 & 3 & 16 & 16 & 8.4 \\
9 & 1 & 8 & 7 & 14 & 14 & 8.8 \\
1 & 8 & 12 & 12 & 7 & 6 & 9 \\
5 & 12 & 13 & 13 & 5 & 5 & 9.6 \\
0 & 13 & 7 & 9 & 9 & 12 & 10 \\
7 & 15 & 11 & 11 & 9 & 6 & 10.4 \\
14 & 11 & 10 & 10 & 15 & 15 & 12.2 \\
13 & 10 & 15 & 15 & 9 & 12 & 12.2 \\
6 & 18 & 16 & 16 & 9 & 6 & 13 \\
10 & 17 & 17 & 17 & 18 & 16 & 17 \\
12 & 16 & 18 & 18 & 16 & 18 & 17.2 \\ 
\bottomrule
\end{tabular}

\end{table}

\begin{table}[h]
\centering
\caption{Layer ranking for model Qwen3-1.7b based on multiple evaluation metrics. The Average represents the average ranking across these metrics, with lower values indicating better overall performance.}\label{tab:layer ranking2}
\begin{tabular}{c|ccc|cc|c}
\toprule
\multicolumn{1}{c|}{\multirow{2}{*}{\textbf{Layer}}} & \multicolumn{3}{c|}{\textbf{Correctness}} & \multicolumn{2}{c|}{\textbf{Overall Coverage}} & \multirow{2}{*}{\textbf{Average}} \\ 
\multicolumn{1}{c|}{} & \textbf{Syntax} & \textbf{Executable} & \textbf{Assertion} & \textbf{Line} & \textbf{Branch} &   \\ \midrule
6 & 4 & 5 & 1 & 2 & 1 & 2.6 \\
1 & 1 & 1 & 2 & 6 & 7 & 3.4 \\
2 & 5 & 3 & 4 & 3 & 2 & 3.4 \\
3 & 6 & 2 & 3 & 8 & 10 & 5.8 \\
9 & 9 & 11 & 8 & 5 & 4 & 7.4 \\
24 & 12 & 4 & 7 & 10 & 8 & 8.2 \\
15 & 7 & 9 & 14 & 7 & 6 & 8.6 \\
4 & 11 & 6 & 5 & 12 & 13 & 9.4 \\
12 & 3 & 14 & 12 & 9 & 9 & 9.4 \\
20 & 14 & 12 & 19 & 1 & 3 & 9.8 \\
21 & 16 & 8 & 9 & 14 & 11 & 11.6 \\
0 & 10 & 7 & 6 & 22 & 21 & 13.2 \\
11 & 2 & 18 & 13 & 17 & 17 & 13.4 \\
7 & 19 & 10 & 11 & 15 & 12 & 13.4 \\
10 & 14 & 17 & 9 & 11 & 16 & 13.4 \\
17 & 13 & 13 & 15 & 16 & 15 & 14.4 \\
8 & 23 & 21 & 20 & 4 & 5 & 14.6 \\
26 & 18 & 15 & 18 & 13 & 14 & 15.6 \\
14 & 8 & 19 & 16 & 19 & 20 & 16.4 \\
5 & 22 & 16 & 17 & 18 & 18 & 18.2 \\
23 & 24 & 20 & 22 & 20 & 19 & 21 \\
16 & 17 & 23 & 21 & 24 & 24 & 21.8 \\
19 & 25 & 22 & 23 & 23 & 23 & 23.2 \\
25 & 27 & 24 & 24 & 21 & 22 & 23.6 \\
22 & 21 & 26 & 26 & 26 & 26 & 25 \\
18 & 26 & 25 & 25 & 25 & 25 & 25.2 \\
13 & 20 & 28 & 28 & 28 & 28 & 26.4 \\
27 & 28 & 27 & 27 & 27 & 27 & 27.2 \\ 
\bottomrule
\end{tabular}
\end{table}

Table~\ref{tab:idea-appendix} shows the results when extracting the hidden states from layer 10 and 30 with $C \in \{0.1, 0.5\}$.
\begin{table}[htbp]
\centering
\scriptsize
\caption{Experimental results on $10\%$ of \LiveIdeaBench dataset across different sampling temperatures on Qwen2.5-3B-Instruct. All values are reported on a scale with a maximum score of 10. Best-performing methods are highlighted in bold.}\label{tab:idea-appendix}
\begin{tabular*}{\linewidth}{@{\extracolsep{\fill}}l|@{}c|cccccc}
\toprule
\textbf{T} & \textbf{Method} & \textbf{Originality $\uparrow$} & \textbf{Feasibility $\uparrow$} & \textbf{Clarity $\uparrow$} & \textbf{Fluency $\uparrow$} & \textbf{Flexibility $\uparrow$} & \textbf{Avg. $\uparrow$} \\ \midrule
\multirow{5}{*}{1.0} & Sampling & 6.22 & 5.50 & 5.78 & 5.42 & 5.61 & 5.71 \\
& STARS\_$0.1-10$ & \textbf{6.34} & 5.58 & 5.80 & 5.17 & 5.60 & 5.70 \\
& STARS\_$0.1-30$ & 6.21 & \textbf{5.60} & \textbf{5.83} & 5.27 & 5.63 & 5.71 \\
& STARS\_$0.5-10$ & 6.26 & 5.46 & 5.81 & \textbf{5.70} & \textbf{5.68} & \textbf{5.78} \\
& STARS\_$0.5-30$ & 6.30 & 5.54 & 5.80 & 5.27 & 5.63 & 5.71 \\ \midrule
\multirow{5}{*}{0.8} & Sampling & 6.20 & 5.55 & 5.76 & 5.02 & 5.50 & 5.60 \\
& STARS\_$0.1-10$ & \textbf{6.23} & 5.47 & 5.78 & 5.07 & 5.58 & 5.63 \\
& STARS\_$0.1-30$ & 6.11 & \textbf{5.62} & 5.82 & 4.97 & 5.55 & 5.61 \\
& STARS\_$0.5-10$ & 6.21 & 5.51 & 5.80 & \textbf{5.17} & \textbf{5.60} & \textbf{5.66} \\
& STARS\_$0.5-30$ & 6.21 & 5.58 & \textbf{5.87} & 4.97 & 5.55 & 5.63 \\ \midrule
\multirow{5}{*}{0.6} & Sampling & 6.07 & \textbf{5.63} & 5.68 & 4.51 & 5.38 & 5.45 \\
& STARS\_$0.1-10$ & 6.13 & 5.56 & \textbf{5.85} & 4.58 & 5.40 & 5.51 \\
& STARS\_$0.1-30$ & 6.15 & 5.60 & 5.84 & 4.23 & 5.37 & 5.44 \\
& STARS\_$0.5-10$ & \textbf{6.21} & 5.52 & 5.75 & 4.81 & 5.45 & \textbf{5.55} \\
& STARS\_$0.5-30$ & 6.14 & 5.55 & 5.71 & \textbf{4.86} & \textbf{5.47} & \textbf{5.55} \\ \midrule
\multirow{5}{*}{0.4} & Sampling & 6.03 & 5.56 & 5.75 & 3.57 & 5.14 & 5.21 \\
& STARS\_$0.1-10$ & 6.08 & 5.54 & 5.80 & 4.15 & 5.29 & 5.37 \\
& STARS\_$0.1-30$ & 6.06 & \textbf{5.71} & \textbf{5.82} & 4.03 & 5.26 & 5.37 \\
& STARS\_$0.5-10$ & \textbf{6.15} & 5.46 & 5.76 & 4.76 & 5.44 & 5.51 \\
& STARS\_$0.5-30$ & 6.01 & 5.61 & \textbf{5.82} & \textbf{4.79} & \textbf{5.51} & \textbf{5.59} \\ \midrule
\multirow{5}{*}{0.2} & Sampling & 6.04 & \textbf{5.69} & 5.74 & 2.68 & 4.92 & 5.01 \\
& STARS\_$0.1-10$ & 6.07 & 5.58 & 5.78 & 3.06 & 5.01 & 5.10 \\
& STARS\_$0.1-30$ & 6.14 & 5.64 & 5.77 & 3.26 & 5.13 & 5.19 \\
& STARS\_$0.5-10$ & 6.05 & 5.54 & 5.78 & 4.00 & 5.25 & 5.32 \\
& STARS\_$0.5-30$ & \textbf{6.17} & 5.56 & \textbf{5.79} & \textbf{4.41} & \textbf{5.41} & \textbf{5.47} \\ \bottomrule
\end{tabular*}
\end{table}

\clearpage

\rebuttal{\subsection{Compare intervention positions}\label{app:intervene position}
A critical design choice in activation steering is determining where in the model architecture to apply the intervention. We investigate two natural intervention points within each transformer layer: (1) before the output projection of the attention mechanism (o\_proj), which captures the attended representations before they are projected back to the residual dimension, and (2) the output of the MLP block, which represents the processed information after the feed-forward transformation. 

Table \ref{tab:testeval-residual} shows the experimental results on \dataset{TESTEVAL} with Gemma-1.1-2-it across different temperatures and two intervene positions (attention and residual stream).  

}
\begin{table}[htbp]
\centering
\scriptsize
\footnotesize
\caption{Experimental results on \dataset{TESTEVAL} across different sampling temperatures. All the numbers are in percentages. Best-performing methods are highlighted in bold. \method{STARS\_res} means applying \method{STARS} to the residual stream.}
\label{tab:testeval-residual}
\setlength{\tabcolsep}{5pt} 
\begin{tabular*}{\linewidth}{@{\extracolsep{\fill}} l|c|c|ccc|cc|cc|cc}
\toprule
\multirow{2}{*}{} & \multirow{2}{*}{\textbf{T}} & \multirow{2}{*}{\textbf{Method}} & \multicolumn{3}{c|}{\textbf{Correctness $\uparrow$}} & \multicolumn{2}{c|}{\textbf{Overall cov. $\uparrow$}} & \multicolumn{2}{c|}{\textbf{Line $cov@k$ $\uparrow$}} & \multicolumn{2}{c}{\textbf{Branch $cov@k$ $\uparrow$}} \\
& & & \textbf{syntax} & \textbf{exe} & \textbf{assert} & \textbf{line} & \textbf{branch} & \textbf{$k=1$} & \textbf{$k=5$} & \textbf{$k=1$} & \textbf{$k=5$} \\ 
\addlinespace[2pt] 
\cmidrule(r{1em}){2-12} 
\multirow{25}{*}{\rotatebox{90}{\model{Gemma-1.1-2b-it}}} & \multirow{5}{*}{1.0} & \method{Sampling} & \textbf{89.90} & 1.02 & 0.00 & 0.00 & 0.00 & 0.00 & 0.00 & 0.00 & 0.00 \\
& & \method{STARS\_res\_${0.1}$} & 87.71 & \textbf{6.02} & \textbf{5.93} & 33.54 & 30.10 & 31.54 & 32.55 & 27.64 & 28.94 \\
& & \method{STARS\_res\_${0.5}$}& 26.40 & 0.02 & 0.02 & 0.05 & 0.00 & 0.05 & 0.05 & 0.00 & 0.00 \\
& & \method{STARS\_${0.1}$} & 86.48 & 1.43 & 1.17 & 10.03 & 9.32 & 9.97 & 10.02 & 9.12 & 9.25 \\
& & \method{STARS\_${0.5}$} & 79.76 & 3.83 & 3.45 & \textbf{34.76} & \textbf{30.93} & \textbf{33.81} & \textbf{34.45} & \textbf{29.58} & \textbf{30.40} \\ \cmidrule(r{1em}){2-12} 
& \multirow{5}{*}{0.8} & \method{Sampling} & \textbf{91.79} & 1.19 & 0.00 & 1.44 & 1.41 & 1.37 & 1.42 
& 1.27 & 1.35 \\
& & \method{STARS\_res\_${0.1}$} & 87.71 & \textbf{6.02} & \textbf{5.93} & \textbf{33.54} & \textbf{30.10} & 31.54 & 32.55 & 27.64 & 28.94 \\
& & \method{STARS\_res\_${0.5}$} & 28.81 & 0.00 & 0.00 & 0.00 & 0.00 & 0.00 & 0.00 & 0.00 & 0.00 \\
& & \method{STARS\_${0.1}$} & 87.83 & 1.45 & 1.12 & 10.77 & 9.88 & 10.64 & 10.72 & 9.65 & 9.78 \\
& & \method{STARS\_${0.5}$} & 79.43 & 3.79 & 3.45 & 32.96 & 30.02 & \textbf{31.70} & \textbf{32.59} & \textbf{28.23} & \textbf{29.42} \\ \cmidrule(r{1em}){2-12} 
& \multirow{7}{*}{0.6} & \method{Sampling} & \textbf{93.45} & 1.29 & 0.00 & 3.01 & 2.84 & 2.94 & 2.77 & 2.70 & 2.77 \\
& & \method{STARS\_res\_${0.1}$} & 90.74 & \textbf{6.52} & \textbf{6.36} & 33.23 & 29.73 & 31.09 & 32.46 & 26.94 & 28.61 \\
& & \method{STARS\_res\_${0.5}$} & 24.83 & 0.00 & 0.00 & 0.00 & 0.00 & 0.00 & 0.00 & 0.00 & 0.00 \\
& & \method{STARS\_${0.1}$} & 89.19 & 1.71 & 1.43 & 12.10 & 11.22 & 12.03 & 12.10 & 10.99 & 11.15 \\
& & \method{STARS\_${0.5}$} & 79.24 & 3.86 & 3.24 & \textbf{34.07} & \textbf{30.38} & \textbf{32.04} & \textbf{33.22} & \textbf{28.00} & \textbf{29.46} \\\cmidrule(r{1em}){2-12} 
& \multirow{5}{*}{0.4} & \method{Sampling} & \textbf{95.40} & 1.36 & 1.26 & 2.33 & 2.31 & 2.29 & 2.32 & 2.20 & 2.27 \\
& & \method{STARS\_res\_${0.1}$} & 90.83 & \textbf{6.55} & \textbf{6.38} & 32.45 & 29.32 & 30.90 & 31.77 & 27.35 & 28.60 \\
& & \method{STARS\_res\_${0.5}$} & 27.50 & 0.00 & 0.00 & 0.00 & 0.00 & 0.00 & 0.00 & 0.00 & 0.00 \\
& & \method{STARS\_${0.1}$} & 90.43 & 1.48 & 1.43 & 8.44 & 7.69 & 8.41 & 8.44 & 7.58 & 7.63 \\
& & \method{STARS\_${0.5}$} & 80.31 & 4.38 & 4.00 & \textbf{40.03} & \textbf{35.96} & \textbf{38.39} & \textbf{39.35} & \textbf{33.73} & \textbf{35.07}\\ \cmidrule(r{1em}){2-12} 
& \multirow{5}{*}{0.2} & \method{Sampling} & \textbf{95.64} & 1.36 & 1.36 & 1.44 & 1.41 & 1.37 & 1.42 & 1.27 & 1.35 \\
& & \method{STARS\_res\_${0.1}$} & 90.67 & \textbf{6.33} & \textbf{6.21} & 31.09 & 27.62 & 30.21 & 30.60 & 26.33 & 26.92 \\
& & \method{STARS\_res\_${0.5}$} & 28.14 & 0.02 & 0.00 & 0.03 & 0.00 & 0.03 & 0.03 & 0.00 & 0.00 \\
& & \method{STARS\_${0.1}$} & 91.52 & 1.45 & 1.26 & 7.28 & 6.50 & 7.23 & 7.25 & 6.36 & 6.40 \\
& & \method{STARS\_${0.5}$} & 78.93 & 4.38 & 4.02 & \textbf{39.03} & \textbf{35.05} & \textbf{37.38} & \textbf{38.67} & \textbf{32.94} & \textbf{34.39} \\ \bottomrule
\end{tabular*}
\end{table}

\rebuttal{These findings indicate that while MLP-based interventions can be effective at moderate steering strengths for improving execution correctness, attention-based interventions (before o\_proj) offer better robustness and scalability for maximizing test coverage. The attention mechanism appears to provide a more stable intervention point that tolerates stronger perturbations without catastrophic degradation in output quality. This may be because intervening before the output projection allows the model's subsequent layers to better adapt and compensate for the perturbation. Based on these results, we adopt attention-based intervention (before o\_proj) as our default approach throughout the paper.}

\newpage
\subsection{Additional Baseline Methods}

\rebuttal{To further validate the effectiveness of our approach, we compare against additional baseline methods that provide alternative strategies for controlling model behavior during generation. Specifically, we evaluate two natural baselines. The first one is \textbf{RAND}, which applies randomly sampled steering vectors to explore the impact of arbitrary directional interventions. The second one is \textbf{HMEAN}, which uses steering vectors computed as the mean offset from a mean point to test whether spreading activations uniformly from a central location improves diversity. Specifically, with a set of activations $\{h_{i}\}_{i=1}^{N}$, we first compute the mean $\bar{h}=\frac{1}{N}\sum_{i=1}^{N}h_{i}$. With a scalar $\alpha>0$, the steering vector for the $i$-th path is $v_{i}=\alpha \frac{h_{i}-\bar{h}}{\| h_{i}-\bar{h}\|_{2}}$.}

\rebuttal{Table \ref{tab:testeval-baseline} shows the experimental results on \dataset{TESTEVAL} with \model{Gemma-1.1-2-it} across five temperature settings T $\in$ \{0.2, 0.4, 0.6, 0.8, 1.0\}. We compare vanilla temperature sampling against RAND and MEAN baselines at two steering strengths (0.1 and 0.5), as well as our proposed STARS method.}

\begin{table}[htbp]
\centering
\scriptsize
\footnotesize
\caption{Experimental results on \dataset{TESTEVAL} across different sampling temperatures. All the numbers are in percentages. Best-performing methods are highlighted in bold.}
\label{tab:testeval-baseline}
\begin{tabular*}{\linewidth}{@{\extracolsep{\fill}} l|c|c|ccc|cc|cc|cc}
\toprule
\multirow{2}{*}{} & \multirow{2}{*}{\textbf{T}} & \multirow{2}{*}{\textbf{Method}} & \multicolumn{3}{c|}{\textbf{Correctness $\uparrow$}} & \multicolumn{2}{c|}{\textbf{Overall cov. $\uparrow$}} & \multicolumn{2}{c|}{\textbf{Line $cov@k$ $\uparrow$}} & \multicolumn{2}{c}{\textbf{Branch $cov@k$ $\uparrow$}} \\
& & & \textbf{syntax} & \textbf{exe} & \textbf{assert} & \textbf{line} & \textbf{branch} & \textbf{$k=1$} & \textbf{$k=5$} & \textbf{$k=1$} & \textbf{$k=5$} \\ 
\addlinespace[2pt] 
\cmidrule(r{1em}){2-12} 
\multirow{35}{*}{\rotatebox{90}{\model{Gemma-1.1-2b-it}}} & \multirow{7}{*}{1.0} & \method{Sampling} & 89.90 & 1.02 & 0.00 & 0.00 & 0.00 & 0.00 & 0.00 & 0.00 & 0.00 \\
& & \method{RAND\_${0.1}$} & 89.81 & 1.14 & 1.10 & 4.89 & 4.54 & 4.83 & 4.88 & 4.40 & 4.45 \\
& & \method{RAND\_${0.5}$} & 89.98 & 1.07 & 0.98 & 6.48 & 6.14 & 6.44 & 6.46 & 6.05 & 6.10 \\
& & \method{MEAN\_${0.1}$} & 90.64 & 1.10 & 1.02 & 4.96 & 4.56 & 4.92 & 4.95 & 4.44 & 4.51 \\
& & \method{MEAN\_${0.5}$} & \textbf{90.76} & 1.12 & 1.07 & 4.51 & 4.20 & 4.43 & 4.49 & 4.00 & 4.13 \\
& & \method{STARS\_${0.1}$} & 86.48 & 1.43 & 1.17 & 10.03 & 9.32 & 9.97 & 10.02 & 9.12 & 9.25 \\
& & \method{STARS\_${0.5}$} & 79.76 & \textbf{3.83} & \textbf{3.45} & \textbf{34.76} & \textbf{30.93} & \textbf{33.81} & \textbf{34.45} & \textbf{29.58} & \textbf{30.40} \\ \cmidrule(r{1em}){2-12} 
& \multirow{7}{*}{0.8} & \method{Sampling} & 91.79 & 1.19 & 0.00 & 1.44 & 1.41 & 1.37 & 1.42 & 1.27 & 1.35 \\
& & \method{RAND\_${0.1}$} & 92.02 & 1.26 & 1.05 & 5.19 & 4.72 & 5.13 & 5.17 & 4.59 & 4.68 \\
& & \method{RAND\_${0.5}$} & 91.43 & 1.26 & 1.14 & 4.73 & 4.38 & 4.71 & 4.72 & 4.31 & 4.34 \\
& & \method{MEAN\_${0.1}$} & 92.10 & 1.26 & 1.24 & 3.94 & 3.54 & 3.89 & 3.92 & 3.41 & 3.47 \\
& & \method{MEAN\_${0.5}$} & \textbf{92.43} & 1.14 & 1.07 & 5.34 & 5.07 & 5.29 & 5.33 & 4.94 & 5.02 \\
& & \method{STARS\_${0.1}$} & 87.83 & 1.45 & 1.12 & 10.77 & 9.88 & 10.64 & 10.72 & 9.65 & 9.78 \\
& & \method{STARS\_${0.5}$} & 79.43 & \textbf{3.79} & \textbf{3.45} & \textbf{32.96} & \textbf{30.02} & \textbf{31.70} & \textbf{32.59} & \textbf{28.23} & \textbf{29.42} \\ \cmidrule(r{1em}){2-12} 
& \multirow{7}{*}{0.6} & \method{Sampling} & 93.45 & 1.29 & 0.00 & 3.01 & 2.84 & 2.94 & 2.77 & 2.70 & 2.77 \\
& & \method{RAND\_${0.1}$} & 93.17 & 1.10 & 1.07 & 2.78 & 2.70 & 2.72 & 2.77 & 2.58 & 2.66 \\
& & \method{RAND\_${0.5}$} & 93.00 & 0.90 & 0.86 & 2.78 & 2.64 & 2.74 & 2.75 & 2.52 & 2.60 \\
& & \method{MEAN\_${0.1}$} & \textbf{94.67} & 1.14 & 1.12 & 2.77 & 2.68 & 2.73 & 2.77 & 2.55 & 2.60 \\
& & \method{MEAN\_${0.5}$} & 94.33 & 1.12 & 1.10 & 3.21 & 2.96 & 3.18 & 3.19 & 2.90 & 2.92 \\
& & \method{STARS\_${0.1}$} & 89.19 & 1.71 & 1.43 & 12.10 & 11.22 & 12.03 & 12.10 & 10.99 & 11.15 \\
& & \method{STARS\_${0.5}$} & 79.24 & \textbf{3.86} & \textbf{3.24} & \textbf{34.07} & \textbf{30.38} & \textbf{32.04} & \textbf{33.22} & \textbf{28.00} & \textbf{29.46} \\ \cmidrule(r{1em}){2-12} 
& \multirow{7}{*}{0.4} & \method{Sampling} & 95.40 & 1.36 & 1.26 & 2.33 & 2.31 & 2.29 & 2.32 & 2.20 & 2.27 \\
& & \method{RAND\_${0.1}$} & 95.14 & 1.19 & 1.14 & 1.92 & 1.88 & 1.88 & 1.91 & 1.77 & 1.83 \\
& & \method{RAND\_${0.5}$} & 94.67 & 1.24 & 1.19 & 2.23 & 2.08 & 2.20 & 2.23 & 1.98 & 2.04 \\
& & \method{MEAN\_${0.1}$} & 95.90 & 1.26 & 1.24 & 2.33 & 2.24 & 2.29 & 2.32 & 2.13 & 2.20 \\
& & \method{MEAN\_${0.5}$} & \textbf{95.93} & 1.07 & 1.07 & 1.41 & 1.36 & 1.40 & 1.41 & 1.29 & 1.32 \\
& & \method{STARS\_${0.1}$} & 90.43 & 1.48 & 1.43 & 8.44 & 7.69 & 8.41 & 8.44 & 7.58 & 7.63 \\
& & \method{STARS\_${0.5}$} & 80.31 & \textbf{4.38} & \textbf{4.00} & \textbf{40.03} & \textbf{35.96} & \textbf{38.39} & \textbf{39.35} & \textbf{33.73} & \textbf{35.07}\\ \cmidrule(r{1em}){2-12} 
& \multirow{7}{*}{0.2} & \method{Sampling} & 95.64 & 1.36 & 1.36 & 1.44 & 1.41 & 1.37 & 1.42 & 1.27 & 1.35 \\
& & \method{RAND\_${0.1}$} & 96.24 & 1.33 & 1.31 & 1.89 & 1.88 & 1.84 & 1.87 & 1.77 & 1.81 \\
& & \method{RAND\_${0.5}$} & 95.86 & 1.21 & 1.14 & 1.89 & 1.88 & 1.83 & 1.88 & 1.77 & 1.85 \\
& & \method{MEAN\_${0.1}$} & 96.57 & 1.45 & 1.40 & 1.89 & 1.88 & 1.87 & 1.88 & 1.77 & 1.81 \\
& & \method{MEAN\_${0.5}$} & \textbf{96.74} & 1.29 & 1.29 & 1.41 & 1.36 & 1.41 & 1.41 & 1.31 & 1.33 \\
& & \method{STARS\_${0.1}$} & 91.52 & 1.45 & 1.26 & 7.28 & 6.50 & 7.23 & 7.25 & 6.36 & 6.40 \\
& & \method{STARS\_${0.5}$} & 78.93 & \textbf{4.38} & \textbf{4.02} & \textbf{39.03} & \textbf{35.05} & \textbf{37.38} & \textbf{38.67} & \textbf{32.94} & \textbf{34.39} \\ \bottomrule
\end{tabular*}
\end{table}

\newpage
\rebuttal{Table \ref{tab:testeval-baseline2} presents the experimental results on \dataset{TESTEVAL} with \model{Qwen3-1.7B} across five temperature settings T $\in$ \{0.2, 0.4, 0.6, 0.8, 1.0\}. We compare vanilla temperature sampling against RAND and MEAN baselines, as well as our proposed STARS method.}
\begin{table}[htbp]
\centering
\scriptsize
\footnotesize
\caption{Experimental results on \dataset{TESTEVAL} across different sampling temperatures. All the numbers are in percentages. Best-performing methods are highlighted in bold.}
\label{tab:testeval-baseline2}
\setlength{\tabcolsep}{5pt} 
\begin{tabular*}{\linewidth}{@{\extracolsep{\fill}} l|c|c|ccc|cc|cc|cc}
\toprule
\multirow{2}{*}{} & \multirow{2}{*}{\textbf{T}} & \multirow{2}{*}{\textbf{Method}} & \multicolumn{3}{c|}{\textbf{Correctness $\uparrow$}} & \multicolumn{2}{c|}{\textbf{Overall cov. $\uparrow$}} & \multicolumn{2}{c|}{\textbf{Line $cov@k$ $\uparrow$}} & \multicolumn{2}{c}{\textbf{Branch $cov@k$ $\uparrow$}} \\
& & & \textbf{syntax} & \textbf{exe} & \textbf{assert} & \textbf{line} & \textbf{branch} & \textbf{$k=1$} & \textbf{$k=5$} & \textbf{$k=1$} & \textbf{$k=5$} \\ 
\addlinespace[3pt] 
\midrule 
\multirow{35}{*}{\rotatebox{90}{\model{Qwen3-1.7b}}} & \multirow{7}{*}{1.0} & \method{Sampling} & 18.07 & 11.88 & 4.19 & 59.95 & 55.05 & 57.18 & 58.66 & 51.31 & 53.39 \\
& & \method{RAND\_${0.1}$} & 17.93 & 12.12 & 4.57 & 56.87 & 52.23 & 54.15 & 55.40 & 48.45 & 50.25 \\
& & \method{RAND\_${0.5}$} & 22.19 & 15.71 & 6.19 & 71.67 & 66.07 & 68.32 & 70.04 & 61.45 & 63.81 \\
& & \method{MEAN\_${0.1}$} & 35.05 & 24.55 & 9.02 & 68.48 & 63.31 & 64.90 & 66.38 & 58.44 & 60.69 \\
& & \method{MEAN\_${0.5}$} & \textbf{83.05} & \textbf{58.14} & 24.21 & 74.30 & 69.24 & 69.53 & 71.62 & 62.10 & 65.23 \\
& & \method{STARS\_${0.1}$} & 61.10 & 39.90 & 19.17 & 86.77 & 81.10 & 80.54 & 83.62 & 71.71 & 76.36 \\
& & \method{STARS\_${0.5}$} & 72.21 & 41.86 & \textbf{30.79} & \textbf{91.05} & \textbf{86.81} & \textbf{82.18} & \textbf{87.31} & \textbf{73.59} & \textbf{81.02} \\ \cmidrule(r{1em}){2-12} 
& \multirow{7}{*}{0.8} & \method{Sampling} & 13.86 & 8.17 & 2.83 & 37.06 & 33.73 & 36.14 & 36.47 & 32.43 & 33.04 \\
& & \method{RAND\_${0.1}$} & 15.52 & 10.05 & 3.69 & 47.98 & 43.74 & 46.11 & 47.17 & 41.03 & 42.58 \\
& & \method{RAND\_${0.5}$} & 20.21 & 14.14 & 5.50 & 67.79 & 62.06 & 64.93 & 66.93 & 58.08 & 60.83 \\
& & \method{MEAN\_${0.1}$} & 33.86 & 23.60 & 8.33 & 63.22 & 58.59 & 60.08 & 61.72 & 54.19 & 56.49 \\
& & \method{MEAN\_${0.5}$} & \textbf{85.95} & \textbf{59.98} & 25.43 & 71.53 & 66.64 & 67.29 & 69.14 & 60.24 & 63.14 \\
& & \method{STARS\_${0.1}$} & 61.10 & 39.90 & 19.17 & 86.77 & 81.10 & 80.54 & 83.62 & 71.71 & 76.36 \\
& & \method{STARS\_${0.5}$} & 71.55 & 41.31 & \textbf{29.07} & \textbf{92.22} & \textbf{87.86} & \textbf{83.00} & \textbf{88.42} & \textbf{74.39} & \textbf{82.14} \\ \cmidrule(r{1em}){2-12} 
& \multirow{7}{*}{0.6} & \method{Sampling} & 11.69 & 6.19 & 2.26 & 23.90 & 21.79 & 22.76 & 23.24 & 20.35 & 20.89 \\
& & \method{RAND\_${0.1}$} & 13.43 & 7.93 & 2.88 & 42.20 & 38.38 & 41.11 & 41.70 & 36.80 & 37.73 \\
& & \method{RAND\_${0.5}$} & 20.19 & 13.33 & 4.90 & 63.51 & 57.62 & 61.58 & 62.90 & 54.89 & 56.65 \\
& & \method{MEAN\_${0.1}$} & 30.62 & 20.10 & 7.14 & 51.70 & 47.43 & 49.95 & 50.77 & 44.79 & 46.06 \\
& & \method{MEAN\_${0.5}$} & \textbf{88.10} & \textbf{61.00} & 25.43 & 69.92 & 64.79 & 66.17 & 68.08 & 59.13 & 61.84 \\
& & \method{STARS\_${0.1}$} & 61.57 & 41.10 & 19.43 & 86.84 & 80.95 & 81.08 & 83.99 & 72.24 & 76.53 \\
& & \method{STARS\_${0.5}$} & 72.79 & 43.02 & \textbf{31.05} & \textbf{90.73} & \textbf{86.35} & \textbf{82.39} & \textbf{87.25} & \textbf{73.77} & \textbf{80.88} \\ \cmidrule(r{1em}){2-12} 
& \multirow{7}{*}{0.4} & \method{Sampling} & 9.00 & 3.83 & 1.52 & 13.56 & 12.26 & 13.29 & 13.43 & 11.86 & 12.08 \\
& & \method{RAND\_${0.1}$} & 11.74 & 6.36 & 2.57 & 30.53 & 27.76 & 29.53 & 29.81 & 26.40 & 26.81 \\
& & \method{RAND\_${0.5}$} & 19.07 & 12.83 & 4.88 & 59.58 & 53.91 & 58.25 & 58.89 & 51.94 & 52.97 \\
& & \method{MEAN\_${0.1}$} & 28.10 & 18.86 & 6.48 & 39.73 & 36.11 & 38.71 & 39.12 & 34.62 & 35.24 \\
& & \method{MEAN\_${0.5}$} & \textbf{89.36} & \textbf{62.40} & 27.05 & 66.51 & 61.23 & 63.89 & 65.28 & 57.18 & 59.34 \\
& & \method{STARS\_${0.1}$} & 62.69 & 41.60 & 19.74 & 87.14 & 81.17 & 81.82 & 84.46 & 73.15 & 77.13 \\
& & \method{STARS\_${0.5}$} & 73.88 & 42.79 & \textbf{30.74} & \textbf{91.30} & \textbf{87.01} & \textbf{82.95} & \textbf{87.37} & \textbf{74.40} & \textbf{81.00} \\ \cmidrule(r{1em}){2-12} 
& \multirow{7}{*}{0.2} & \method{Sampling} & 8.17 & 2.69 & 1.17 & 4.41 & 4.16 & 4.70 & 4.71 & 4.12 & 4.14 \\
& & \method{RAND\_${0.1}$} & 10.74 & 5.17 & 2.05 & 24.70 & 22.24 & 24.06 & 24.42 & 21.39 & 21.89 \\
& & \method{RAND\_${0.5}$} & 18.21 & 12.29 & 4.57 & 58.35 & 52.85 & 57.31 & 58.03 & 51.22 & 52.36 \\
& & \method{MEAN\_${0.1}$} & 27.74 & 17.90 & 6.26 & 29.13 & 26.36 & 28.82 & 28.91 & 25.89 & 26.01 \\
& & \method{MEAN\_${0.5}$} & \textbf{91.10} & \textbf{63.00} & 27.60 & 62.30 & 57.07 & 60.88 & 61.54 & 54.87 & 55.85 \\
& & \method{STARS\_${0.1}$} & 62.93 & 41.11 & 18.81 & 86.25 & 80.22 & 80.67 & 83.60 & 71.72 & 75.92 \\
& & \method{STARS\_${0.5}$} & 73.40 & 41.81 & \textbf{29.95} & \textbf{91.35} & \textbf{87.13} & \textbf{82.95} & \textbf{87.61} & \textbf{74.28} & \textbf{81.32}\\ 
\bottomrule
\end{tabular*}
\end{table}

\rebuttal{Table \ref{tab:idea-random-hmean} shows the experimental results on the full \dataset{LiveIdeaBench} dataset with \model{Qwen2.5-3B-Instruct} across different sampling temperatures T $\in$ \{0.2, 0.4, 0.6, 0.8, 1.0\}. We compare vanilla temperature sampling against RAND and MEAN baselines, as well as our proposed method STARS. The activation is extracted at layer 20.}

\begin{table}[!t]
 \scriptsize
 \footnotesize
\caption{Experimental results on full \LiveIdeaBench across different sampling temperatures on Qwen2.5-3B-Instruct. All values are reported on a scale with a maximum score of 10. Best-performing methods are highlighted in bold.}\label{tab:idea-random-hmean}
\begin{tabular*}{\linewidth}{@{\extracolsep{\fill}}@{}c|c|cccccc@{}}
\toprule
\textbf{T} & \textbf{Method} & \textbf{Originality $\uparrow$} & \textbf{Feasibility $\uparrow$} & \textbf{Clarity $\uparrow$} & \textbf{Fluency $\uparrow$} & \textbf{Flexibility $\uparrow$} & \textbf{Avg. $\uparrow$} \\ \midrule
\multirow{5}{*}{1.0} & \method{Sampling} & \textbf{6.13} & 5.55 & 5.81 & 4.83 & 5.46 & 5.56 \\
 & \RANDOM{0.1} & 5.82 & \textbf{5.97} & 6.18 & 4.79 & 5.57 & 5.67 \\
 & \HMEAN{0.1} & 5.90 & 5.92 & \textbf{6.20} & 4.93 & \textbf{5.67} & 5.73 \\
 & \STAR{0.1} & 5.88 & 5.91 & 6.19 & 5.12 & 5.65 & \textbf{5.75} \\
 & \STAR{0.5} & 5.65 & 5.92 & 6.05 & \textbf{5.43} & \textbf{5.67} & 5.74 \\ \midrule
\multirow{5}{*}{0.8} & \method{Sampling} & \textbf{6.09} & 5.56 & 5.80 & 4.55 & 5.39 & 5.48 \\
 & \RANDOM{0.1} & 5.76 & \textbf{6.01} & 6.20 & 4.65 & 5.54 & 5.63 \\
 & \HMEAN{0.1} & 5.86 & 5.94 & 6.22 & 4.75 & 5.56 & 5.67 \\
 & \STAR{0.1} & 5.83 & 5.96 & \textbf{6.24} & 4.84 & \textbf{5.59} & \textbf{5.69} \\
 & \STAR{0.5} & 5.56 & 5.94 & 6.06 & \textbf{5.27} & 5.57 & 5.68 \\ \midrule
\multirow{5}{*}{0.6} & \method{Sampling} & \textbf{6.06} & 5.57 & 5.80 & 4.19 & 5.30 & 5.38 \\
 & \RANDOM{0.1} & 5.72 & \textbf{5.99} & 6.21 & 4.83 & \textbf{5.58} & \textbf{5.67} \\
 & \HMEAN{0.1} & 5.81 & 5.93 & 6.22 & 4.30 & 5.45 & 5.54 \\
 & \STAR{0.1} & 5.79 & 5.96 & \textbf{6.23} & 4.57 & 5.52 & 5.61 \\
 & \STAR{0.5} & 5.51 & 5.97 & 6.08 & \textbf{5.01} & 5.50 & 5.62 \\ \midrule
\multirow{5}{*}{0.4} & \method{Sampling} & 5.76 & 5.97 & 6.20 & 3.59 & 5.27 & 5.36 \\
 & \RANDOM{0.1} & 5.71 & \textbf{6.03} & 6.23 & 3.75 & 5.31 & 5.41 \\
 & \HMEAN{0.1} & \textbf{5.83} & 5.92 & \textbf{6.24} & 3.68 & 5.30 & 5.39 \\
 & \STAR{0.1} & 5.75 & 6.01 & 6.22 & 4.30 & 5.45 & 5.55 \\
 & \STAR{0.5} & 5.48 & 6.01 & 6.10 & \textbf{4.96} & \textbf{5.55} & \textbf{5.62} \\ \midrule
\multirow{5}{*}{0.2} & \method{Sampling} & 5.74 & 6.02 & 6.23 & 2.61 & 5.03 & 5.13 \\
 & \RANDOM{0.1} & 5.72 & \textbf{6.04} & 6.24 & 3.27 & 5.19 & 5.29 \\
 & \HMEAN{0.1} & \textbf{5.83} & 5.94 & \textbf{6.25} & 2.71 & 5.05 & 5.16 \\
 & \STAR{0.1} & 5.75 & 5.97 & 6.22 & 3.89 & 5.35 & 5.44 \\
 & \STAR{0.5} & 5.44 & 5.99 & 6.08 & \textbf{4.77} & \textbf{5.44} & \textbf{5.54} \\ \bottomrule
\end{tabular*}
\end{table}

\newpage

\rebuttal{Moreover, we compare our STARS method against nucleus sampling~\citep{ref:holtzman2020curious}, a widely-used technique for controlling output diversity by dynamically truncating the sampling distribution. Table~\ref{tab:testeval-nucleus} shows results on \dataset{TESTEVAL} using Gemma-1.1-2b-it at temperature 0.8 with two nucleus sampling configurations (top\_p = 0.9 and 0.95). While nucleus sampling achieves 92-93\% syntax correctness, its line coverage reaches only 4-5\%. In contrast, STARS achieves ~35\% line coverage and 30-31\% branch coverage, though syntax correctness drops to 78-79\%. These results demonstrate that traditional sampling techniques like nucleus sampling, which operate solely at the token distribution level, are insufficient for tasks requiring systematic exploration of diverse program behaviors. Our activation-based steering approach enables fundamentally different exploration capabilities by directly manipulating the model's internal representations.} 

\begin{table}[htbp]
\centering
\scriptsize
\caption{Experimental results on \dataset{TESTEVAL} to compare with nucleus sampling. All the numbers are in percentages. Best-performing methods are highlighted in bold.}
\label{tab:testeval-nucleus}
\begin{tabular*}{\linewidth}{@{\extracolsep{\fill}} lc|c|ccc|cc|cc|cc}
\toprule
\multirow{2}{*}{}  & \multirow{2}{*}{\textbf{Top\_p}} & \multirow{2}{*}{\textbf{Method}}     & \multicolumn{3}{c|}{\textbf{Correctness $\uparrow$}} & \multicolumn{2}{c|}{\textbf{Overall cov. $\uparrow$}} & \multicolumn{2}{c|}{\textbf{Line $cov@k$ $\uparrow$}} & \multicolumn{2}{c}{\textbf{Branch $cov@k$ $\uparrow$}} \\
&     & & \textbf{syntax} & \textbf{exe}  & \textbf{assert} & \textbf{line} & \textbf{branch} & \textbf{$k=1$} & \textbf{$k=5$} & \textbf{$k=1$} & \textbf{$k=5$} \\ 
\addlinespace[2pt] 
\midrule 
& \multirow{2}{*}{0.9} & \method{Nucleus sampling} & \textbf{93.45} & 1.19 & 1.12 & 4.15 & 3.70 & 4.10 & 4.14 & 3.59 & 3.66 \\
& & \method{STARS\_${0.5}$} & 78.81 & \textbf{3.93} & \textbf{3.55} & \textbf{35.73} & \textbf{31.62} & \textbf{34.25} & \textbf{34.88} & \textbf{29.79} & \textbf{30.47} \\ \cmidrule(r{1em}){2-12} 
& \multirow{2}{*}{0.95} & \method{Nucleus sampling} & \textbf{92.57} & 1.21 & 1.19 & 5.09 & 4.88 & 5.07 & 5.09 & 4.73 & 4.84 \\
& & \method{STARS\_${0.5}$} & 79.36 & \textbf{4.40} & \textbf{3.74} & \textbf{35.69} & \textbf{31.65} & \textbf{34.13} & \textbf{34.99} & \textbf{29.47} & \textbf{30.65} \\ 
 \bottomrule
\end{tabular*}
\end{table}

\subsection{Impact of Temperature on STARS Performance}
\rebuttal{We evaluate STARS's robustness to temperature variations by testing across a range from 0.2 to 2.0 on \dataset{TESTEVAL} using Gemma-1.1-2b-it (Table~\ref{tab:testeval-increase-temp}). Across all temperatures, temperature sampling achieves high syntax correctness but minimal coverage. While STARS achieves 32-40\% line coverage and 29-36\% branch coverage, trading modest syntax correctness (78-80\%) for substantial diversity.}

\begin{table}[htbp]
\centering
\scriptsize
\footnotesize
\caption{Experimental results on \dataset{TESTEVAL} across different sampling temperatures. All the numbers are in percentages. Best-performing methods are highlighted in bold.}
\label{tab:testeval-increase-temp}
\begin{tabular*}{\linewidth}{@{\extracolsep{\fill}} l|c|c|ccc|cc|cc|cc}
\toprule
\multirow{2}{*}{}  & \multirow{2}{*}{\textbf{T}} & \multirow{2}{*}{\textbf{Method}} & \multicolumn{3}{c|}{\textbf{Correctness $\uparrow$}} & \multicolumn{2}{c|}{\textbf{Overall cov. $\uparrow$}} & \multicolumn{2}{c|}{\textbf{Line $cov@k$ $\uparrow$}} & \multicolumn{2}{c}{\textbf{Branch $cov@k$ $\uparrow$}} \\
& & & \textbf{syntax} & \textbf{exe}  & \textbf{assert} & \textbf{line} & \textbf{branch} & \textbf{$k=1$} & \textbf{$k=5$} & \textbf{$k=1$} & \textbf{$k=5$} \\ 
\addlinespace[2pt] 
\midrule 
\multirow{30}{*}{\rotatebox{90}{\model{Gemma-1.1-2b-it}}} 
& \multirow{3}{*}{2.0} & \method{Sampling} & \textbf{68.86} & 1.05 & 0.69 & 11.74 & 10.34 & 11.41 & 11.57 & 9.89 & 10.10 \\
& & \method{STARS\_${0.1}$} & 68.71 & \textbf{1.19} & 1.07 & 13.23 & 11.91 & 13.00 & 13.18 & 11.53 & 11.82 \\
& & \method{STARS\_${0.5}$} & 58.26 & 1.64 & \textbf{1.36} & \textbf{16.51} & \textbf{14.44} & \textbf{16.04} & \textbf{16.15} & \textbf{13.82} & \textbf{13.98} \\ \cmidrule(r{1em}){2-12}
& \multirow{3}{*}{1.8} & \method{Sampling} & 75.76 & 0.95 & 0.83 & 12.45 & 11.49 & 12.38 & 12.43 & 11.31 & 11.45 \\
& & \method{STARS\_${0.1}$} & \textbf{78.00} & 0.93 & 0.81 & 11.86 & 10.71 & 11.81 & 11.86 & 10.54 & 10.68 \\
& & \method{STARS\_${0.5}$} & 66.76 & \textbf{2.29} & \textbf{1.90} & \textbf{24.51} & \textbf{21.48} & \textbf{23.11} & \textbf{23.97} & \textbf{19.79} & \textbf{20.76} \\ \cmidrule(r{1em}){2-12}
& \multirow{3}{*}{1.6} & \method{Sampling} & \textbf{82.21} & 1.10 & 0.95 & 9.30 & 8.68 & 9.21 & 9.29 & 8.51 & 8.63 \\
& & \method{STARS\_${0.1}$} & 80.90 & 1.26 & 1.07 & 11.84 & 10.85 & 11.72 & 11.79 & 10.54 & 10.71 \\
& & \method{STARS\_${0.5}$} & 71.26 & \textbf{2.48} & \textbf{2.17} & \textbf{27.23} & \textbf{24.11} & \textbf{25.26} & \textbf{26.57} & \textbf{21.91} & \textbf{23.34} \\ \cmidrule(r{1em}){2-12}
& \multirow{3}{*}{1.4} & \method{Sampling} & 85.50 & 1.26 & 1.14 & 8.76 & 8.00 & 8.69 & 8.76 & 7.85 & 7.94 \\
& & \method{STARS\_${0.1}$} & \textbf{83.64} & 1.45 & 1.31 & 13.91 & 13.31 & 13.82 & 13.90 & 13.04 & 13.25 \\
& & \method{STARS\_${0.5}$} & 74.69 & \textbf{2.88} & \textbf{2.52} & \textbf{28.94} & \textbf{25.91} & \textbf{27.02} & \textbf{28.13} & \textbf{23.47} & \textbf{24.87} \\ \cmidrule(r{1em}){2-12}
& \multirow{3}{*}{1.2} & \method{Sampling} & \textbf{88.48} & 1.36 & 1.19 & 8.70 & 7.85 & 8.63 & 8.66 & 7.69 & 7.78 \\
& & \method{STARS\_${0.1}$} & 85.29 & 1.43 & 1.26 & 11.32 & 10.43 & 10.99 & 11.28 & 9.89 & 10.33 \\
& & \method{STARS\_${0.5}$} & 76.95 & \textbf{3.93} & \textbf{3.60} & \textbf{37.04} & \textbf{33.78} & \textbf{34.72} & \textbf{36.16} & \textbf{30.91} & \textbf{32.66} \\
 \cmidrule(r{1em}){2-12}
& \multirow{3}{*}{1.0} & \method{Sampling} & \textbf{89.90} & 1.02 & 0.95 & 5.36 & 5.06 & 5.32 & 5.35 & 4.98 & 5.04 \\
& & \method{STARS\_${0.1}$} & 86.48 & 1.43 & 1.17 & 10.03 & 9.32 & 9.97 & 10.02 & 9.12 & 9.25 \\
& & \method{STARS\_${0.5}$} & 79.76 & \textbf{3.83} & \textbf{3.45} & \textbf{34.76} & \textbf{30.93} & \textbf{33.81} & \textbf{34.45} & \textbf{29.58} & \textbf{30.40} \\ \cmidrule(r{1em}){2-12} 
& \multirow{3}{*}{0.8} & \method{Sampling} & \textbf{91.79} & 1.19 & 0.00 & 3.54 & 3.37 & 3.46 & 3.51 & 3.21 & 3.30 \\
& & \method{STARS\_${0.1}$} & 87.83 & 1.45 & 1.12 & 10.77 & 9.88 & 10.64 & 10.72 & 9.65 & 9.78 \\
& & \method{STARS\_${0.5}$} & 79.43 & \textbf{3.79} & \textbf{3.45} & \textbf{32.96} & \textbf{30.02} & \textbf{31.70} & \textbf{32.59} & \textbf{28.23} & \textbf{29.43} \\ \cmidrule(r{1em}){2-12} 
& \multirow{3}{*}{0.6} & \method{Sampling} & \textbf{93.45} & 1.29 & 0.00 & 3.01 & 2.84 & 2.94 & 2.99 & 2.70 & 2.77 \\
& & \method{STARS\_${0.1}$} & 89.19 & 1.71 & 1.43 & 12.10 & 11.22 & 12.03 & 12.10 & 10.99 & 11.15 \\
& & \method{STARS\_${0.5}$} & 79.24 & \textbf{3.86} & \textbf{3.24} & \textbf{34.07} & \textbf{30.38} & \textbf{32.04} & \textbf{33.22} & \textbf{28.00} & \textbf{29.46} \\ \cmidrule(r{1em}){2-12} 
& \multirow{3}{*}{0.4} & \method{Sampling} & \textbf{95.40} & 1.36 & 1.26 & 2.33 & 2.31 & 2.29 & 2.32 & 2.20 & 2.27 \\
& & \method{STARS\_${0.1}$} & 90.43 & 1.48 & 1.43 & 8.44 & 7.69 & 8.41 & 8.44 & 7.58 & 7.63 \\
& & \method{STARS\_${0.5}$} & 80.31 & \textbf{4.38} & \textbf{4.00} & \textbf{40.03} & \textbf{35.96} & \textbf{38.39} & \textbf{39.36} & \textbf{33.73} & \textbf{35.07}\\ \cmidrule(r{1em}){2-12} 
& \multirow{3}{*}{0.2} & \method{Sampling} & \textbf{95.64} & 1.36 & 1.36 & 1.44 & 1.41 & 1.37 & 1.42 & 1.27 & 1.35 \\
& & \method{STARS\_${0.1}$} & 91.52 & 1.45 & 1.36 & 7.28 & 6.50 & 7.23 & 7.25 & 6.36 & 6.40 \\
& & \method{STARS\_${0.5}$} & 78.93 & \textbf{4.38} & \textbf{4.02} & \textbf{39.03} & \textbf{35.05} & \textbf{37.38} & \textbf{38.67} & \textbf{32.94} & \textbf{34.39} \\ \bottomrule
\end{tabular*}
\end{table}

\subsection{Statistical Analysis Across Different Random Seeds}
\rebuttal{Table \ref{tab:idea-randomseed} shows the results on the full \LiveIdeaBench dataset across different sampling temperatures on Qwen2.5-3B. Random seeds are 1, 2, and 42. The results demonstrate that \method{STARS} exhibits strong stability across different random seeds, as evidenced by small standard deviations (mostly $<$ 0.1) for most metrics, and consistently outperform baseline sampling across all temperature settings.}
\begin{table}[htbp]
 \footnotesize
\caption{Experimental results on \LiveIdeaBench across sampling temperatures. Values represent mean ± std over three random seeds (1, 2, 42) for Qwen2.5-3B. All values are reported on a scale with a maximum score of 10. Best-performing methods are highlighted in bold.}\label{tab:idea-randomseed}
\begin{tabular*}{\linewidth}{@{\extracolsep{\fill}}@{}c|c|cccccc@{}}
\toprule
\textbf{T} & \textbf{Method} & \textbf{Ori. $\uparrow$} & \textbf{Feas. $\uparrow$} & \textbf{Clar. $\uparrow$} & \textbf{Flu. $\uparrow$} & \textbf{Flex. $\uparrow$} & \textbf{Avg. $\uparrow$} \\ \midrule
\multirow{3}{*}{1.0} & \method{Sampling} & \textbf{6.01 $\pm$ 0.03} & \textbf{5.90 $\pm$ 0.06} & 6.15 $\pm$ 0.06 & 5.27 $\pm$ 0.12 & 5.73 $\pm$ 0.06 & 5.81 $\pm$ 0.05 \\
 & \STAR{0.1} & \textbf{6.01 $\pm$ 0.05} & 5.85 $\pm$ 0.04 & \textbf{6.17 $\pm$ 0.02} & 5.47 $\pm$ 0.18 & \textbf{5.79 $\pm$ 0.07} & \textbf{5.86 $\pm$ 0.04} \\
 & \STAR{0.5} & 5.75 $\pm$ 0.03 & 5.88 $\pm$ 0.04 & 6.05 $\pm$ 0.03 & \textbf{5.75 $\pm$ 0.13} & 5.73 $\pm$ 0.06 & 5.83 $\pm$ 0.04 \\ \midrule
\multirow{3}{*}{0.8} & \method{Sampling} & 5.96 $\pm$ 0.01 & 5.95 $\pm$ 0.05 & 6.14 $\pm$ 0.01 & 5.00 $\pm$ 0.03 & 5.67 $\pm$ 0.04 & 5.74 $\pm$ 0.03 \\
 & \STAR{0.1} & \textbf{5.98 $\pm$ 0.06} & \textbf{5.91 $\pm$ 0.07} & \textbf{6.21 $\pm$ 0.03} & 5.27 $\pm$ 0.07 & \textbf{5.73 $\pm$ 0.02} & \textbf{5.82 $\pm$ 0.02} \\
 & \STAR{0.5} & 5.70 $\pm$ 0.07 & 5.86 $\pm$ 0.08 & 6.06 $\pm$ 0.01 & \textbf{5.69 $\pm$ 0.31} & 5.69 $\pm$ 0.08 & 5.80 $\pm$ 0.08 \\ \midrule
\multirow{3}{*}{0.6} & \method{Sampling} & 5.88 $\pm$ 0.04 & \textbf{5.90 $\pm$ 0.04} & 6.15 $\pm$ 0.05 & 4.49 $\pm$ 0.21 & 5.52 $\pm$ 0.08 & 5.59 $\pm$ 0.06 \\
 & \STAR{0.1} & \textbf{5.93 $\pm$ 0.05} & 5.89 $\pm$ 0.03 & \textbf{6.21 $\pm$ 0.01} & 4.85 $\pm$ 0.26 & \textbf{5.59 $\pm$ 0.07} & \textbf{5.69 $\pm$ 0.07} \\
 & \STAR{0.5} & 5.60 $\pm$ 0.07 & 5.88 $\pm$ 0.03 & 6.03 $\pm$ 0.03 & \textbf{5.30 $\pm$ 0.12} & 5.57 $\pm$ 0.03 & 5.67 $\pm$ 0.04 \\ \midrule
\multirow{3}{*}{0.4} & \method{Sampling} & 5.84 $\pm$ 0.04 & \textbf{5.98 $\pm$ 0.08} & \textbf{6.21 $\pm$ 0.02} & 3.50 $\pm$ 0.07 & 5.27 $\pm$ 0.02 & 5.36 $\pm$ 0.00 \\
 & \STAR{0.1} & \textbf{5.86 $\pm$ 0.03} & 5.97 $\pm$ 0.11 & \textbf{6.21 $\pm$ 0.09} & 4.67 $\pm$ 0.15 & \textbf{5.58 $\pm$ 0.00} & 5.66 $\pm$ 0.01 \\
 & \STAR{0.5} & 5.62 $\pm$ 0.04 & 5.94 $\pm$ 0.04 & 6.08 $\pm$ 0.00 & \textbf{5.18 $\pm$ 0.28} & 5.57 $\pm$ 0.06 & \textbf{5.68 $\pm$ 0.07} \\ \midrule
\multirow{3}{*}{0.2} & \method{Sampling} & \textbf{5.83 $\pm$ 0.03} & \textbf{6.05 $\pm$ 0.04} & \textbf{6.26 $\pm$ 0.06} & 2.60 $\pm$ 0.07 & 5.05 $\pm$ 0.06 & 5.16 $\pm$ 0.03 \\
 & \STAR{0.1} & \textbf{5.83 $\pm$ 0.08} & 5.98 $\pm$ 0.06 & 6.25 $\pm$ 0.04 & 3.99 $\pm$ 0.26 & 5.41 $\pm$ 0.06 & 5.49 $\pm$ 0.06 \\
 & \STAR{0.5} & 5.56 $\pm$ 0.05 & 5.91 $\pm$ 0.04 & 6.07 $\pm$ 0.01 & \textbf{5.08 $\pm$ 0.15} & \textbf{5.54 $\pm$ 0.00} & \textbf{5.63 $\pm$ 0.02} \\ \bottomrule
\end{tabular*}
\end{table}

\newpage
\subsection{Numerical Comparision of Algorithm~\ref{alg:RGD_line_search} and Algorithm~\ref{alg:one-step}}\label{sec:Numerical Comparision}

This section presents a numerical comparison between the Riemannian gradient descent with line-search (Algorithm~\ref{alg:RGD_line_search}) and the one-step update algorithm (Algorithm~\ref{alg:one-step}) for solving the steering-vector computation problem~\eqref{eq:minlogdet_eq} across different dimensional settings.
We run Algorithm~\ref{alg:RGD_line_search} for 100 iterations with hyperparameters $\rho=0.2$, $c=10^{-4}$ and $\bar{\eta}=100$. All the experiments are repeated $50$ times. For each run, we use the final loss of Algorithm~\ref{alg:RGD_line_search} as a lower bound and compute the relative gap for both methods at every iteration. Figure~\ref{fig:algo_compare} reports the average results across different dimensional settings. We also provide the computation time in Table~\ref{tab:algo_compare_time}. Notably, Algorithm~\ref{alg:one-step} reduces the relative optimality gap from $7\%$ to about $2\%$ in a single update while using only about $3\%$ of the runtime of Algorithm~\ref{alg:RGD_line_search}.

\begin{figure}[h]
\centering
\begin{subfigure}[b]{\linewidth}
\centering
\includegraphics[width=\linewidth,height=0.21\textheight,keepaspectratio]{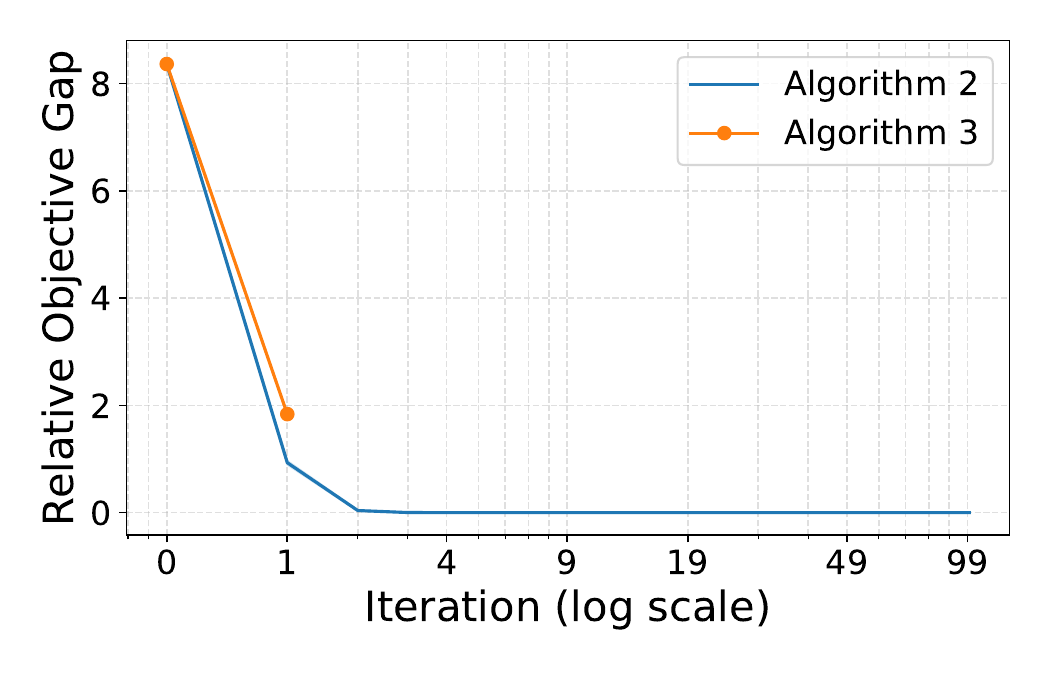}
\caption{$d=1024,N=8$}
\end{subfigure}
\begin{subfigure}[b]{\linewidth}
\centering
\includegraphics[width=\linewidth,height=0.21\textheight,keepaspectratio]{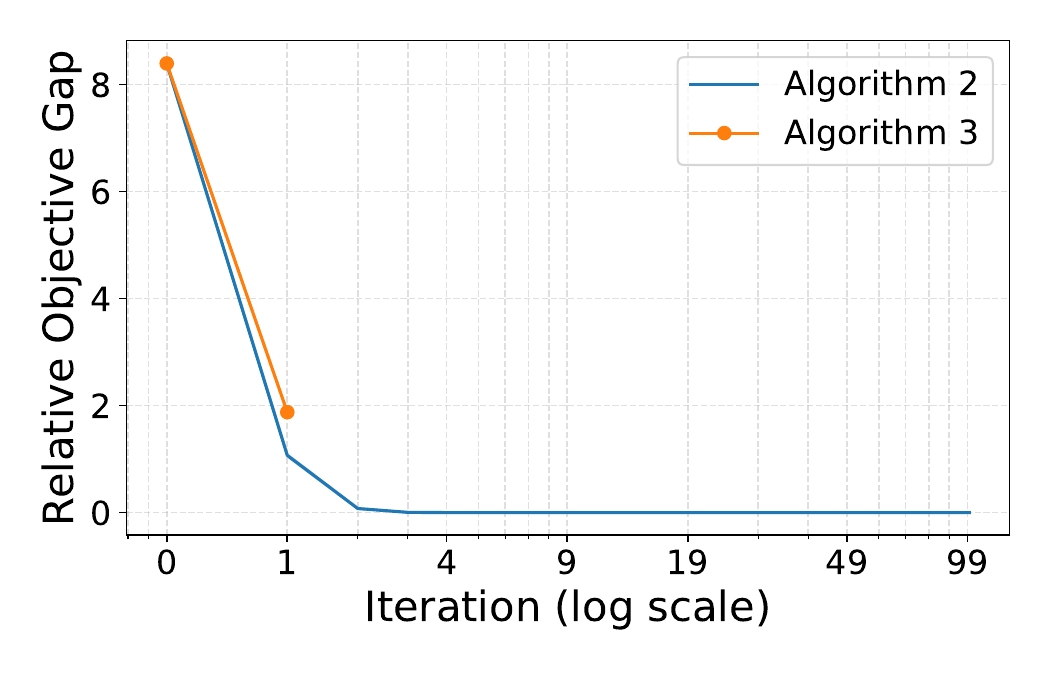}
\caption{$d=1024,N=20$}
\end{subfigure}

\begin{subfigure}[b]{\linewidth}
\centering
\includegraphics[width=\linewidth,height=0.21\textheight,keepaspectratio]{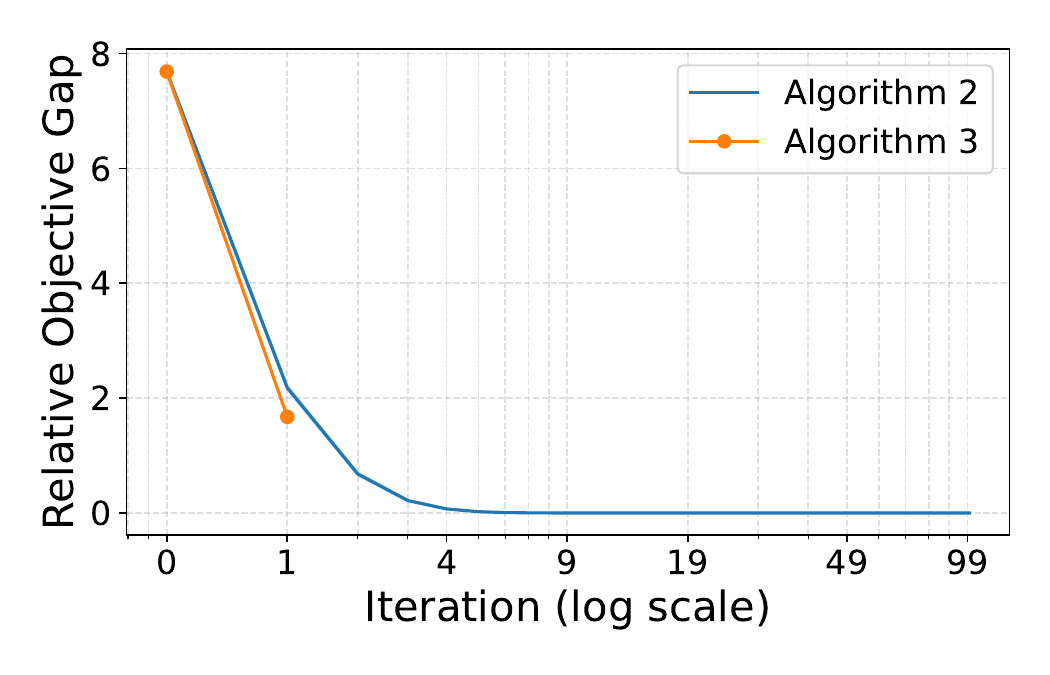}
\caption{$d=2048,N=8$}
\end{subfigure}

\begin{subfigure}[b]{\linewidth}
\centering
\includegraphics[width=\linewidth,height=0.21\textheight,keepaspectratio]{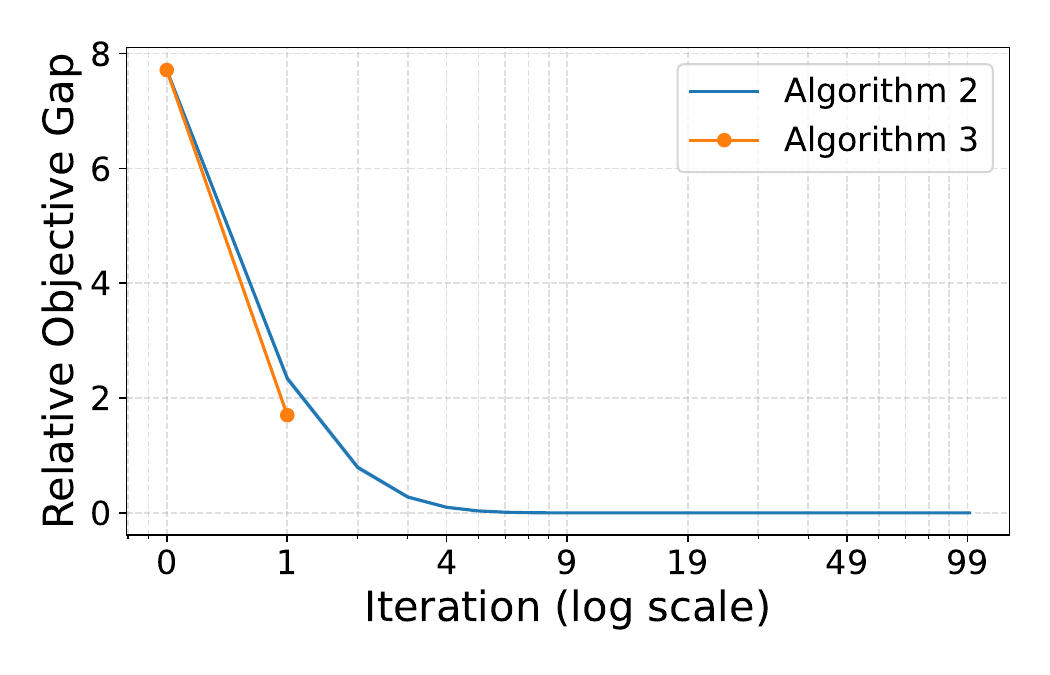}
\caption{$d=2048,N=20$}
\end{subfigure}

\caption{Optimality gap comparison of Algorithm~\ref{alg:RGD_line_search} and Algorithm~\ref{alg:one-step}. The y-axis is the average relative gap (in percentage) across runs, and the x-axis denotes the iteration number on a logarithmic scale. The standard deviation is very small relative to the average relative gap and is therefore not visually apparent.}
\label{fig:algo_compare}
\end{figure}

\begin{table}[h]
\centering
\caption{Computational time of Algorithm~\ref{alg:RGD_line_search} and Algorithm~\ref{alg:one-step} (in seconds), average over $50$ runs.}
\label{tab:algo_compare_time}
\begin{tabular}{@{}ccc@{}}
\toprule
$(d,N)$  & Algorithm~\ref{alg:RGD_line_search} & Algorithm~\ref{alg:one-step} \\ \midrule
$(1024,8)$ & 1.206   & 0.015 \\
$(1024,20)$ & 3.299   & 0.024 \\
$(2048,8)$ & 1.575   & 0.058 \\
$(2048,20)$ & 4.322   & 0.103 \\ \bottomrule
\end{tabular}
\end{table}

\clearpage
\rebuttal{
\subsection{Case Study}\label{app:case study}
Here we provide a generation example from \LiveIdeaBench dataset with keyword \textbf{periodic table}. The four paths of \method{sampling} are
\begin{itemize}
    \item Develop an interactive periodic table app that uses machine learning to predict chemical reactions based on user input of elements. This tool would enhance chemistry education by providing real-time predictions and explanations, making complex concepts more accessible and engaging for students.
    \item Develop an interactive periodic table app that allows users to visualize and manipulate chemical elements based on their electronic configurations. This tool would help educators and students better understand atomic structure and bonding principles through dynamic visualizations, potentially enhancing learning outcomes and engagement.
    \item Develop an interactive periodic table app that uses AI to predict chemical reactions based on user input of elements. This tool would enhance chemistry education by providing real-time reaction predictions, aiding students in understanding complex chemical interactions.
    \item Develop an interactive periodic table app that uses augmented reality to visualize chemical elements in real-world environments. This tool would enhance learning by allowing users to see how elements interact in their surroundings, potentially sparking new insights into chemistry and materials science.
\end{itemize}
The four generations from \method{STARS} are
\begin{itemize}
    \item Designing a periodic table with quantum states as elements could revolutionize our understanding of chemical reactions and materials science. By mapping quantum states alongside traditional elements, we can predict new compounds and optimize energy storage solutions. This innovative approach would bridge the gap between quantum mechanics and chemistry, offering unprecedented insights into material properties and potential applications in renewable energy technologies.This concept aims to explore the relationship between quantum states and chemical elements, potentially leading to breakthroughs in materials science and energy technology.
    \item Develop an AI system that predicts chemical reactions using the periodic table as a foundational dataset. This system would analyze patterns and relationships between elements based on their positions in the periodic table, aiming to improve reaction prediction accuracy and potentially discover new chemical compounds.
    \item A periodic table app that allows users to visualize and interact with elements in real-time, enhancing understanding of chemical properties and periodic trends.This idea involves developing an educational tool for chemistry enthusiasts and students. By creating an interactive periodic table app, users can explore the relationships between elements, their atomic structures, and how they relate to each other based on their positions in the table. This could include features like real-time element animations, comparisons of physical properties, and visual representations of chemical reactions involving different elements. The app would also allow for customization, enabling users to choose specific elements or groups to study in depth.
    \item Utilizing the periodic table for personalized medicine by mapping genetic variations to corresponding medical treatments. This idea aims to revolutionize personalized medicine by creating a comprehensive map that links specific genetic variations with effective medical treatments, based on the periodic table's structure of elements and their properties. This could significantly improve treatment efficacy and reduce side effects by tailoring therapies to individual genetic profiles.
\end{itemize}

To summarize, temperature sampling collapses to near‑duplicates of an “interactive periodic‑table app with AI reaction prediction/AR” theme, whereas STARS yields four conceptually distinct but still topical and grammatical ideas: (i) a quantum‑state–augmented periodic table for materials discovery; (ii) an AI reaction‑prediction system grounded in periodic structure; (iii) an educational interactive table; and (iv) a (cross‑domain) “periodic‑table of personalized medicine” mapping genotypes to treatments. 

\subsubsection{Qualitative Analysis of a simple prompt 'A house'}
Here we provide a simple example with the prompt `A house'. We test with \model{QWEN2.5-3B} and  temperature = 0.6. and max\_tokens = 48.

With $N=4$, the temperature sampling gives
\begin{itemize}
    \item Sure! What would you like to know about a house? Here are some common questions and information related to houses: 1. **Types of Houses**: There are many types of houses including detached houses (single-family homes), semi-detached houses 
\item Sure! A house is a structure designed to provide shelter and living space for people or animals. Houses can vary greatly in size, style, and features depending on their intended use, location, and the preferences of the homeowner. Here are some 
\item Sure! I'd be happy to discuss various aspects of a house. What specifically would you like to know about it? Here are a few topics we could explore: 1. **Types of houses**: Different styles such as modern, traditional,
\item Sure! A house is a structure designed for human habitation that typically includes rooms for living, sleeping, and sometimes eating or entertaining. Houses can vary greatly in size, style, and features depending on their location, the culture of the region
\end{itemize}

With $N=8$, the temperature sampling gives
\begin{itemize}
    \item A house is a structure designed to provide shelter and living space for humans or animals. It can vary greatly in size, style, and location, from simple cottages in rural areas to luxurious mansions in urban centers. Houses typically consist of 
\item A house is a structure designed to provide shelter for people or animals. It can vary greatly in size, style, and location, but generally includes rooms for living, sleeping, eating, and sometimes storage. Houses can be made of various materials 
\item A house is a structure designed for human habitation, typically consisting of one or more rooms and a roof to protect from weather conditions. Houses can vary greatly in size, style, and location, depending on the region, culture, and personal 
\item Sure, I can describe a typical house for you! A house generally consists of several key components: 1. **Structure**: This includes the foundation, walls, and roof. The foundation is what supports the entire structure and ensures it stays stable 
\item Sure, I can provide information about a house! Do you have a specific question or detail in mind? Here are some common topics related to houses: 1. Types of houses (e.g., single-family homes, apartments, townhouses) 
\item A house is a structure designed for human habitation. It typically includes rooms for living, sleeping, eating, and other activities. Houses can vary greatly in size, style, and location depending on the region and culture. They often include features 
\item Sure, I can provide information about a house! Do you have any specific questions or details in mind? For example: 1. What type of house are you referring to (e.g., single-family home, apartment, townhouse)? 2
\item A house is a structure designed to provide shelter for people and their belongings. It can vary greatly in size, style, and location, but typically includes rooms for living, sleeping, eating, and other activities. Houses can be made from various
\end{itemize}

When $N=4$, the \method{STARS} gives
\begin{itemize}
    \item A house is a structure designed for human habitation. It can vary greatly in size, style, and location depending on the region and culture. Houses typically include rooms for living, sleeping, eating, and working, as well as facilities like
 \item Sure! I'd be happy to help describe a house. Could you please provide more details? For example, what type of house is it (modern, traditional, etc.), where is it located, or any other specific features you'd like
 \item A house is a structure or building that serves as a dwelling for people or other living beings. It can be made of various materials such as wood, brick, stone, and more recently, with the advancement of technology, also includes eco-friendly
 \item Sure! Could you please provide more details about the house? What specific information would you like to know or discuss about it? For example, do you want to know about its size, location, features, or history?
\end{itemize}

When $N=8$, the \method{STARS} gives

\begin{itemize}

\item Hello! How can I assist you about a house? Do you want information on houses, house prices, or perhaps you need help with something related to building or decorating a house? Feel free to ask any specific questions you might have, and 
\item Sure! I can describe a house for you. Would you like details about a specific type of house or general information? If you have a particular question in mind, feel free to ask!
\item Sure! A house is a structure where people live and can be made of various materials such as wood, brick, or even bamboo. It can also come in different sizes and styles. If you'd like more details about houses or have any 
\item A house is an object, a structure designed to provide shelter and comfort for individuals and families. Describing it more comprehensively, I can share some information: A house typically consists of walls, floors, a roof, and windows. It 
\item A house is more than just a structure; it's often a home where memories and stories are made. I can certainly help you generate ideas or facts about houses if you're interested in that topic. However, to be more specific, could 
\item Let me describe a house: A house is a structure where people live. It could be a small house or a large mansion. The materials used to build a house can vary widely, from wood, brick, stone, and other natural materials
\item Sure, I can provide information about houses if you give me more details such as the location or type of house you're interested in. For example, are you looking for a specific type of house, or do you have any other questions?
\item Sure. A house is a very good start for a conversation, but I need more context to provide a useful response. Could you clarify if this is what you wanted to ask about? If you have a specific question or need assistance with something
\end{itemize}
}

We now increase $C$ from $1.0$ (as in the previous samples) to $3.0$. The following are the 4-generation results of \model{STARS} with $C=3.0$. With stronger steering, \model{STARS} provides more diverse responses than before, including different starting modes.
\begin{itemize}
    \item Sure, I can describe a house for you. You want me to create a description or do something with the house? Or would you like to provide more details on what kind of house you have in mind? If you give me more context 
    \item Here is some information about a house: Location: A house can be found in many places around the world, and you can choose to describe a specific location if needed. Size: A house can vary in size. I can give you more 
    \item A house is an object which can refer to many things depending on the context. If you're thinking about a house in the context of "house and buildings," then there could be many houses around the world. If you're
    \item For example, let me describe a house: A house has four walls made of brick and mortar, each wall facing different directions. The roof over the house is made of tiles or metal sheets. Houses can vary in size, shape, and
\end{itemize}

\subsubsection{Uniform Distribution Recovery in Dice Rolling}
We run the dice experiment with the following prompt:

\begin{tcolorbox}[colback=pink!4, colframe=pink!90, title=Roll A Die Prompt, sharp corners, boxrule=0.8mm]

Roll a die.

Output ONLY one integer from 1 to 6. 

Do not add any explanation or text. 

Output format: $<number>$
\end{tcolorbox}

For each generation, the output logits are extracted and converted to probabilities using softmax, and the probability P(token = d) for each digit is recorded by indexing the corresponding token ID. The experiment run 10 iterations of 20 rolls each (200 total samples). The resulting Table \ref{tab:roll-a-die} displays the probability distributions from temperature sampling and \model{STARS}. The KL column provides the KL divergence between the sampled distribution and the uniform distribution. Under standard temperature sampling, the model’s probability mass is heavily concentrated on just two outcomes 2 and 3. In contrast, as we increase the \model{STARS} strength from 3 to 7, the probability mass becomes much more spread out across all six faces, although it does not become fully uniform. We believe this is an interesting behavior that merits further investigation.

\begin{table}[!h]
\caption{Probability distributions across dice outcomes (1-6) under temperature sampling and \model{STARS} with varying $C$. KL divergence measures the distance from uniform distribution (lower is better).}
\label{tab:roll-a-die}
\begin{tabular}{l|l|l|l|l|l|l|l}
\toprule 
& \multicolumn{1}{c|}{1} & \multicolumn{1}{c|}{2} & \multicolumn{1}{c|}{3} & \multicolumn{1}{c|}{4} & \multicolumn{1}{c|}{5} & \multicolumn{1}{c|}{6} & \multicolumn{1}{c}{KL $\downarrow$}   \\ \midrule
Sampling   & 0.000056 & 0.405526 & 0.450347 & 0.032215 & 0.111854 & 0.000001 & 0.71 \\
\method{STARS\_${3.0}$} & 0.103587 & 0.452858 & 0.366496 & 0.023672 & 0.046169 & 0.007218 & 0.56 \\
\method{STARS\_${5.0}$} & 0.380046 & 0.245687 & 0.108884 & 0.111879 & 0.103058 & 0.050445 & 0.20 \\
\method{STARS\_${7.0}$} & 0.355143 & 0.130725 & 0.192331 & 0.184511 & 0.103763 & 0.033526 & 0.18 \\
\bottomrule
\end{tabular}
\end{table}

\end{document}